\documentclass[]{fairmeta}

\usepackage{amsmath,amsfonts,bm}

\def\eqref#1{equation~\ref{#1}}

\def\1{\bm{1}}

\DeclareMathAlphabet{\mathsfit}{\encodingdefault}{\sfdefault}{m}{sl}
\SetMathAlphabet{\mathsfit}{bold}{\encodingdefault}{\sfdefault}{bx}{n}

\usepackage{url}
\usepackage{algorithm}
\usepackage[noend]{algpseudocode}
\usepackage{listings}
\usepackage{tabularx,array}
\usepackage{enumitem}
\usepackage{wrapfig}
\usepackage{colortbl}  
\definecolor{findingaccent}{RGB}{46, 112, 108}   
\definecolor{findingbg}{RGB}{246, 250, 249}      

\newtcolorbox{keyfinding}[1]{
    enhanced,
    breakable,
    colback=findingbg,
    colframe=findingaccent,
    boxrule=0.8pt,
    arc=2mm,
    left=4mm,
    right=4mm,
    top=4mm,
    bottom=2.5mm,
    before skip=6pt,
    after skip=6pt,
    attach boxed title to top left={
        xshift=5mm,
        yshift=-2mm
    },
    boxed title style={
        colback=findingaccent,
        colframe=findingaccent,
        arc=1.5mm,
        boxrule=0pt,
        left=2.5mm,
        right=2.5mm,
        top=1mm,
        bottom=1mm,
    },
    coltitle=white,
    fonttitle=\bfseries,
    title={#1},
}

\lstdefinestyle{promptstyle}{
  basicstyle=\ttfamily\scriptsize,
  columns=fullflexible,
  keepspaces=true,
  breaklines=true,
  breakatwhitespace=true,
  showstringspaces=false,
  xleftmargin=6pt,
  xrightmargin=6pt,
  aboveskip=0pt,
  belowskip=0pt
}
\newtcblisting{promptbox}[1][]{
  listing only,
  breakable,
  colback=gray!5,
  colframe=gray!60,
  boxrule=0.5pt,
  arc=2pt,
  left=0pt,
  right=0pt,
  top=6pt,
  bottom=6pt,
  title={#1},
  listing options={style=promptstyle}
}

\newcommand{\basemethod}{\textsc{VHarness-BRSI}}
\newcommand{\method}{\textsc{VHarness-SRSI}}

\title{VHarness-SRSI: Structured Recursive Self-Improvement for Video Understanding Harness}

\author[1,5,*]{Zihao Lin}
\author[2]{Haoyu Dong}
\author[3]{Haobo Xu}
\author[4]{Chi-Chih Chang}
\author[3]{Xuying Ning}
\author[5]{Yinglong Xia}
\author[5]{Shuang Yang}
\author[5]{Zhuokai Zhao}
\author[5]{Xinying Song}
\author[6]{Dongfang Liu}
\author[1]{Lifu Huang}
\author[5]{Yang Song}
\author[5]{Qifan Wang}

\affiliation[1]{University of California, Davis}
\affiliation[2]{Duke University}
\affiliation[3]{University of Illinois at Urbana-Champaign}
\affiliation[4]{Cornell University}
\affiliation[5]{Meta MRS Research}
\affiliation[6]{Purdue University}

\contribution[*]{Work done at Meta}

\abstract{LLM system capabilities depend not only on model parameters but also on the executable \emph{harnesses} that organize information acquisition, tool use, reasoning, and memory. Recursive harness evolution can automate their design, but how it can reliably discover generalizable systems from limited, noisy, and repeatedly reused feedback remains unclear. We study this question in agentic video understanding, where rich multimodal evidence and coupled mechanisms make harness optimization challenging. As a testbed, we first build \basemethod{}, a baseline instantiation of the recursive self-improvement paradigm that evolves video harnesses around a frozen vision--language model. To diagnose it, we develop an analysis that decomposes each validation verdict into true gain, validation fit, champion inflation, and noise, and isolates each term with controlled experiments. Applied to over 4,000 evolved candidates, it shows that \basemethod{} is competitive with hand-engineered systems but reveals three interconnected failure modes: \emph{validation overfitting}, \emph{search-budget dilution}, and the \emph{early-experience trap}. All three share a common root: the same imperfect validation evidence drives both harness revisions and the proposer's search strategy. We therefore propose \method{}, an evidence-governed framework in which every search decision rests on evidence that is transferable, sufficient, and gathered before the search commits. It restricts the proposer to content-free feedback, informs exploration with instrumented warm-up experiments, and develops mechanisms in isolated probe--race branches before factorial composition and joint refinement. Across four video-understanding benchmarks, \method{} improves the equally weighted mean held-out score by $6.2\%$ relative to \basemethod{} and reduces the mean validation--test gap by $37.9\%$. Reliable harness evolution thus depends on how search is organized and informed, not only on how well it optimizes validation performance.
}

\date{\today}

\metadata[Project Page]{\url{https://zihaolinqz.github.io/VHarness-SRSI/}}

\begin{document}

\maketitle

\section{Introduction}
\label{sec:introduction}

An agentic system is shaped not only by its foundation model but by its \emph{harness}: the executable code around the frozen model that decides what the model observes, which tools run, and how intermediate outputs become an answer~\citep{ning2026codeagentharness}. A well-designed harness can substantially improve performance~\citep{lee2026meta}, but designing one is hard, since each mechanism admits many implementations whose effectiveness hinges on other components. Automated agent design and recursive harness evolution replace manual engineering with an LLM proposer that iteratively generates, evaluates, and refines harnesses from accumulated feedback~\citep{hu2025automated,shang2025agentsquare,lee2026meta}. Yet, as harnesses become more complex, a fundamental question arises: \emph{How can recursive evolution reliably discover generalizable designs when experimental evidence is limited, noisy, and repeatedly reused, and the search space contains many interacting mechanisms?}

We study this question through \textit{agentic video understanding}, a particularly demanding setting for harness optimization where a video harness must organize extensive, heterogeneous visual evidence via frame selection, retrieval, perception tools, multi-turn reasoning, reflection, memory, and multi-agent collaboration. These mechanisms are interdependent---retrieving more frames may recover missing evidence but also add distractors---so no mechanism can be judged in isolation. Moreover, evolution must infer useful changes from repeated evaluations on finite video--question pairs, whose rich content invites the proposer to specialize to seen examples. Video thus couples two challenges: learning reliable lessons from complex feedback, and efficiently exploring a combinatorial space of interacting mechanisms~\citep{wang2025videotreeadaptivetreebasedvideo, long2025vistatesttimeselfimprovingvideo, liang2025univauniversalvideoagent}.

To study these challenges, we first build \basemethod{} (Baseline Recursive Self-Improvement), which builds on Meta-Harness~\citep{lee2026meta} and adapts it to the video domain, and which was developed independently of and concurrently with VideoHarness-RSI~\citep{xu2026videoharnessrsi}. In \basemethod{}, a multimodal LLM coding proposer iteratively revises the executable harness around a frozen vision--language model, guided by accumulated multimodal evidence and execution feedback. 
Evolution discovers competitive harnesses: for example, in a Video-MME~\citep{fu2025videommefirstevercomprehensiveevaluation}
run evolving only retrieval, the evolved harness matches the held-out accuracy (68.4\%) of a manually designed Adaptive Keyframe Sampling (AKS) variant~\citep{tang2025adaptive} with $38.7\%$ fewer frames. 
Yet across 4,000+ evolved candidates, validation success does not reliably translate into generalization.
We trace this gap to a single source: the loop learns only from \emph{validation verdicts}---score changes on a finite, repeatedly reused validation set---which mix transferable improvements with instance-specific fit and noise, yet decide both which harness is kept and where the proposer searches next. This evidence misleads evolution in three ways. (1) \emph{Validation Overfitting} concerns what a verdict measures: instance-level multimodal feedback lets the proposer fit observed examples, so verdicts reward non-transferable gains. (2) \emph{Search-Budget Dilution} concerns how much evidence a verdict rests on: joint evolution spreads a limited budget over many mechanism combinations, and coupled edits change many answers at once, so each configuration is evaluated too rarely and noisily to attribute its gains. (3) \emph{Early-Experience Trap} concerns how verdicts are acted on: a single noisy early outcome persists as a judgment in the proposer's memory, so the proposer abandons useful mechanisms prematurely and over-explores early successes.
We identify these failure modes with a diagnostic analysis that decomposes each verdict into the true gain on unseen questions and three error terms---validation fit, champion inflation, and noise---and isolates each with controlled experiments that vary one factor of the loop at a time. Because \basemethod{} shares the propose-and-select logic of prior harness evolution, this analysis characterizes recursive harness evolution in general rather than our baseline alone.

Motivated by these findings, we further propose \method{} (Structured Recursive Self-Improvement), an evidence-governed framework built on one principle: every search decision should rest on evidence that reflects transferable quality, suffices for the decision, and is gathered before search commits. The proposer sees only aggregate scores and \emph{content-free execution statistics}, never the validation videos, questions, or answers, letting it diagnose harness behavior without tailoring revisions to seen data. Evolution then proceeds in stages: a \emph{warm-up} complements free proposals with \emph{controlled experiments} that ablate a mechanism or vary its intensity, measuring its contribution while keeping under-tested mechanisms open. This evidence seeds \emph{mechanism-specific branches}, each evolved in isolation from a shared starting harness under a \emph{probe--race} budget that tries every mechanism before focusing on the most promising. Finally, \emph{factorial composition} tests whether branch improvements transfer and how they interact, before \emph{joint refinement} co-optimizes them. Each stage thus supplies the evidence the next acts on.

We evaluate on four video question answering and temporal grounding benchmarks: Video-MME~\citep{fu2025videommefirstevercomprehensiveevaluation}, LongVideoBench~\citep{wu2024longvideobenchbenchmarklongcontextinterleaved}, OVO-Bench~\citep{niu2025ovo}, and MomentSeeker~\citep{yuan2026momentseeker}.
\method{} yields stronger, better-generalizing harnesses than \basemethod{}, raising the equally weighted mean held-out score by $6.2\%$ relative and cutting the mean validation--test gap by $37.9\%$; ablations validate its design.
These findings suggest that the organization and evidence quality of evolution matter beyond validation optimization.

Our main contributions are:
\begin{itemize}[noitemsep,topsep=0pt,leftmargin=*]
    \item \textbf{Diagnostic analysis.} A verdict-level analysis of recursive harness evolution that decomposes each validation verdict into true gain, validation fit, champion inflation, and noise, and isolates each term with controlled experiments. Applied to over 4,000 candidates of \basemethod{} in agentic video understanding, it shows that evolution discovers competitive harnesses and characterizes three failure modes rooted in shared imperfect validation evidence.

    \item \textbf{Framework.} We propose \method{}, an evidence-governed framework combining content-free proposer feedback, instrumented warm-up experiments, and isolated mechanism development before joint refinement.

    \item \textbf{Evaluation.} On four benchmarks, \method{} improves mean held-out score by $6.2\%$ relative and cuts the mean validation--test gap by $37.9\%$ over \basemethod{}.
\end{itemize}

\section{Related Work}
\label{sec:related_work}

\paragraph{Recursive self-improvement for harnesses.}
Automated optimization around frozen language models has progressed from prompts~\citep{fernando2023promptbreeder,yang2024large,pryzant2023automatic,yuksekgonul2024textgrad} to multi-stage programs~\citep{khattab2023dspy,opsahl2024optimizing}, reflective prompt evolution~\citep{agrawal2026gepa}, and executable workflows and agent architectures~\citep{hu2025automated,zhang2025aflow,zhang2025evoflow,novikov2025alphaevolve,zhang2026darwin}. Recent work evolves complete harnesses from past code, scores, and traces~\citep{lee2026meta}, searches modular design spaces~\citep{shang2025agentsquare,wu2026modularrsimodulargeneralizablerecursive}, regularizes edits against overfitting~\citep{xia2026rrsiregularizedrecursiveselfimprovement}, and studies harness updating~\citep{lin2026harness}, continual adaptation~\citep{liu2026adaptive}, and skill compilation~\citep{wei2026evo}. These methods refine how harnesses are proposed but take the validation signal that selects them as given. We instead make this signal the object of study: we show that it misleads evolution in three systematic ways and derive each component of \method{} from one of them. Rather than regularizing edits, we control what information reaches the proposer; rather than modularizing search for efficiency, we factorize it so that each mechanism accumulates sufficient evidence and factorial composition tests whether gains transfer; and we counter the path dependence of noisy early verdicts, which prior methods leave unaddressed, with instrumented warm-up experiments.

\paragraph{Recursive self-improvement for video harnesses.}
Concurrent work evolves video-agent pipelines toward a target video distribution~\citep{cui2026metavideoagent}, revises harnesses by re-examining training videos and selecting on accuracy and visual cost~\citep{luo2026videorsi}, and self-designs video tools through coupled solving and evolving loops~\citep{ye2026videoresearcher}. Most closely related, VideoHarness-RSI~\citep{xu2026videoharnessrsi} is among the first to study recursive harness self-improvement for long-video understanding and establishes a strong baseline for it. Its proposer edits the code that selects, retrieves, and packs visual evidence around a frozen VLM, keeping a candidate only if it improves development accuracy; the searched harnesses improve held-out accuracy and transfer to other long-video benchmarks. Our \basemethod{} is an independent implementation developed concurrently with VideoHarness-RSI, which studies a similar recursive self-improvement paradigm; we use it as a baseline and testbed rather than as a contribution. Where these works establish that self-improvement can strengthen video systems, we ask when the validation-driven verdicts of such search fail to generalize, diagnose three failure modes across over 4,000 candidates, and propose \method{} to organize search and the proposer's evidence so that discovered harnesses transfer.

\paragraph{Video understanding harnesses.}
Hand-designed video harnesses make long videos tractable through compression~\citep{shen2024longvu}, keyframe selection~\citep{tang2025adaptive}, hierarchical representation~\citep{wang2025videotreeadaptivetreebasedvideo}, and evidence retrieval~\citep{luo2025videoragvisuallyalignedretrievalaugmentedlong}, while agentic systems coordinate retrieval, tools, and memory~\citep{wang2024videoagent,zhang2026deep,yeo2026worldmm,long2025seeinglisteningrememberingreasoning}. Each commits to one design; we treat these mechanisms as a search space in which multimodal feedback invites specialization to validation instances and a mechanism's value hinges on its context.

\section{Investigation of Video Understanding Harness Evolution}
\label{sec:initial_investigation}

\begin{figure*}[t]
    \centering
    \includegraphics[width=0.99\textwidth]{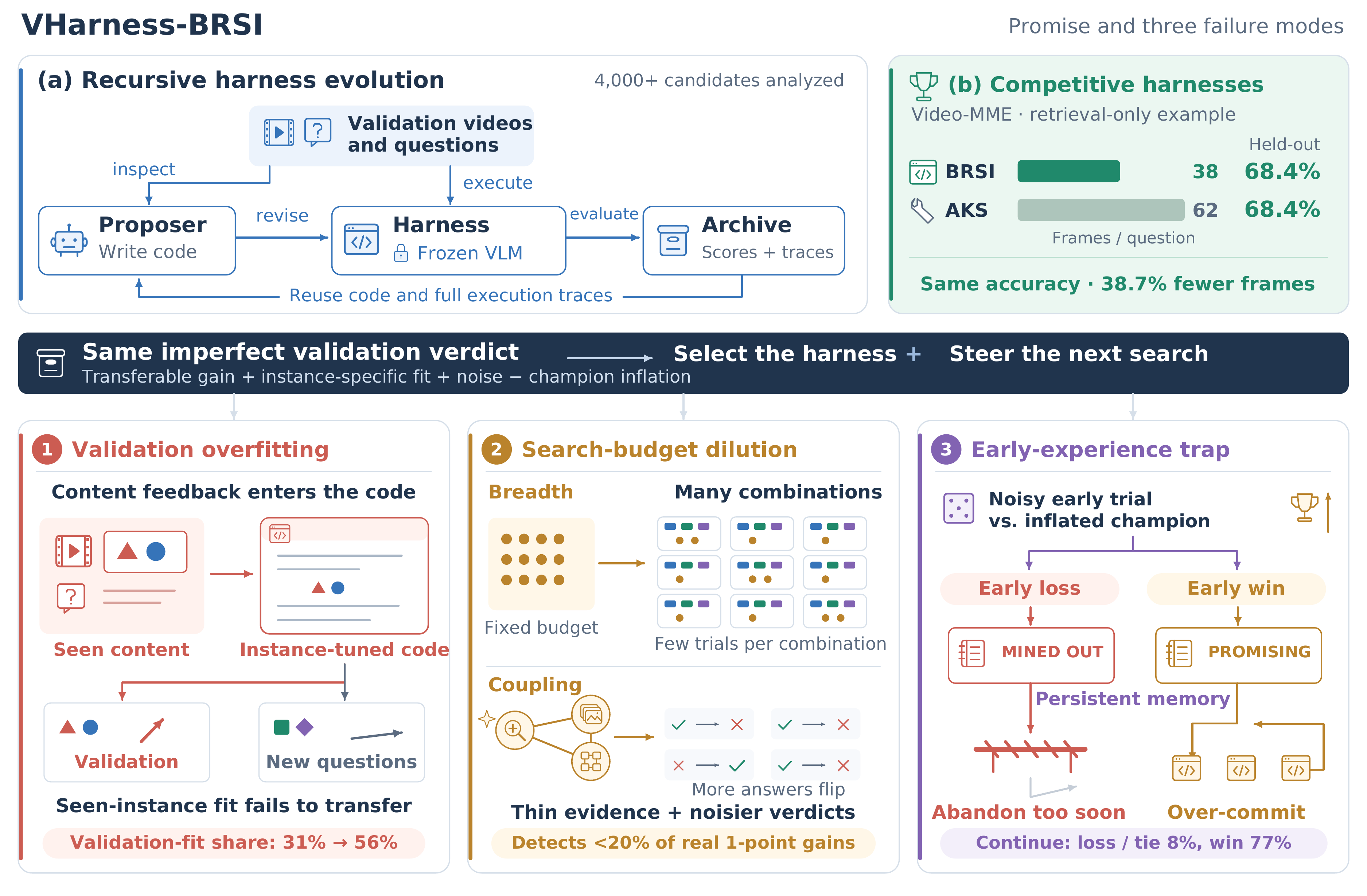}
    \caption{\textbf{Overview of \basemethod{}.} (a)~The evolution loop: a coding proposer inspects the validation videos and questions, revises the harness around a frozen VLM, and archives each candidate's validation scores and execution traces, which later iterations reuse. (b)~In a retrieval-only run on Video-MME, the evolved harness matches the held-out accuracy of the strongest hand-designed AKS variant with $38.7\%$ fewer frames (Section~\ref{sec:metaharness_effectiveness}). Bottom: the same validation verdict both selects the harness and steers the next search; it mixes the true gain with validation fit, noise, and champion inflation, which gives rise to the three failure modes of Section~\ref{sec:meta_harness_failure_modes}.}
    \label{fig:method_overview}
\end{figure*}

Building on prior work in recursive harness self-improvement and hand-designed video understanding harnesses, we first introduce \basemethod{}, a baseline recursive self-improvement framework for agentic video understanding, to understand how harness evolution behaves in this more complex setting. \basemethod{} builds on Meta-Harness~\citep{lee2026meta} and adapts it to the video domain. It is an independent implementation, developed concurrently with VideoHarness-RSI~\citep{xu2026videoharnessrsi}, which studies a similar recursive self-improvement paradigm. We use \basemethod{} as a baseline and testbed rather than claim it as a contribution. As shown in Figure~\ref{fig:method_overview}(a), an agentic coding proposer $P$ evolves executable harnesses around a frozen VLM $M$ and directly accesses the multimodal evidence behind each outcome: the validation videos and questions $\mathcal{I}_{\mathrm{val}}$, prior implementations, and execution traces. This full access follows the design of Meta-Harness, extended to video. The loop is a simple propose--evaluate--accumulate cycle: starting from an archive $\mathcal{D}_0$ of seed harnesses, iteration $t$ proposes candidates $H_{t,j}\sim P(\cdot\mid\mathcal{D}_{t-1},\mathcal{I}_{\mathrm{val}})$, evaluates each on the validation set $\mathcal{S}_{\mathrm{val}}$ to obtain a record $\mathcal{E}_{t,j}$ of scores, frame counts, and traces, and adds $(H_{t,j},\mathcal{E}_{t,j})$ to the archive $\mathcal{D}_t$. 
Unless otherwise specified, $M$ is Qwen3-VL-8B-Instruct~\citep{bai2025qwen3vltechnicalreport} and $P$ is Claude Opus~5~\citep{anthropic2026claudeopus5}. \basemethod{} is specified in Appendix~\ref{app:basemethod}.

\subsection{Can \basemethod{} Evolve Effective Video Harnesses?}
\label{sec:metaharness_effectiveness}
We first examine whether recursive evolution can discover harness designs competitive with expert-engineered alternatives. A fair test needs a strong hand-designed reference for the same mechanism, so we evolve only the \emph{retrieval strategy} and compare with Adaptive Keyframe Sampling (AKS)~\citep{tang2025adaptive}, a state-of-the-art retrieval method; evolved and manual harnesses then differ only in how that mechanism is designed. We run on Video-MME~\citep{fu2025videommefirstevercomprehensiveevaluation}, a widely used benchmark on which AKS was evaluated. We use 450 questions for validation and the remaining 2{,}250 for held-out evaluation.

\begin{figure*}[t]
    \centering
\includegraphics[width=0.99\textwidth]{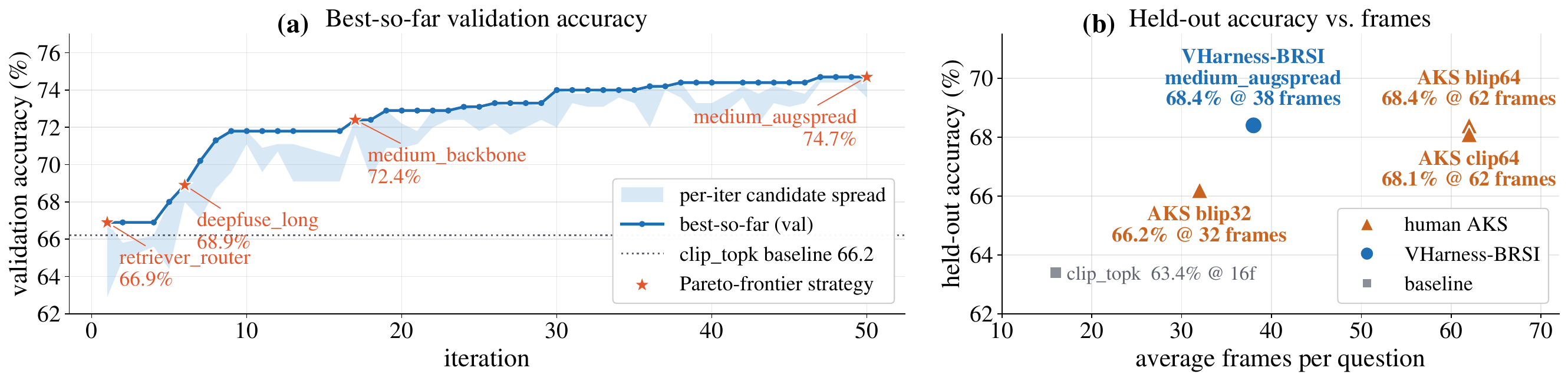}
    \caption{\textbf{Evolving the retrieval strategy with \basemethod{} on Video-MME.} (a)~Best validation accuracy found so far over 50 iterations (line) and the range of the candidates proposed in each iteration (shaded), starting from the \texttt{clip\_topk} seed (dotted). Stars mark the strategies on the Pareto frontier; the red labels are their names, chosen autonomously by \basemethod{}. (b)~Held-out accuracy on 2{,}250 questions never seen during evolution against the mean number of frames per question, for the deployed evolved harness, three variants of the hand-designed AKS method, and the seed.}
    \label{fig:initial_metaharness}
\end{figure*}

\paragraph{\textbf{Finding 1: Recursive evolution discovers harnesses competitive with manual designs.}}
\emph{Evolution steadily improves the harness.} Figure~\ref{fig:initial_metaharness}(a) tracks the best validation accuracy found so far. Starting from the \texttt{clip\_topk} seed at $66.2\%$, \basemethod{} improves it in several distinct steps rather than a single lucky jump: to $66.9\%$ in the first iteration, $68.9\%$ around iteration~6, $72.4\%$ around iteration~17, and finally $74.7\%$ at iteration~50 with \texttt{medium\_augspread}, $8.5$ points above the seed. Validation accuracy thus keeps rising throughout the 50 iterations.
\emph{The evolved harness rivals manual design on held-out data.} Figure~\ref{fig:initial_metaharness}(b) compares the deployed harness with the seed and three AKS variants on the 2{,}250 held-out questions, which evolution never sees. \texttt{medium\_augspread} reaches $68.4\%$ with 38 frames per question: $5.0$ points above its seed, $2.2$ points above AKS-BLIP with 32 frames, and on par with the two strongest AKS variants ($68.4\%$ and $68.1\%$), which need 62 frames. It thus matches the best hand-designed method with $38.7\%$ fewer frames. Evolution also improves other mechanism families and works on other benchmarks. The setting, the seed harnesses (Table~\ref{tab:seed_harnesses}), and these additional experiments are detailed in Appendix~\ref{app:evolving-capability}.
\begin{keyfinding}{Takeaway}
\textbf{Recursive evolution discovers video harnesses that rival expert designs.} Starting from a simple retrieval seed, \basemethod{} raises validation accuracy by $8.5$ points and deploys a harness that matches the strongest AKS variant on held-out data ($68.4\%$) with $38.7\%$ fewer frames.
\end{keyfinding}

\subsection{Diagnosing \basemethod{} in Video Understanding}
\label{sec:meta_harness_failure_modes}

Despite these gains, \basemethod{} shows three systematic failure modes, as shown in Figure~\ref{fig:diagnosis}. Since our baseline shares the core logic of prior harness evolution---propose new candidates from past ones and their scores, then keep those that win on validation~\citep{hu2025automated,zhang2025aflow,lee2026meta}---these failures expose limitations of this general strategy rather than of our baseline. We explain all three through the one signal the evolution loop learns from, the \emph{validation verdict} $\hat{\delta}=V(H)-V(H^{\mathrm{inc}})$: how much a candidate $H$'s validation accuracy exceeds that of the current champion $H^{\mathrm{inc}}$. This verdict decides both what is kept and where the proposer searches next. What we actually want to improve, however, is accuracy on unseen questions, so the quantity that matters is the candidate's \emph{true gain} $\Delta$: how much more accurate it is than the champion on new questions. Evolution works only as far as the verdict $\hat{\delta}$ reflects the true gain $\Delta$. To make the relation between the two explicit, we write each validation score as the sum of three parts, $V(H)=\mu(H)+S(H)+e(H)$: the accuracy $\mu(H)$ the harness would reach on new questions, a fit $S(H)$ that helps only on the validation questions (zero for harnesses written without seeing them), and a zero-mean noise $e(H)$, the chance deviation of its accuracy on this particular set of validation questions from $\mu(H)$. The true gain is then $\Delta=\mu(H)-\mu(H^{\mathrm{inc}})$. Subtracting the two scores would recover $\Delta$ up to the fit and noise terms, if the champion's noise were zero on average; but it is not, because the champion is not a random harness: it is the candidate with the highest validation score so far. Among many candidates of similar quality, the highest score tends to belong to one whose noise happened to be positive, so the champion's noise is positive on average: $\mathbb{E}[e(H^{\mathrm{inc}})]=\omega>0$. We call $\omega$ the \emph{champion inflation}; it is the same as the selection bias known as the optimizer's curse~\citep{smith2006optimizer}, here arising because the champion is selected by its validation score. Every new candidate is compared against this inflated score, so its verdict is shifted down by $\omega$. Separating this bias from the noise difference, i.e., writing $e(H)-e(H^{\mathrm{inc}})=-\omega+\varepsilon$ with the zero-mean remainder $\varepsilon$, splits the verdict into four terms:
\begin{equation}
    \hat{\delta}
    =\underbrace{\mu(H)-\mu(H^{\mathrm{inc}})}_{\Delta:\ \text{true gain}}
    +\underbrace{S(H)-S(H^{\mathrm{inc}})}_{S:\ \text{validation fit}}
    -\underbrace{\omega}_{\text{champion inflation}}
    +\underbrace{\varepsilon}_{\text{noise}} .
    \label{eq:verdict}
\end{equation}
Only the true gain $\Delta$ carries over to held-out data; the other three terms are errors that make the verdict misstate it. The zero-mean noise $\varepsilon$ has a standard deviation $\sigma$ that grows with the number of answers the candidate changes. Evolution therefore goes wrong whenever one of the three error terms outweighs $\Delta$, and each failure mode makes a different one do so: validation overfitting inflates $S$ (Section~\ref{sec:overfitting}), search-budget dilution inflates $\varepsilon$ (Section~\ref{sec:search_space_efficiency}), and the early-experience trap lets $\varepsilon$ and $\omega$ decide irreversible choices (Section~\ref{sec:early_experience_trap}). Appendix~\ref{app:verdict_model} derives this model.

\subsubsection{Validation Overfitting}
\label{sec:overfitting}
We start with the validation fit $S$ in Eq.~(\ref{eq:verdict}). A harness is written by the proposer, so it can fit the validation questions only through what the proposer learns about them. \basemethod{}, following Meta-Harness, lets the proposer inspect the validation videos, questions, and execution traces directly. If this access drives the fit, the gap between validation and held-out accuracy should grow with how much the proposer observes. We therefore keep the evolution loop fixed and vary only what the proposer sees, in three nested arms from least to most: \emph{Blindfolded} sees only outcomes, i.e., whether each question was answered correctly, plus behavioral counters such as frames and tool calls; \emph{Trace} additionally sees question text, answer options, and interaction traces; and \emph{Multimodal} additionally sees video frames, which recovers the full access of \basemethod{}. We run this study on LongVideoBench~\citep{wu2024longvideobenchbenchmarklongcontextinterleaved}, whose questions refer to specific moments in long videos, so seeing a question reveals much about what it needs. Each arm runs three seeds at each of three validation fractions ($33\%$, $50\%$, $75\%$), so the effect does not hinge on one split size.

\begin{figure}[t]
    \centering
    \setlength{\tabcolsep}{0pt}
    \begin{tabular}{@{}ccc@{}}
        {\small\textbf{Validation overfitting}} & {\small\textbf{Search-budget dilution}} & {\small\textbf{Early-experience trap}} \\[1pt]
        \includegraphics[width=0.33\textwidth]{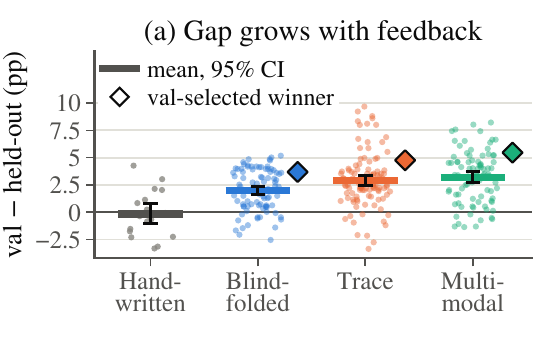} &
        \includegraphics[width=0.33\textwidth]{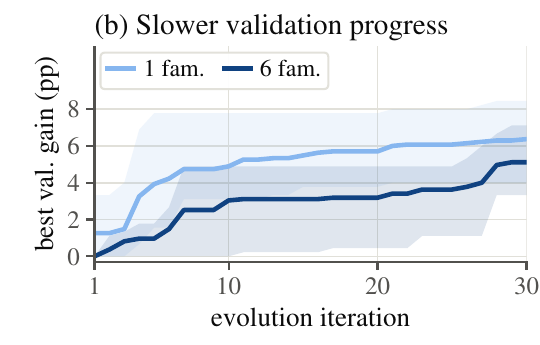} &
        \includegraphics[width=0.33\textwidth]{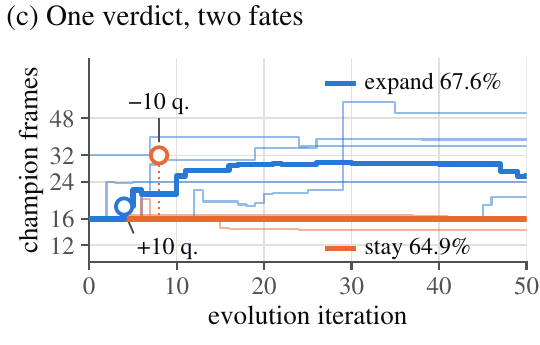} \\
        \includegraphics[width=0.33\textwidth]{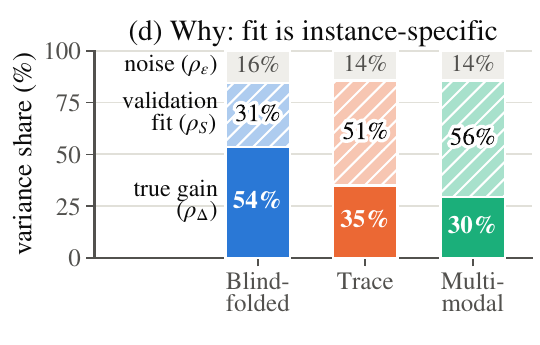} &
        \includegraphics[width=0.33\textwidth]{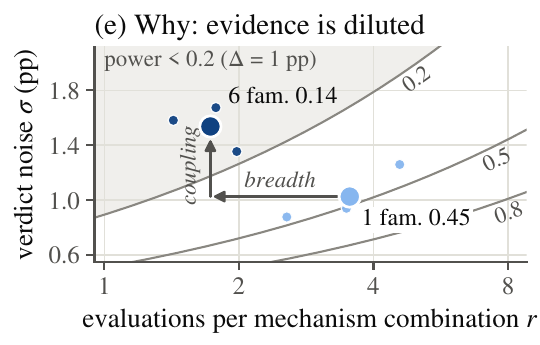} &
        \includegraphics[width=0.33\textwidth]{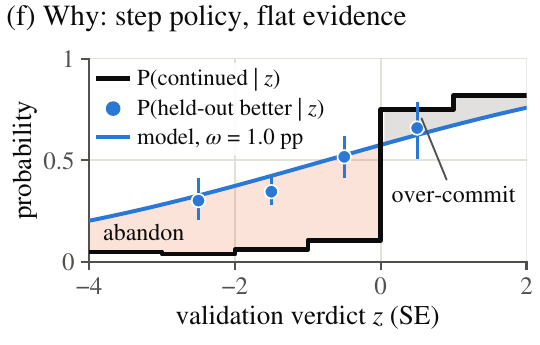} \\
    \end{tabular}
    \caption{\textbf{Three failure modes of \basemethod{}: what happens (top) and why (bottom).} (a)~Validation minus held-out accuracy of hand-written baselines and of harnesses evolved under the three feedback arms on LongVideoBench; each dot is a harness, bars show means with 95\% intervals, and diamonds mark each arm's validation-selected winner. (b)~Best validation gain over the seeds on Video-MME when evolving retrieval only (one family) or the full harness (six families); lines are means and bands span the three runs. (c)~Champion frame count of retrieval runs on Video-MME. Thick lines are the two runs discussed in the text, with their frame-raising probes circled ($+10$ and $-10$ validation questions); the labels give the mean held-out accuracy of runs that expand the frame budget and of runs that stay near 16 frames. (d)~Shares of the spread in validation scores due to true gain ($\rho_\Delta$), validation fit ($\rho_S$), and noise ($\rho_\varepsilon$) in each feedback arm. (e)~Each run placed by its evaluations per mechanism combination $r$ and verdict noise $\sigma$; small and large dots are runs and search-space means, and contours mark equal power to detect a 1-point gain, shaded below $0.2$. (f)~Against the verdict $z$ (in standard errors), the probability that later proposals build on a candidate (black) and that the candidate beats the champion on held-out data (blue dots, with the model curve); shading marks abandoned candidates that were likely better (red) and continued ones that were likely worse (gray).}
    \label{fig:diagnosis}
\end{figure}

\paragraph{\textbf{Finding 2: Richer feedback widens the validation--test gap by letting harnesses fit the validation questions.}}
We call a harness's validation accuracy minus its held-out accuracy its \emph{gap}. As shown in Figure~\ref{fig:diagnosis}(a), the more the proposer sees, the larger the gap: it grows from $2.0$ points (Blindfolded) to $3.2$ points (Multimodal) on average, whereas hand-written baselines, which are written without seeing validation data, show none.
The gap alone, however, does not tell how much of the validation \emph{progress} is real. We therefore look at all evolved harnesses of a run together. By Eq.~(\ref{eq:verdict}), their validation scores differ for three reasons: true gain ($\Delta$), validation fit ($S$), and noise ($\varepsilon$); $\omega$ plays no role here, since all of them face the same champion. Only the true gain also shows up on held-out data, so we can attribute the spread of validation scores to the three sources. The \emph{true-gain share} $\rho_\Delta$ is how much of a validation difference reappears on held-out data: a harness one point higher on validation is, on average, $\rho_\Delta$ points higher on held-out data. The \emph{noise share} $\rho_\varepsilon$ is measured by scoring each harness on two random halves of the validation set, which differ only by noise. The rest is the \emph{validation-fit share} $\rho_S$. As shown in Figure~\ref{fig:diagnosis}(d), $\rho_\Delta$ falls from $54\%$ (Blindfolded) to $35\%$ (Trace) and $30\%$ (Multimodal), while $\rho_\varepsilon$ stays at $14$--$16\%$, so the loss goes entirely to $\rho_S$, which rises from $31\%$ to $56\%$. Richer feedback thus does not make the validation score noisier; it makes it reward fitting the validation questions rather than real improvement. The fit is also visible in the code itself: proposers that see question text quote validation questions in their harnesses more often than chance, mostly in comments that record the cases motivating a design, whereas Blindfolded proposers do not. The protocol, the estimators of the three shares, the supporting theory, and robustness checks are in Appendix~\ref{app:overfitting_analysis}.
\begin{keyfinding}{Takeaway}
\textbf{Hide the validation instances from the proposer.} The more the proposer sees of the validation questions, the more of its progress is fit to those questions rather than real improvement ($56\%$ versus $31\%$ of the validation spread). It should see \emph{how} a harness behaves, not \emph{what} it is evaluated on.
\end{keyfinding}

\subsubsection{Search-Budget Dilution}
\label{sec:search_space_efficiency}
We next turn to the noise $\varepsilon$. Video harnesses combine mechanisms with distinct roles: frame selection decides which moments the model sees~\citep{tang2025adaptive}, retrieval gathers question-relevant evidence~\citep{luo2025videoragvisuallyalignedretrievalaugmentedlong}, tools extend perception~\citep{wang2024videoagent,zhang2026deep}, and memory carries information across long videos~\citep{yeo2026worldmm,long2025seeinglisteningrememberingreasoning}. Letting evolution edit more of these mechanisms enlarges the search space and can, in principle, reach higher accuracy. But a larger space also spreads a fixed evaluation budget over more \emph{mechanism combinations}, i.e., over more distinct choices of which mechanisms a harness uses and how, and a real gain can be told apart from noise only after enough verdicts on it accumulate to average the noise out. Whether the added expressiveness or the thinner evidence dominates is thus an empirical question. Keeping everything else fixed, we compare on Video-MME the smallest space, retrieval only (one mechanism family, as in Section~\ref{sec:metaharness_effectiveness}), with the full harness (six families), at a matched budget of 87 candidates and three seeds each.

\paragraph{\textbf{Finding 3: Larger search spaces leave too little evidence to tell real gains from noise, slowing evolution without improving held-out accuracy.}}
As shown in Figure~\ref{fig:diagnosis}(b), the larger space is not better: after 87 candidates, retrieval alone gains $6.4$ validation points, whereas the full harness gains only $5.1$ and lags for most of the run. Its deployed champions are also no better on held-out data, so the added expressiveness goes unused. To see why, recall that a single verdict is noisy: the loop can trust that a mechanism combination truly helps only after several candidates using it have been evaluated. We therefore measure each run's statistical \emph{power}, the chance that its verdicts detect a real 1-point gain. This is a lenient test, since 1 point is already larger than most improvements the loop accepts. Power depends on two quantities: how many evaluations $r$ each mechanism combination receives, since more evaluations average the noise out, and how noisy each verdict is ($\sigma$), since more noise hides the gain. A larger space can lower power through two channels. \emph{Breadth} spreads the same budget over more mechanism combinations, which lowers $r$. \emph{Coupling} lets mechanisms feed into one another: a change to retrieval, for example, alters what every downstream stage receives, so even a local edit flips the correctness of more answers, and the more answers an edit flips, the noisier its verdict, which raises $\sigma$. As shown in Figure~\ref{fig:diagnosis}(e), where each run is placed by its $r$ and $\sigma$ and the curves mark equal power, both channels are at work: from one family to six, $r$ falls from $3.5$ to $1.7$, moving the runs left, and $\sigma$ rises from $1.0$ to $1.5$ points, moving them up. Together they push the full harness into the shaded region of power below $0.2$: its search misses more than four in five real 1-point improvements. Coupling is also visible in the edits themselves: even edits that touch a single mechanism family flip $47.6$ validation answers on average in the full harness, against $19.8$ with retrieval alone, and edits that flip more answers are more often harmful. Intermediate spaces with two and three families, the power analysis, and edit footprints are in Appendix~\ref{app:search_space_analysis}.
\begin{keyfinding}{Takeaway}
\textbf{Develop mechanisms separately before combining them.} Searching all mechanisms jointly leaves each verdict too noisy to detect most real gains (power below $0.2$). Concentrating evaluations on one mechanism at a time, with local edits, yields verdicts that can be trusted before interactions are optimized.
\end{keyfinding}

\subsubsection{The Early-Experience Trap}
\label{sec:early_experience_trap}
The first two failure modes concern whether a verdict is right; this one concerns what the evolution loop does with it. In \basemethod{}, a verdict decides not only which candidate is kept but also which \emph{direction} the proposer pursues next, where a direction is a kind of change, such as raising the frame budget or adding re-ranking. Early verdicts therefore shape the rest of a run. For each verdict, we ask two questions. How did the loop react: do later proposals build on the candidate (\emph{continuation})? And was the verdict right: does the candidate beat the champion on held-out data, which the proposer never sees? We analyze 14 Video-MME runs; 12 of them evolve retrieval and form the frame-budget contrast below.

\paragraph{\textbf{Finding 4: A single noisy early verdict can fix a run's course, as weak verdicts drive irreversible decisions.}}
Figure~\ref{fig:diagnosis}(c) traces each run's champion frame count: the same kind of probe can meet opposite verdicts, and the first verdict decides the rest of the run. In two runs of the same setting (thick lines), a probe raising the frame budget (circled) wins by ten validation questions in one run and loses by ten in the other. The first run then keeps raising the frame budget; the second writes into its memory that the frame count is ``MINED OUT'', i.e., not worth exploring further, and its champion never exceeds 16 frames. This single verdict matters: across 12 runs, champions that expand reach $67.6\%$ held-out accuracy, versus $64.9\%$ for those staying near 16 frames. Eq.~(\ref{eq:verdict}) explains why such a verdict misleads: a direction's first verdict is $\Delta-\omega+\varepsilon$, and both error terms are large. The noise $\varepsilon$ dominates early verdicts: in the first ten iterations, only $8\%$ of their variation reflects true differences, and the champion inflation starts every new direction about $\omega\approx1$ point behind. A loop that weighed such weak evidence properly would treat verdicts as graded: the better a verdict, the more likely the direction is continued, without a sharp jump at zero. Figure~\ref{fig:diagnosis}(f) shows that \basemethod{} does not. Against the verdict $z$, measured in standard errors, it plots how often later proposals build on a candidate (black) and how often the candidate truly beats the champion on held-out data (blue). The loop acts as a step function, continuing about $77\%$ of winning candidates but only $8\%$ of the rest, whereas the held-out win rate rises gradually and already reaches one half for slightly negative verdicts. In fact, among dropped candidates with held-out evaluations, $37\%$ beat the champion on held-out data; the shaded gap marks these abandoned candidates. Because a rejected direction is rarely retried, an unlucky first verdict is never corrected. A simple model of this policy predicts that the frame-budget direction is lost in $54\%$ of runs, consistent with the 6 of 12 runs in Figure~\ref{fig:diagnosis}(c) that never expand. The verdict model, the persistence analysis, a replication on LongVideoBench, and case studies are in Appendix~\ref{app:early_experience_trap}.
\begin{keyfinding}{Takeaway}
\textbf{Do not let one noisy verdict close a direction.} Early verdicts are mostly noise, yet the loop treats them as final: one unlucky outcome can rule out a useful mechanism for the rest of the run ($64.9\%$ versus $67.6\%$ held-out for the frame budget). Early evidence should come from controlled comparisons collected regardless of earlier outcomes.
\end{keyfinding}

\subsection{Implications for Harness Evolution}
\label{sec:implications}

\paragraph{One root, three symptoms.}
The three failure modes share one root: the loop learns from validation verdicts, and the same imperfect verdicts steer both which harness is kept and where the proposer searches next. Each failure mode lets a different term of Eq.~(\ref{eq:verdict}) overwhelm the true gain $\Delta$: the validation fit $S$ under content-level feedback, the noise $\varepsilon$ in a large and coupled search space, and the champion inflation $\omega$ together with $\varepsilon$ in early, irreversible decisions. The three also reinforce one another: a misleading early win narrows the search, a broad space leaves little budget to revisit it, and each further verdict rests on too few evaluations to correct the course.

\paragraph{Simple remedies are insufficient.}
Standard evolutionary levers act on where the next verdict is taken, not on what it measures. Changing how parents are selected reshapes the search as intended: on Video-MME, the share of candidates bred from the champion falls from $47\%$ under greedy selection to $30\%$ under clade-UCB and $8\%$ under random parents. Yet the deployed harnesses of the three rules differ by at most $0.5$ held-out points, and no statistic of the search shape predicts held-out gain across 18 runs. Starting from a stronger harness, AKS, speeds up early validation progress, but the final harness ($68.2\%$ held-out) lies within the range of those evolved from simple seeds. In both cases, every verdict still compares a single candidate with a selected champion on the same validation questions, so $S$, $\omega$, and $\varepsilon$ are unchanged (Appendix~\ref{sec:simple_remedies}).

\paragraph{Design principles.}
The three takeaways instead act on the evidence itself, and together they form the design principles of \method{} (Section~\ref{sec:method}): every search decision should rest on evidence that is \emph{transferable}, \emph{sufficient}, and \emph{gathered before search commits}. Against validation fit, the proposer sees only content-free feedback; against dilution, mechanisms are developed in isolated branches that concentrate evaluations and keep edits local, and only then composed; and against the early-experience trap, an instrumented warm-up measures every mechanism with controlled comparisons before any direction is abandoned.

\section{\method{}}
\label{sec:method}
Section~\ref{sec:meta_harness_failure_modes} traced all three failure modes to the validation verdict that \basemethod{} learns from: rich feedback lets it reward fitting the validation questions ($S$), a large search space leaves it too noisy to separate real gains ($\varepsilon$), and noisy early verdicts close directions for good ($\varepsilon$ and $\omega$). \method{} therefore keeps the propose--evaluate--accumulate loop of \basemethod{} and changes only the evidence behind its decisions (Figure~\ref{fig:srsi_overview}). Each component implements one takeaway: \emph{information isolation} limits what the proposer can fit (Section~\ref{sec:method_setup}), and \emph{three-stage evolution} concentrates evidence on one mechanism at a time (Section~\ref{sec:three_stage_evolution}), beginning with an \emph{instrumented warm-up} that keeps early directions open (Section~\ref{sec:method_warm}). The full algorithm (Algorithm~\ref{alg:method}) and further details are in Appendix~\ref{app:method_details}.

\begin{figure*}[t]
    \centering
    \includegraphics[width=0.99\textwidth]{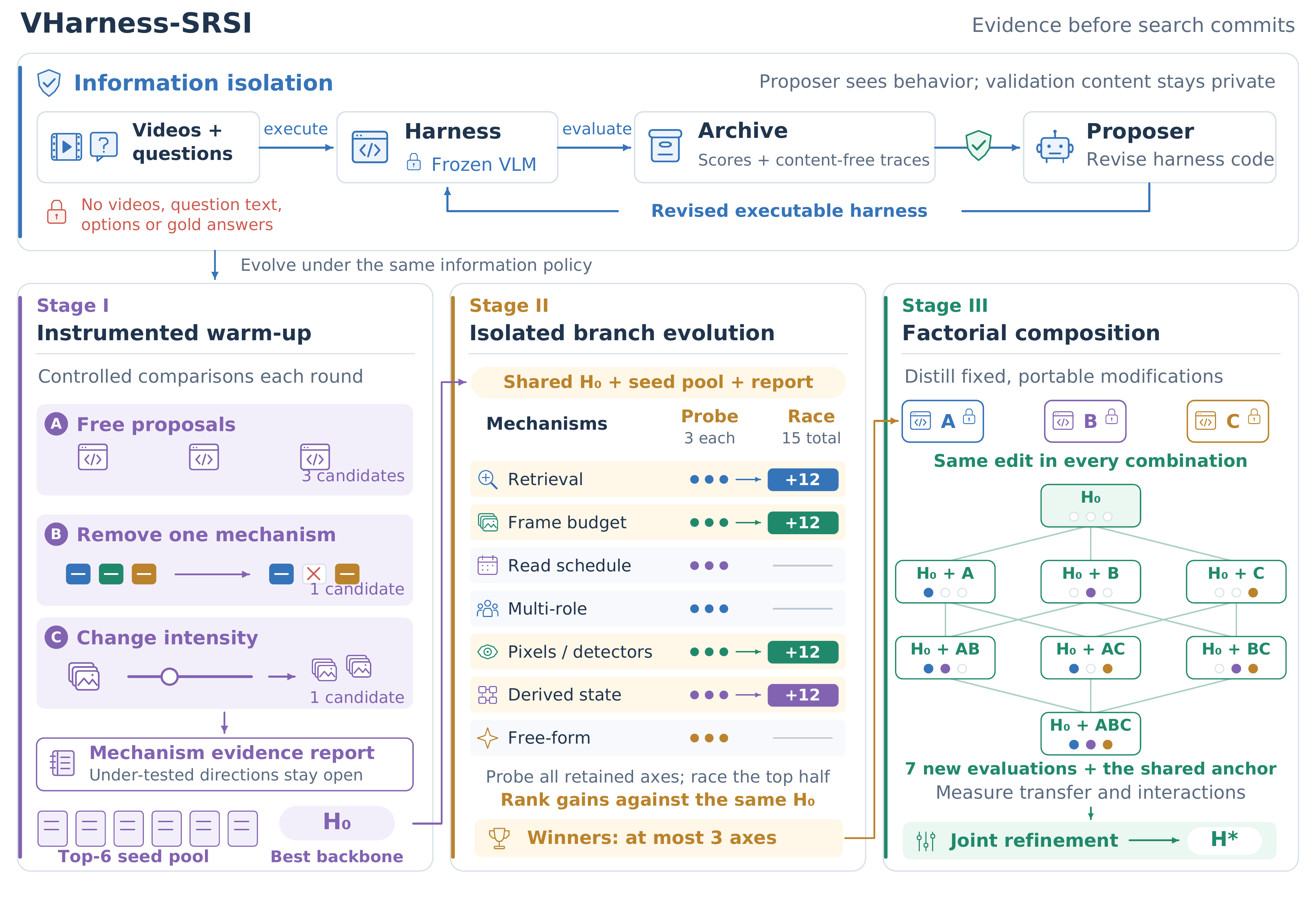}
    \caption{\textbf{Overview of \method{}.} Top: under information isolation, the harness executes on the validation videos and questions, but the proposer receives only scores and content-free traces, never the videos, question text, options, or answers. Bottom: evolution proceeds in three stages under this policy. Stage~I (instrumented warm-up) adds a removal and an intensity change to the free proposals of each round, summarizes the evidence in a per-mechanism report, and selects a seed pool and a backbone $H_0$. Stage~II (isolated branch evolution) evolves each mechanism axis in its own branch from $H_0$; every axis is probed with three candidates, the top half race on to fifteen, and at most three axes are kept. Stage~III (factorial composition) distills each kept axis into a fixed modification, evaluates all combinations against $H_0$ to measure transfer and interactions, and then jointly refines them into the final harness $H^\star$.}
    \label{fig:srsi_overview}
\end{figure*}

\subsection{Information Isolation}
\label{sec:method_setup}

Finding~2 demonstrates that the validation fit grows with what the proposer observes about the validation instances, and that the Blindfolded arm kept the largest transferable share of its progress ($54\%$) while deploying harnesses at least as accurate on held-out data as the richer arms. Since a harness can fit the validation questions only through what the proposer learns about them, we close this channel at its source instead of penalizing edits afterwards, when the information has already been used. In \method{}, the proposer never accesses the validation videos, questions, options, or answers; only the harness it writes does, at execution time. The proposer receives each candidate's validation accuracy and a \emph{content-free trace} $\mathcal{T}(H)$: per question, its correctness under an opaque identifier plus behavioral counters such as model and tool calls, frames, retrieval, and memory operations. This is exactly the Blindfolded feedback, so isolation lowers the fit without starving the search. A deny list on all proposer tools enforces it; details are in Appendix~\ref{app:method_isolation}.

\subsection{Three-Stage Evolution}
\label{sec:three_stage_evolution}

Finding~3 shows that searching the full harness jointly loses power in two ways: \emph{breadth} spreads the budget over many mechanism combinations, and \emph{coupling} lets each edit change many answers. Restricting the search to a single mechanism would avoid both, but forfeits the expressiveness of the full space. We instead \emph{order} the search: first collect controlled evidence about every mechanism (Stage~I), then improve each mechanism separately, where evaluations concentrate and edits stay local (Stage~II), and only then optimize their interactions among a few mechanisms whose effects are already measured (Stage~III). The search thus keeps the full space while each decision gets the evidence of a small one.

\subsubsection{Stage I: Instrumented Warm-up}
\label{sec:method_warm_stage}
\label{sec:method_warm}

Finding~4 reveals that \basemethod{} gathers evidence only where earlier verdicts were favorable, so one unlucky verdict can close a beneficial direction for the rest of a run. The warm-up runs the ordinary loop but reserves part of each round for \emph{controlled interventions} that are collected regardless of earlier verdicts. Besides three free proposals, each round includes one \emph{removal}, which deletes one of the seven mechanism families into which \method{} groups the harness primitives (a finer grouping than the families chartered for \basemethod{}; Appendix~\ref{app:method_warmup}), and one \emph{intensity} change, which roughly doubles or halves how much one family is used, such as the retrieval budget, the number of frames a detector inspects, or the number of reflection passes. For each warm-up candidate $H$ with parent $p(H)$, we record its verdict and its change in mechanism usage,
\begin{equation}
    \mathbf{w}(H)=\bigl(\hat{\delta}(H,p(H)),\,
    \mathbf{u}(H)-\mathbf{u}(p(H))\bigr),
    \label{eq:method_warm_evidence}
\end{equation}
where $\mathbf{u}(H)$ summarizes runtime mechanism usage from $\mathcal{T}(H)$; slot requests are audited from this usage change rather than trusted. Evidence on each mechanism thus accumulates over many comparisons instead of resting on its first verdict, and because all removals are judged the same way, the champion's inflation shifts them alike and does not distort how mechanisms compare. The warm-up summarizes this evidence in a per-mechanism \emph{warm-up report}; mechanisms without enough evidence are flagged as under-explored rather than discarded. When accuracy plateaus, the six best warm-up harnesses form a seed pool $\mathcal{B}$, and the best of them becomes the \emph{backbone} $H_0$ shared by all later stages, so later stages draw on several strong lineages rather than a single, possibly lucky, champion (ablation in Section~\ref{sec:exp:warmablation}). The mechanism families, slot audit, stopping rule, and warm-up report are detailed in Appendix~\ref{app:method_warmup}.

\subsubsection{Stage II: Isolated Branch Evolution}
\label{sec:method_branches}

To remove coupling, Stage~II splits the harness into six mechanism axes---\emph{retrieval}, \emph{frame budget}, \emph{read schedule}, \emph{multi-role}, \emph{pixels and detectors}, and \emph{derived state}---plus a free-form axis that guards against a misspecified split, giving the axis set $\mathcal{A}=\{\alpha_1,\ldots,\alpha_6,\alpha_{\mathrm{free}}\}$. Each axis $\alpha\in\mathcal{A}$ evolves from $H_0$ in its own isolated branch, and a fixed template states which parts of the harness the branch may edit and a test that every edit must pass (e.g., only frame-budget edits may change the number of frames). Edits therefore stay local, and a branch's gains are attributable to its mechanism. Templates are fixed in advance, so the choice of axes cannot itself fit the validation set; axes defined after the warm-up generalize worse (Section~\ref{sec:exp:axesablation}). From the warm-up report, the proposer writes each axis a \emph{charter} $c_{\alpha}$: the mechanism to try, a first edit, and a criterion that would falsify it.

To counter breadth, a \emph{probe--race} schedule concentrates evaluations without leaving any axis untried: every axis is probed with three candidates, the top half by best probe verdict against $H_0$ race on to fifteen, and at most three raced axes, those with a clearly positive mean verdict first, proceed to composition. Since all axes share the backbone $H_0$, the champion's inflation shifts them alike and leaves their ranking intact. More details are given in Appendix~\ref{app:method_branch}.

\subsubsection{Stage III: Factorial Composition and Joint Refinement}
\label{sec:method_composition}

Gains found in separate branches need not add up. Greedy merging cannot tell whether they do. The proposer therefore distills each selected axis $\alpha\in\mathcal{A}_{\mathrm{comp}}$ into a fixed, portable modification $d_{\alpha}$, the smallest edit of $H_0$ carrying the mechanism. We evaluate every combination as a factorial design:
\begin{equation}
    H_{\mathcal{U}}=\operatorname{Apply}\bigl(H_0,\{d_{\alpha}:\alpha\in\mathcal{U}\}\bigr),
    \quad \mathcal{U}\subseteq\mathcal{A}_{\mathrm{comp}},
    \quad H_{\emptyset}=H_0.
    \label{eq:method_composition}
\end{equation}
With three axes this costs seven evaluations, yet each modification's effect is averaged over four comparisons rather than resting on one, and the same comparisons measure interactions. Joint refinement then starts from all combinations and may modify the selected mechanisms together: coupling returns, but only among at most three mechanisms whose effects are already measured. The final harness $H^{\star}$ is the validation-best one after refinement; this last selection on validation reintroduces some fit and champion inflation, a source of the residual gaps in Section~\ref{sec:main_results}. The composition lattice, interaction estimates, and refinement are detailed in Appendix~\ref{app:method_compose}.

\section{Experiments}
\label{sec:experiments}

We evaluate \method{} against fixed hand-written harnesses and \basemethod{} on four video-understanding benchmarks (Section~\ref{sec:main_results}), and ask whether its gains come from a larger inference budget (Section~\ref{sec:exp:budget}) and which components drive them (Section~\ref{sec:exp:ablations}). Appendix~\ref{app:unseen_videos} further verifies that the gains hold on videos never seen during evolution, and Appendix~\ref{app:method_diagnostics} re-measures the three failure modes of Section~\ref{sec:meta_harness_failure_modes} on the runs of \method{}.

\subsection{Experimental Setup}
\label{sec:exp:setup}

\paragraph{Benchmarks and splits.}
We use four benchmarks covering general, long-form, and streaming video question answering and temporal grounding: Video-MME~\citep{fu2025videommefirstevercomprehensiveevaluation}, LongVideoBench~\citep{wu2024longvideobenchbenchmarklongcontextinterleaved}, and OVO-Bench~\citep{niu2025ovo}, scored by accuracy, and MomentSeeker~\citep{yuan2026momentseeker}, scored by mIoU. Each benchmark is randomly split once into validation and held-out test sets shared by all methods and seeds (Table~\ref{tab:dataset_splits} in Appendix~\ref{sec:experimental_datasets}). Evolution and selection see only the validation set, and held-out results never reach the proposer. The split unit is the question, except on MomentSeeker, where whole source videos are assigned to one side; Appendix~\ref{app:unseen_videos} checks that our conclusions do not depend on videos shared across the split. All benchmarks are used video-only: subtitles and metadata are withheld from harness code.

\paragraph{Models.}
The executor is a frozen Qwen3-VL-8B-Instruct~\citep{bai2025qwen3vltechnicalreport} with greedy decoding; it is the only model that sees video or answers questions. The proposer is Claude Opus~5~\citep{anthropic2026claudeopus5}, which writes harness code but never answers questions or runs the benchmark. Harnesses may call CLIP retrieval~\citep{radford2021learning} and Grounding DINO detection~\citep{liu2024grounding} under fixed per-question limits on frames, executor calls, and tool calls (Table~\ref{tab:inference_budgets}).

\paragraph{Baselines.}
The six hand-written harnesses of Table~\ref{tab:seed_harnesses} (Appendix~\ref{sec:experimental_baselines}) serve as fixed baselines and seed every evolution run. They cover the main mechanism families of video harnesses: uniform sampling, query-conditioned retrieval, self-reflection, tool-gated selection, multi-role debate, and a working memory. The evolved baseline is \basemethod{} (Section~\ref{sec:initial_investigation}), run for 50 iterations of three candidates from the same seeds, with the same executor, proposer, and interface as \method{}.

\paragraph{Protocol.}
Each framework is run with three seeds per benchmark. Each run deploys the harness with the highest validation score, and we report its held-out score as the mean and sample standard deviation over seeds. A harness's cost is its mean number of frames shown to the executor per question, summed over calls. Further details on models, budgets, and statistical tests are in Appendix~\ref{sec:experimental_details}.

\subsection{Main Results}
\label{sec:main_results}
\begin{table*}[t]
    \centering
    \caption{
    \textbf{Main results.} Evolved harnesses report mean $\pm$ sample standard deviation over three seeds; fixed baselines are deterministic.
    }
    \label{tab:main_results}
    \begingroup
    \small
    \setlength{\tabcolsep}{3pt}
    \renewcommand{\arraystretch}{1.0}
    \begin{tabular*}{\textwidth}{@{\extracolsep{\fill}}l*{8}{c}@{}}
        \toprule
        \multirow{2}{*}{\textbf{Method}}
        & \multicolumn{2}{c}{\textbf{Video-MME}}
        & \multicolumn{2}{c}{\textbf{LVB}}
        & \multicolumn{2}{c}{\textbf{OVO-Bench}}
        & \multicolumn{2}{c}{\textbf{MomentSeeker}} \\
        \cmidrule(lr){2-3}
        \cmidrule(lr){4-5}
        \cmidrule(lr){6-7}
        \cmidrule(lr){8-9}
        & Val & Test & Val & Test & Val & Test & Val & Test \\
        \midrule

        \rowcolor{gray!8}
        \multicolumn{9}{l}{\textit{\textbf{Fixed Baselines}}} \\
        uniform
        & $60.44$ & $61.64$
        & $58.97$ & $59.15$
        & $52.76$ & $52.20$
        & $11.94$ & $11.43$ \\

        clip\_topk
        & $66.21$ & $63.40$
        & $62.33$ & $62.07$
        & $51.94$ & $53.32$
        & $16.58$ & $17.98$ \\

        tool\_survey
        & $65.78$ & $62.27$
        & $59.42$ & $60.49$
        & $50.51$ & $51.28$
        & $16.72$ & $17.66$ \\

        reflect\_loop
        & $60.44$ & $60.04$
        & $62.11$ & $60.61$
        & $51.33$ & $55.36$
        & $16.23$ & $17.66$ \\

        debate\_pair
        & $63.78$ & $61.42$
        & $57.40$ & $55.56$
        & $44.58$ & $45.86$
        & $16.42$ & $17.92$ \\

        memo\_multiturn
        & $63.11$ & $58.98$
        & $58.52$ & $59.93$
        & $55.42$ & $54.65$
        & $15.91$ & $16.00$ \\

        \midrule
        \rowcolor{gray!8}
        \multicolumn{9}{l}{\textit{\textbf{Evolved Harnesses}}} \\

        \basemethod~
        & $72.2_{\pm 1.8}$ & $66.5_{\pm 1.2}$
        & $70.2_{\pm 0.3}$ & $66.6_{\pm 1.0}$
        & $72.9_{\pm 2.1}$ & $67.2_{\pm 1.7}$
        & $23.8_{\pm 0.7}$ & $19.0_{\pm 1.8}$ \\

        \textbf{\method}~
        & $75.0_{\pm 1.9}$ & $70.1_{\pm 1.0}$
        & $71.1_{\pm 2.0}$ & $67.3_{\pm 1.0}$
        & $73.8_{\pm 2.4}$ & $72.5_{\pm 3.1}$
        & $25.3_{\pm 0.3}$ & $23.0_{\pm 0.7}$ \\

        \bottomrule
    \end{tabular*}
    \endgroup
\end{table*}

\paragraph{Evolution outperforms hand-written harnesses.}
Both evolved frameworks beat every fixed baseline on held-out data on all four benchmarks (Table~\ref{tab:main_results}). Averaged over benchmarks, they exceed the strongest baseline on each benchmark by $5.12$ (\basemethod{}) and $8.52$ points (\method{}). No single fixed design is best everywhere: \texttt{clip\_topk} leads on Video-MME, LongVideoBench, and MomentSeeker, and \texttt{reflect\_loop} on OVO-Bench. Nor does adding mechanisms by hand reliably help: \texttt{debate\_pair} spends five times the frames of \texttt{clip\_topk} on Video-MME (Table~\ref{tab:seed_harnesses}) but is less accurate on held-out data on every benchmark. Which mechanisms help, and how to combine them, is benchmark-dependent, which is exactly what evolution searches for.

\paragraph{\method{} finds stronger and better-generalizing harnesses.}
\method{} improves on \basemethod{} on all four benchmarks, raising the mean held-out score from $54.83$ to $58.23$, a $6.2\%$ relative gain. The gains are largest on OVO-Bench ($+5.3$), MomentSeeker ($+4.0$), and Video-MME ($+3.6$), and smallest on LongVideoBench ($+0.7$). At the same time, \method{} lowers the mean validation--test gap from $4.95$ to $3.08$ points ($-37.9\%$), with the largest reduction on OVO-Bench ($5.7\to1.3$) and essentially no change on LongVideoBench ($3.6\to3.8$).

\paragraph{The gains come mostly from better transfer.}
On every benchmark except LongVideoBench, \method{} gains more on held-out data than on validation. The clearest case is OVO-Bench, where held-out accuracy rises from $67.2$ to $72.5$ while validation accuracy rises only from $72.9$ to $73.8$; on MomentSeeker, the corresponding gains are $+4.0$ and $+1.5$ points. \method{} thus selects harnesses that transfer better rather than fitting validation more tightly.

\paragraph{Overfitting is reduced, not eliminated.}
Both evolved frameworks still score higher on validation than on held-out data on every benchmark. Information isolation and structured search mitigate overfitting (Section~\ref{sec:method}), but any procedure that repeatedly selects on the same validation set can still favor harnesses specialized to it.

\subsection{Inference Cost and Budget-Matched Comparison}
\label{sec:exp:budget}

Evolution may raise accuracy simply by showing the executor more frames. We therefore record three per-question costs of each deployed harness on its held-out traces: frames passed to the executor, executor (VLM) calls, and tool calls, averaged over the three seeds.

\begin{table}[t]
\centering
\small
\caption{\textbf{Cost of the deployed harnesses.} Held-out score and per-question cost, averaged over three seeds.}
\label{tab:cost_deployed}
\begin{tabular}{@{}llcccc@{}}
\toprule
Benchmark & Method & Test & Frames/q & VLM calls/q & Tool calls/q \\
\midrule
\multirow{2}{*}{Video-MME} & \method{} & 70.1 & 106.1 & 2.44 & 19.3 \\
 & \basemethod{} & 66.5 & 55.1 & 2.43 & 1.6 \\
\midrule
\multirow{2}{*}{LVB} & \method{} & 67.3 & 64.8 & 2.85 & 17.5 \\
 & \basemethod{} & 66.6 & 69.2 & 2.77 & 5.0 \\
\midrule
\multirow{2}{*}{OVO-Bench} & \method{} & 72.5 & 47.8 & 4.91 & 2.6 \\
 & \basemethod{} & 67.2 & 27.3 & 3.57 & 1.5 \\
\midrule
\multirow{2}{*}{MomentSeeker} & \method{} & 23.0 & 69.8 & 4.38 & 11.1 \\
 & \basemethod{} & 19.0 & 60.3 & 2.52 & 14.7 \\
\bottomrule
\end{tabular}
\end{table}

\paragraph{Cost of the deployed harnesses.}
Table~\ref{tab:cost_deployed} shows that the deployed \method{} harnesses spend more frames than \basemethod{} on Video-MME, OVO-Bench, and MomentSeeker, e.g., $106.1$ vs.\ $55.1$ on Video-MME, and use many more tool calls on Video-MME and LongVideoBench, where they retrieve and re-inspect evidence before answering. Part of the gain in Table~\ref{tab:main_results} therefore coincides with a larger budget. The relation is not uniform, however: on LongVideoBench, \method{} is more accurate with fewer frames ($64.8$ vs.\ $69.2$), and on MomentSeeker with fewer tool calls.

\paragraph{Budget-matched selection.}
To separate a better harness from a larger budget, we cap each \method{} run at the budget of \basemethod{}, defined per benchmark as the mean frames per question of its three deployed harnesses. Within each \method{} run, we keep the candidates whose validation traces stay within this budget and deploy the one with the highest validation accuracy, breaking ties by fewer frames. Since selection uses no held-out data, \method{} could have deployed this harness under the budget.

\begin{table}[t]
\centering
\small
\setlength{\tabcolsep}{3.5pt}
\caption{\textbf{Budget-matched comparison.} \method{} restricted to the frame budget of the deployed \basemethod{} harness and selected by validation accuracy, averaged over three seeds. $\Delta$: held-out gain of \method{}.}
\label{tab:cost_matched}
\begin{tabular}{@{}llccccccc@{}}
\toprule
& & \multicolumn{3}{c}{\method{} (budget-matched)} & \multicolumn{3}{c}{\basemethod{} (deployed)} & \\
\cmidrule(lr){3-5}\cmidrule(lr){6-8}
Benchmark & Budget & Test & Frames/q & Calls/q & Test & Frames/q & Calls/q & $\Delta$ \\
\midrule
Video-MME & $\le$55.1 & 68.93 & 52.5 & 2.04 & 66.50 & 55.1 & 2.43 & $+$2.43 \\
LVB & $\le$69.2 & 67.19 & 57.8 & 2.86 & 66.60 & 69.2 & 2.77 & $+$0.59 \\
OVO-Bench & $\le$27.3 & 68.42 & 25.8 & 3.64 & 67.20 & 27.3 & 3.57 & $+$1.22 \\
MomentSeeker & $\le$60.3 & 21.41 & 56.0 & 2.48 & 19.00 & 60.3 & 2.52 & $+$2.41 \\
\bottomrule
\end{tabular}
\end{table}

\paragraph{Results.}
As shown in Table~\ref{tab:cost_matched}, the budget-matched \method{} harnesses remain ahead of \basemethod{} on every benchmark while using fewer frames and at most $0.09$ more executor calls per question. Averaged over benchmarks, the matched gain ($1.66$ points) is about half of the deployed one ($3.40$ points). On the cost--accuracy plane (Figure~\ref{fig:cost_accuracy}), the budget-matched \method{} harness also lies above every fixed baseline. Thus \method{} spends a larger budget when it pays off on validation, which explains part of its deployed gain; the remainder is not a budget effect, since at the same frame budget the evidence of \method{} still selects more accurate harnesses.

\begin{figure}[t]
\centering
\includegraphics[width=\linewidth]{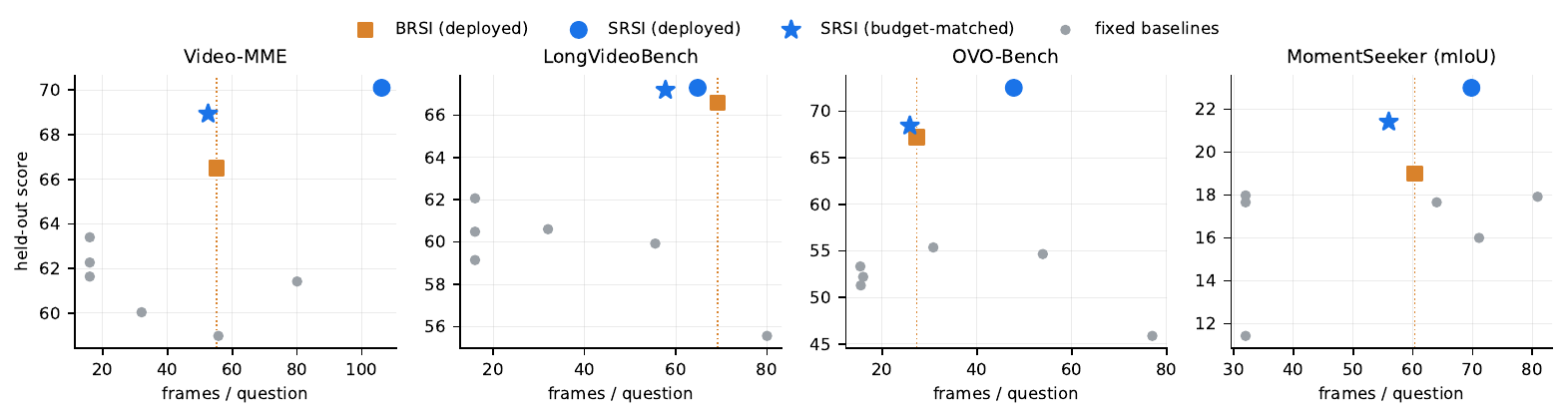}
\caption{\textbf{Held-out score versus frames per question on each benchmark.} Deployed harnesses of \basemethod{} (squares) and \method{} (circles), budget-matched \method{} harnesses (stars), and fixed baselines (gray dots), each averaged over three seeds. The dotted line marks the budget, i.e., the frames per question of the deployed \basemethod{} harness. On every benchmark, the budget-matched \method{} harness lies above \basemethod{} without using more frames.}
\label{fig:cost_accuracy}
\end{figure}

\subsection{Ablations}
\label{sec:exp:ablations}

We ablate the three components of \method{} (instrumented warm-up, branch-axis choice, information isolation) and vary the proposer model. Unless stated otherwise, ablations run on Video-MME with three seeds and share every other setting with the main results.

\begin{figure}[t]
    \centering
    \begin{minipage}[t]{0.458\textwidth}
        \centering
        \includegraphics[width=\linewidth]{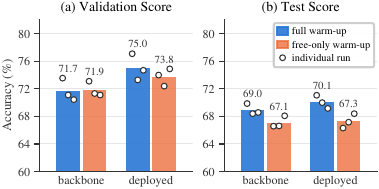}
        \caption{\textbf{Warm-up ablation on Video-MME.} Validation (a) and held-out (b) accuracy of the warm-up backbone and the deployed system, for the full instrumented warm-up and a free-only warm-up with the same budget; bars are means and dots are individual runs.}
        \label{fig:warmablation}
    \end{minipage}\hfill
    \begin{minipage}[t]{0.512\textwidth}
        \centering
        \includegraphics[width=\linewidth]{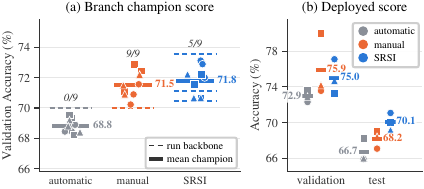}
        \caption{\textbf{Axis-selection ablation on Video-MME.} (a)~Validation accuracy of branch champions (markers) for automatic, manual, and \method{} axes, with each run's backbone (dashed) and the mean champion (solid); fractions count branches that improve on their backbone. (b)~Validation and held-out accuracy of the deployed systems.}
        \label{fig:axesablation}
    \end{minipage}
\end{figure}

\paragraph{Instrumented warm-up.}
\label{sec:exp:warmablation}
We compare the full warm-up of \method{} with a \emph{free-only} variant that spends the same five-candidate budget per round on free proposals, with no removal or intensity slots. As shown in Figure~\ref{fig:warmablation}, the two variants reach nearly the same warm-up validation accuracy ($71.70$ vs.\ $71.85$), yet the full warm-up transfers better to the held-out split ($69.0$ vs.\ $67.1$), and the advantage persists to the deployed system ($70.1$ vs.\ $67.3$). Every full-warm-up run beats every free-only run on held-out data, at both stages. Controlling for frames per question still leaves a held-out advantage of about $2$ points. The difference lies in the evidence: the full warm-up yields $11.7$ rather than $4.3$ removal contrasts per run, and leaves $5$ rather than $11$ of the $21$ per-family, per-run coverage increments undefined. A within-run replay attributes only $28\%$ of the backbone gap to selecting structured candidates; the rest is consistent with slot results informing later free proposals. The removal and intensity slots thus supply more informative evidence for later search rather than a stronger warm-up champion. Per-run results and the attribution analysis are in Appendix~\ref{app:warmablation}.

\paragraph{Who should choose the axes?}
\label{sec:exp:axesablation}
We compare three ways to define three branch axes: \emph{automatic}, where one proposer call defines axes and charters from the warm-up report; \emph{manual}, where a researcher defines them from the same report; and the fixed templates of \method{} with probe--race allocation (Section~\ref{sec:method_branches}). All variants share warm-up outputs, backbones, and downstream stages. As shown in Figure~\ref{fig:axesablation}, the fixed templates of \method{} generalize best. None of the nine automatic branches improves on its starting backbone, whereas all nine manual ones do. \method{} reaches slightly higher mean branch accuracy, with five of nine branches improving on their backbones. The manual variant scores highest on validation ($75.9$ vs.\ $75.0$) but lower on held-out data ($68.2$ vs.\ $70.1$). Automatic decomposition thus misses productive directions, while researcher-written charters, informed by the same validation evidence, overfit to it. Fixing the axes in advance avoids both failures.

\paragraph{Information isolation.}
\label{sec:exp:isolation}
To isolate content-free feedback from the rest of \method{}, we reuse the LongVideoBench runs of the feedback study (Section~\ref{sec:meta_harness_failure_modes}) on its 33\% validation ladder (Appendix~\ref{app:proposer_information}), whose 446 validation and 891 held-out questions match the split of Table~\ref{tab:main_results}. In the \emph{Multimodal} arm, the proposer has the full access of \basemethod{}; the \emph{Blindfolded} arm keeps the same loop but shows only per-question correctness and behavioral counters, the content-free feedback of \method{} (Appendix~\ref{app:method_isolation}). The Blindfolded arm is thus \basemethod{} with information isolation and nothing else. Each arm runs three seeds of 30 iterations with three candidates, with perception tools disabled, and deploys each run's validation argmax. The \method{} row is the LongVideoBench entry of Table~\ref{tab:main_results}.

\begin{table}[t]
\centering
\caption{\textbf{Information-isolation ablation on LongVideoBench.} Accuracy (\%) of each run's validation argmax, mean $\pm$ std over three seeds.}
\label{tab:isolation_ablation}
\small
\setlength{\tabcolsep}{5pt}
\begin{tabular}{@{}lcc@{}}
\toprule
Setting & Val & Held-out \\
\midrule
\basemethod{} feedback (Multimodal) & $71.00_{\pm 3.21}$ & $65.43_{\pm 1.18}$ \\
\basemethod{} + isolation (Blindfolded) & $71.22_{\pm 0.34}$ & $66.93_{\pm 0.81}$ \\
\method{} & $71.1_{\pm 2.0}$ & $\mathbf{67.3}_{\pm 1.0}$ \\
\bottomrule
\end{tabular}
\end{table}

As Table~\ref{tab:isolation_ablation} shows, the three settings reach nearly the same validation accuracy ($71.0$--$71.2\%$), so the differences lie entirely in transfer. Withholding question content raises held-out accuracy by $1.50$ points over \basemethod{} feedback, about $80\%$ of the way to \method{}, and shrinks the validation--test gap from $5.57$ to $4.29$ points. This is consistent with Finding~2: the proposer can no longer tailor harnesses to the validation questions, so less of its validation progress is instance-specific fit. The remaining stages of \method{} add another $0.37$ held-out points and shrink the gap to $3.8$ points. This second step comes with three caveats: \method{} evaluates more candidates per seed (at most 156, Appendix~\ref{app:method_config}, versus 90), the \basemethod{} row here is the 30-iteration Multimodal arm rather than the 50-iteration run of Table~\ref{tab:main_results}, and the two feedback arms run with perception tools disabled, whereas \method{} may use them. With three seeds per setting, we read this second step as directional, whereas the first changes only what the proposer sees.

\begin{wraptable}{r}{0.36\textwidth}
    \vspace{-12pt}
    \centering
    \caption{\textbf{Proposer ablation on Video-MME.} Mean $\pm$ std over three seeds.}
    \label{tab:proposer_ablation}
    \small
    \setlength{\tabcolsep}{5pt}
    \begin{tabular}{@{}lcc@{}}
        \toprule
        \textbf{Proposer} & Val & Test \\
        \midrule
        Opus~5   & $75.0_{\pm 1.9}$ & $70.1_{\pm 1.0}$ \\
        Sonnet~5 & $73.3_{\pm 0.6}$ & $67.7_{\pm 0.4}$ \\
        Opus~4.8 & $71.4_{\pm 0.8}$ & $65.8_{\pm 0.1}$ \\
        \bottomrule
    \end{tabular}
    \vspace{-10pt}
\end{wraptable}
\paragraph{Proposer model.}
\label{sec:exp:proposerablation}
We run \method{} with different proposer models, keeping the executor and every other setting fixed. As shown in Table~\ref{tab:proposer_ablation}, \method{} remains effective with weaker proposers: all three exceed every fixed baseline on held-out data (best: $63.4$), and with Claude Sonnet~5 it still surpasses \basemethod{} driven by Claude Opus~5 ($67.7$ vs.\ $66.5$). Held-out accuracy nonetheless tracks proposer capability, from $65.8$ with Claude Opus~4.8 to $70.1$ with Claude Opus~5, suggesting that \method{} may benefit further from stronger proposers.

\section{Conclusion}
\label{sec:conclusion}

We studied how recursive harness evolution can reliably discover generalizable designs, using agentic video understanding as a test bed. Analyzing over 4,000 harnesses evolved by \basemethod{}, we found that evolution discovers competitive designs, but its validation gains do not reliably generalize: the same imperfect validation evidence adapts both the harness and the proposer's search strategy, which leads to validation overfitting, search-budget dilution, and the early-experience trap. \method{} governs this evidence through content-free proposer feedback, an instrumented warm-up, and isolated mechanism branches followed by factorial composition. Across four benchmarks, it raises the mean held-out score by $6.2\%$ and cuts the mean validation--test gap by $37.9\%$ relative to \basemethod{}, and about half of the gain remains at a matched frame budget. Reliable harness evolution thus depends not only on how well search optimizes validation performance, but on how it is organized and informed.

\clearpage
\newpage
\bibliographystyle{assets/plainnat}
\bibliography{paper}

\begin{thebibliography}{50}
\providecommand{\natexlab}[1]{#1}
\providecommand{\url}[1]{\texttt{#1}}
\expandafter\ifx\csname urlstyle\endcsname\relax
  \providecommand{\doi}[1]{doi: #1}\else
  \providecommand{\doi}{doi: \begingroup \urlstyle{rm}\Url}\fi

\bibitem[Agrawal et~al.(2026)Agrawal, Tan, Soylu, Ziems, Khare, Opsahl-Ong,
  Singhvi, Shandilya, Ryan, Jiang, et~al.]{agrawal2026gepa}
Lakshya~A Agrawal, Shangyin Tan, Dilara Soylu, Noah Ziems, Rishi Khare, Krista
  Opsahl-Ong, Arnav Singhvi, Herumb Shandilya, Michael~J Ryan, Meng Jiang,
  et~al.
\newblock Gepa: Reflective prompt evolution can outperform reinforcement
  learning.
\newblock In \emph{International Conference on Learning Representations},
  volume 2026, pages 8479--8565, 2026.

\bibitem[{Anthropic}(2026)]{anthropic2026claudeopus5}
{Anthropic}.
\newblock Introducing claude opus 5.
\newblock \url{https://www.anthropic.com/news/claude-opus-5}, July 2026.
\newblock Accessed: 2026-09-12.

\bibitem[Arthur(1989)]{arthur1989competing}
W.~Brian Arthur.
\newblock Competing technologies, increasing returns, and lock-in by historical
  events.
\newblock \emph{The Economic Journal}, 99\penalty0 (394):\penalty0 116--131,
  1989.

\bibitem[Audibert et~al.(2010)Audibert, Bubeck, and Munos]{audibert2010best}
Jean-Yves Audibert, S{\'e}bastien Bubeck, and R{\'e}mi Munos.
\newblock Best arm identification in multi-armed bandits.
\newblock In \emph{Proceedings of the 23rd Conference on Learning Theory
  (COLT)}, pages 41--53, 2010.

\bibitem[Bai et~al.(2025)Bai, Cai, Chen, Chen, Chen, Cheng, Deng, Ding, Gao,
  Ge, Ge, Guo, Huang, Huang, Huang, Hui, Jiang, Li, Li, Li, Li, Lin, Lin, Liu,
  Liu, Liu, Liu, Liu, Liu, Lu, Luo, Lv, Men, Meng, Ren, Ren, Song, Sun, Tang,
  Tu, Wan, Wang, Wang, Wang, Wang, Xie, Xu, Xu, Xu, Yang, Yang, Yang, Yang, Yu,
  Zhang, Zhang, Zhang, Zheng, Zhong, Zhou, Zhou, Zhou, Zhu, and
  Zhu]{bai2025qwen3vltechnicalreport}
Shuai Bai, Yuxuan Cai, Ruizhe Chen, Keqin Chen, Xionghui Chen, Zesen Cheng,
  Lianghao Deng, Wei Ding, Chang Gao, Chunjiang Ge, Wenbin Ge, Zhifang Guo,
  Qidong Huang, Jie Huang, Fei Huang, Binyuan Hui, Shutong Jiang, Zhaohai Li,
  Mingsheng Li, Mei Li, Kaixin Li, Zicheng Lin, Junyang Lin, Xuejing Liu,
  Jiawei Liu, Chenglong Liu, Yang Liu, Dayiheng Liu, Shixuan Liu, Dunjie Lu,
  Ruilin Luo, Chenxu Lv, Rui Men, Lingchen Meng, Xuancheng Ren, Xingzhang Ren,
  Sibo Song, Yuchong Sun, Jun Tang, Jianhong Tu, Jianqiang Wan, Peng Wang,
  Pengfei Wang, Qiuyue Wang, Yuxuan Wang, Tianbao Xie, Yiheng Xu, Haiyang Xu,
  Jin Xu, Zhibo Yang, Mingkun Yang, Jianxin Yang, An~Yang, Bowen Yu, Fei Zhang,
  Hang Zhang, Xi~Zhang, Bo~Zheng, Humen Zhong, Jingren Zhou, Fan Zhou, Jing
  Zhou, Yuanzhi Zhu, and Ke~Zhu.
\newblock Qwen3-vl technical report, 2025.
\newblock \url{https://arxiv.org/abs/2511.21631}.

\bibitem[Brezzi and Lai(2000)]{brezzi2000incomplete}
Monica Brezzi and Tze~Leung Lai.
\newblock Incomplete learning from endogenous data in dynamic allocation.
\newblock \emph{Econometrica}, 68\penalty0 (6):\penalty0 1511--1516, 2000.

\bibitem[Cui et~al.(2026)Cui, Wang, Li, Chen, Huang, Chen, Zhai, Tang, Jia,
  Wu, Sun, and Hong]{cui2026metavideoagent}
Benlei Cui, Ruize Wang, Junjie Li, Jinhao Chen, Longtao Huang, Yinghao Chen,
  Yuwen Zhai, Jingqun Tang, Ruijian Jia, Weiwei Wu, Pengfei Sun, and Haiwen
  Hong.
\newblock Metavideoagent: Automated video-agent evolution for long-form video
  understanding.
\newblock \emph{arXiv preprint arXiv:2608.04587}, 2026.

\bibitem[Denrell and March(2001)]{denrell2001adaptation}
Jerker Denrell and James~G. March.
\newblock Adaptation as information restriction: The hot stove effect.
\newblock \emph{Organization Science}, 12\penalty0 (5):\penalty0 523--538,
  2001.

\bibitem[Fernando et~al.(2023)Fernando, Banarse, Michalewski, Osindero, and
  Rockt{\"a}schel]{fernando2023promptbreeder}
Chrisantha Fernando, Dylan Banarse, Henryk Michalewski, Simon Osindero, and Tim
  Rockt{\"a}schel.
\newblock Promptbreeder: Self-referential self-improvement via prompt
  evolution.
\newblock \emph{arXiv preprint arXiv:2309.16797}, 2023.

\bibitem[Fisher(1930)]{fisher1930genetical}
Ronald~A. Fisher.
\newblock \emph{The Genetical Theory of Natural Selection}.
\newblock Clarendon Press, Oxford, 1930.

\bibitem[Fu et~al.(2025)Fu, Dai, Luo, Li, Ren, Zhang, Wang, Zhou, Shen, Zhang,
  Chen, Li, Lin, Zhao, Li, Xu, Zheng, Chen, Shan, He, and
  Sun]{fu2025videommefirstevercomprehensiveevaluation}
Chaoyou Fu, Yuhan Dai, Yongdong Luo, Lei Li, Shuhuai Ren, Renrui Zhang, Zihan
  Wang, Chenyu Zhou, Yunhang Shen, Mengdan Zhang, Peixian Chen, Yanwei Li,
  Shaohui Lin, Sirui Zhao, Ke~Li, Tong Xu, Xiawu Zheng, Enhong Chen, Caifeng
  Shan, Ran He, and Xing Sun.
\newblock Video-mme: The first-ever comprehensive evaluation benchmark of
  multi-modal llms in video analysis, 2025.
\newblock \url{https://arxiv.org/abs/2405.21075}.

\bibitem[Hu et~al.(2025)Hu, Lu, and Clune]{hu2025automated}
Shengran Hu, Cong Lu, and Jeff Clune.
\newblock Automated design of agentic systems.
\newblock In \emph{International Conference on Learning Representations},
  volume 2025, pages 21344--21377, 2025.

\bibitem[Khattab et~al.(2023)Khattab, Singhvi, Maheshwari, Zhang, Santhanam,
  Vardhamanan, Haq, Sharma, Joshi, Moazam, et~al.]{khattab2023dspy}
Omar Khattab, Arnav Singhvi, Paridhi Maheshwari, Zhiyuan Zhang, Keshav
  Santhanam, Sri Vardhamanan, Saiful Haq, Ashutosh Sharma, Thomas~T Joshi,
  Hanna Moazam, et~al.
\newblock Dspy: Compiling declarative language model calls into self-improving
  pipelines.
\newblock \emph{arXiv preprint arXiv:2310.03714}, 2023.

\bibitem[Lee et~al.(2026)Lee, Nair, Zhang, Lee, Khattab, and Finn]{lee2026meta}
Yoonho Lee, Roshen Nair, Qizheng Zhang, Kangwook Lee, Omar Khattab, and Chelsea
  Finn.
\newblock Meta-harness: End-to-end optimization of model harnesses.
\newblock \emph{arXiv preprint arXiv:2603.28052}, 2026.

\bibitem[Liang et~al.(2025)Liang, Zhang, Zhou, Huang, Li, Zhang, Wu, Wang, Luo,
  Liao, and Fei]{liang2025univauniversalvideoagent}
Zhengyang Liang, Daoan Zhang, Huichi Zhou, Rui Huang, Bobo Li, Yuechen Zhang,
  Shengqiong Wu, Xiaohan Wang, Jiebo Luo, Lizi Liao, and Hao Fei.
\newblock Univa: Universal video agent towards open-source next-generation
  video generalist, 2025.
\newblock \url{https://arxiv.org/abs/2511.08521}.

\bibitem[Lin et~al.(2026)Lin, Wu, Wang, Shi, Sang, He, Liu, Wei, Wu, Zhang,
  et~al.]{lin2026harness}
Minhua Lin, Juncheng Wu, Zijun Wang, Zhan Shi, Yisi Sang, Bing He, Zewen Liu,
  Tianxin Wei, Zongyu Wu, Zhiwei Zhang, et~al.
\newblock Harness updating is not harness benefit: Disentangling evolution
  capabilities in self-evolving llm agents.
\newblock \emph{arXiv preprint arXiv:2605.30621}, 2026.

\bibitem[Liu et~al.(2024)Liu, Zeng, Ren, Li, Zhang, Yang, Jiang, Li, Yang, Su,
  Zhu, and Zhang]{liu2024grounding}
Shilong Liu, Zhaoyang Zeng, Tianhe Ren, Feng Li, Hao Zhang, Jie Yang, Qing
  Jiang, Chunyuan Li, Jianwei Yang, Hang Su, Jun Zhu, and Lei Zhang.
\newblock Grounding {DINO}: Marrying {DINO} with grounded pre-training for
  open-set object detection.
\newblock In \emph{European Conference on Computer Vision (ECCV)}, 2024.

\bibitem[Liu et~al.(2026)Liu, Shi, Sang, He, Lin, Wei, Wang, Dumoulin, Jin, and
  Lu]{liu2026adaptive}
Zewen Liu, Zhan Shi, Yisi Sang, Bing He, Minhua Lin, Tianxin Wei, Dakuo Wang,
  Benoit Dumoulin, Wei Jin, and Hanqing Lu.
\newblock Adaptive auto-harness: Sustained self-improvement for agentic system
  deployment on open-ended task streams.
\newblock \emph{arXiv preprint arXiv:2606.01770}, 2026.

\bibitem[Long et~al.(2025{\natexlab{a}})Long, Wan, Nakhost, Lee, Pfister, and
  Arık]{long2025vistatesttimeselfimprovingvideo}
Do~Xuan Long, Xingchen Wan, Hootan Nakhost, Chen-Yu Lee, Tomas Pfister, and
  Sercan~Ö. Arık.
\newblock Vista: A test-time self-improving video generation agent,
  2025{\natexlab{a}}.
\newblock \url{https://arxiv.org/abs/2510.15831}.

\bibitem[Long et~al.(2025{\natexlab{b}})Long, He, Ye, Pan, Lin, Li, Zhao, and
  Li]{long2025seeinglisteningrememberingreasoning}
Lin Long, Yichen He, Wentao Ye, Yiyuan Pan, Yuan Lin, Hang Li, Junbo Zhao, and
  Wei Li.
\newblock Seeing, listening, remembering, and reasoning: A multimodal agent
  with long-term memory, 2025{\natexlab{b}}.
\newblock \url{https://arxiv.org/abs/2508.09736}.

\bibitem[Lord and Novick(1968)]{lord1968statistical}
Frederic~M. Lord and Melvin~R. Novick.
\newblock \emph{Statistical Theories of Mental Test Scores}.
\newblock Addison-Wesley, Reading, MA, 1968.

\bibitem[Luo et~al.(2025)Luo, Zheng, Li, Yin, Lin, Fu, Huang, Ji, Chao, Luo,
  and Ji]{luo2025videoragvisuallyalignedretrievalaugmentedlong}
Yongdong Luo, Xiawu Zheng, Guilin Li, Shukang Yin, Haojia Lin, Chaoyou Fu,
  Jinfa Huang, Jiayi Ji, Fei Chao, Jiebo Luo, and Rongrong Ji.
\newblock Video-rag: Visually-aligned retrieval-augmented long video
  comprehension, 2025.
\newblock \url{https://arxiv.org/abs/2411.13093}.

\bibitem[Luo et~al.(2026)Luo, Guo, and Li]{luo2026videorsi}
Bingjun Luo, Jialin Guo, and Siqi Li.
\newblock Video-rsi: Recursive self-improvement of video understanding agents
  via harness evolution.
\newblock \emph{arXiv preprint arXiv:2609.37950}, 2026.

\bibitem[McNemar(1947)]{mcnemar1947note}
Quinn McNemar.
\newblock Note on the sampling error of the difference between correlated
  proportions or percentages.
\newblock \emph{Psychometrika}, 12\penalty0 (2):\penalty0 153--157, 1947.

\bibitem[Ning et~al.(2026)Ning, Tieu, Fu, Wei, Li, Bei, Zou, Ai, Liu, Li, Chen,
  Zhao, Yang, Li, Qian, Li, Lin, Zeng, Qiu, Chen, Sun, Yang, Wang, Pan, Yang,
  Zhang, Fang, Cui, Cao, Chen, Sun, Chen, Srinivasan, Mathur, Xia, Li, Yan, Lu,
  Zhang, Zhang, Tong, and He]{ning2026codeagentharness}
Xuying Ning, Katherine Tieu, Dongqi Fu, Tianxin Wei, Zihao Li, Yuanchen Bei,
  Jiaru Zou, Mengting Ai, Zhining Liu, Ting-Wei Li, Lingjie Chen, Yanjun Zhao,
  Ke~Yang, Bingxuan Li, Cheng Qian, Gaotang Li, Xiao Lin, Zhichen Zeng,
  Ruizhong Qiu, Sirui Chen, Yifan Sun, Xiyuan Yang, Ruida Wang, Rui Pan,
  Chenyuan Yang, Dylan Zhang, Liri Fang, Zikun Cui, Yang Cao, Pan Chen, Dorothy
  Sun, Ren Chen, Mahesh Srinivasan, Nipun Mathur, Yinglong Xia, Hong Li, Hong
  Yan, Pan Lu, Lingming Zhang, Tong Zhang, Hanghang Tong, and Jingrui He.
\newblock Code as agent harness, 2026.
\newblock \url{https://arxiv.org/abs/2605.18747}.

\bibitem[Niu et~al.(2025)Niu, Li, Miao, Ge, Zhou, He, Dong, Duan, Ding, Qian,
  et~al.]{niu2025ovo}
Junbo Niu, Yifei Li, Ziyang Miao, Chunjiang Ge, Yuanhang Zhou, Qihao He, Xiaoyi
  Dong, Haodong Duan, Shuangrui Ding, Rui Qian, et~al.
\newblock Ovo-bench: How far is your video-llms from real-world online video
  understanding?
\newblock In \emph{2025 IEEE/CVF Conference on Computer Vision and Pattern
  Recognition (CVPR)}, pages 18902--18913. IEEE, 2025.

\bibitem[Novikov et~al.(2025)Novikov, V{\~u}, Eisenberger, Dupont, Huang,
  Wagner, Shirobokov, Kozlovskii, Ruiz, Mehrabian,
  et~al.]{novikov2025alphaevolve}
Alexander Novikov, Ng{\^a}n V{\~u}, Marvin Eisenberger, Emilien Dupont, Po-Sen
  Huang, Adam~Zsolt Wagner, Sergey Shirobokov, Borislav Kozlovskii,
  Francisco~JR Ruiz, Abbas Mehrabian, et~al.
\newblock Alphaevolve: A coding agent for scientific and algorithmic discovery.
\newblock \emph{arXiv preprint arXiv:2506.13131}, 2025.

\bibitem[Opsahl-Ong et~al.(2024)Opsahl-Ong, Ryan, Purtell, Broman, Potts,
  Zaharia, and Khattab]{opsahl2024optimizing}
Krista Opsahl-Ong, Michael~J Ryan, Josh Purtell, David Broman, Christopher
  Potts, Matei Zaharia, and Omar Khattab.
\newblock Optimizing instructions and demonstrations for multi-stage language
  model programs.
\newblock In \emph{Proceedings of the 2024 Conference on Empirical Methods in
  Natural Language Processing}, pages 9340--9366, 2024.

\bibitem[Orr(2000)]{orr2000adaptation}
H.~Allen Orr.
\newblock Adaptation and the cost of complexity.
\newblock \emph{Evolution}, 54\penalty0 (1):\penalty0 13--20, 2000.

\bibitem[Pryzant et~al.(2023)Pryzant, Iter, Li, Lee, Zhu, and
  Zeng]{pryzant2023automatic}
Reid Pryzant, Dan Iter, Jerry Li, Yin Lee, Chenguang Zhu, and Michael Zeng.
\newblock Automatic prompt optimization with “gradient descent” and beam
  search.
\newblock In \emph{Proceedings of the 2023 conference on empirical methods in
  natural language processing}, pages 7957--7968, 2023.

\bibitem[Radford et~al.(2021)Radford, Kim, Hallacy, Ramesh, Goh, Agarwal,
  Sastry, Askell, Mishkin, Clark, Krueger, and Sutskever]{radford2021learning}
Alec Radford, Jong~Wook Kim, Chris Hallacy, Aditya Ramesh, Gabriel Goh,
  Sandhini Agarwal, Girish Sastry, Amanda Askell, Pamela Mishkin, Jack Clark,
  Gretchen Krueger, and Ilya Sutskever.
\newblock Learning transferable visual models from natural language
  supervision.
\newblock In \emph{Proceedings of the 38th International Conference on Machine
  Learning (ICML)}, pages 8748--8763, 2021.

\bibitem[Rothschild(1974)]{rothschild1974two}
Michael Rothschild.
\newblock A two-armed bandit theory of market pricing.
\newblock \emph{Journal of Economic Theory}, 9\penalty0 (2):\penalty0 185--202,
  1974.

\bibitem[Russo and Zou(2016)]{russo2016controlling}
Daniel Russo and James Zou.
\newblock Controlling bias in adaptive data analysis using information theory.
\newblock In \emph{Proceedings of the 19th International Conference on
  Artificial Intelligence and Statistics (AISTATS)}, pages 1232--1240, 2016.

\bibitem[Shang et~al.(2025)Shang, Li, Zhao, Ma, Liu, Xu, and
  Li]{shang2025agentsquare}
Yu~Shang, Yu~Li, Keyu Zhao, Likai Ma, Jiahe Liu, Fengli Xu, and Yong Li.
\newblock Agentsquare: Automatic llm agent search in modular design space.
\newblock In \emph{International Conference on Learning Representations},
  volume 2025, pages 3841--3865, 2025.

\bibitem[Shen et~al.(2024)Shen, Xiong, Zhao, Wu, Chen, Zhu, Liu, Xiao,
  Varadarajan, Bordes, et~al.]{shen2024longvu}
Xiaoqian Shen, Yunyang Xiong, Changsheng Zhao, Lemeng Wu, Jun Chen, Chenchen
  Zhu, Zechun Liu, Fanyi Xiao, Balakrishnan Varadarajan, Florian Bordes, et~al.
\newblock Longvu: Spatiotemporal adaptive compression for long video-language
  understanding.
\newblock \emph{arXiv preprint arXiv:2410.17434}, 2024.

\bibitem[Smith and Winkler(2006)]{smith2006optimizer}
James~E. Smith and Robert~L. Winkler.
\newblock The optimizer's curse: Skepticism and postdecision surprise in
  decision analysis.
\newblock \emph{Management Science}, 52\penalty0 (3):\penalty0 311--322, 2006.

\bibitem[Tang et~al.(2025)Tang, Qiu, Xie, Tian, Jiao, and Ye]{tang2025adaptive}
Xi~Tang, Jihao Qiu, Lingxi Xie, Yunjie Tian, Jianbin Jiao, and Qixiang Ye.
\newblock Adaptive keyframe sampling for long video understanding.
\newblock In \emph{2025 IEEE/CVF Conference on Computer Vision and Pattern
  Recognition (CVPR)}, pages 29118--29128. IEEE, 2025.

\bibitem[Wang et~al.(2024)Wang, Zhang, Zohar, and
  Yeung-Levy]{wang2024videoagent}
Xiaohan Wang, Yuhui Zhang, Orr Zohar, and Serena Yeung-Levy.
\newblock Videoagent: Long-form video understanding with large language model
  as agent.
\newblock In \emph{European Conference on Computer Vision}, pages 58--76.
  Springer, 2024.

\bibitem[Wang et~al.(2025)Wang, Yu, Stengel-Eskin, Yoon, Cheng, Bertasius, and
  Bansal]{wang2025videotreeadaptivetreebasedvideo}
Ziyang Wang, Shoubin Yu, Elias Stengel-Eskin, Jaehong Yoon, Feng Cheng, Gedas
  Bertasius, and Mohit Bansal.
\newblock Videotree: Adaptive tree-based video representation for llm reasoning
  on long videos, 2025.
\newblock \url{https://arxiv.org/abs/2405.19209}.

\bibitem[Wei et~al.(2026)Wei, Shi, Lin, He, Liu, Sang, Bei, Ning, Zou, Li,
  et~al.]{wei2026evo}
Tianxin Wei, Zhan Shi, Minhua Lin, Bing He, Zewen Liu, Yisi Sang, Yuanchen Bei,
  Xuying Ning, Jiaru Zou, Ting-Wei Li, et~al.
\newblock Evo-harness: Context-to-harness skill compilation for self-evolving
  agents.
\newblock \emph{arXiv preprint arXiv:2608.15071}, 2026.

\bibitem[Wu et~al.(2024)Wu, Li, Chen, and
  Li]{wu2024longvideobenchbenchmarklongcontextinterleaved}
Haoning Wu, Dongxu Li, Bei Chen, and Junnan Li.
\newblock Longvideobench: A benchmark for long-context interleaved
  video-language understanding, 2024.
\newblock \url{https://arxiv.org/abs/2407.15754}.

\bibitem[Wu et~al.(2026)Wu, Ren, Li, Li, Yang, Zhang, Gu, Yang,
  Batista-Navarro, Zhang, Liu, Zhou, Dai, and
  Lin]{wu2026modularrsimodulargeneralizablerecursive}
Siwei Wu, Jincheng Ren, Yizhi Li, Haau-Sing Li, Chengran Yang, Yuxuan Zhang,
  Weicheng Gu, Jian Yang, Riza Batista-Navarro, Chuanyi Zhang, Xianglong Liu,
  Ming Zhou, Bryan Dai, and Chenghua Lin.
\newblock Modularrsi: Modular and generalizable recursive harness
  self-improvement, 2026.
\newblock \url{https://arxiv.org/abs/2609.14857}.

\bibitem[Xia et~al.(2026)Xia, Han, Wang, Chen, Zhang, Lee, Huang, Yu, CuiZhu,
  Ming, Yao, Gokturk, Pfister, and
  Lee]{xia2026rrsiregularizedrecursiveselfimprovement}
Peng Xia, Rujun Han, Zifeng Wang, Yanfei Chen, Yufan Zhang, Yoonho Lee,
  Chengsong Huang, Han Yu, Zhongying CuiZhu, Yifei Ming, Huaxiu Yao, Burak
  Gokturk, Tomas Pfister, and Chen-Yu Lee.
\newblock Rrsi: Regularized recursive self-improvement of agent harnesses,
  2026.
\newblock \url{https://arxiv.org/abs/2609.24972}.

\bibitem[Xu and Chen(2026)]{xu2026videoharnessrsi}
Guoyang Xu and Hao Chen.
\newblock Videoharness-rsi: Recursive harness self-improvement for long-video
  understanding with frozen vision-language models.
\newblock \emph{arXiv preprint arXiv:2608.24302}, 2026.

\bibitem[Xu and Raginsky(2017)]{xu2017information}
Aolin Xu and Maxim Raginsky.
\newblock Information-theoretic analysis of generalization capability of
  learning algorithms.
\newblock In \emph{Advances in Neural Information Processing Systems},
  volume~30, 2017.

\bibitem[Yang et~al.(2024)Yang, Wang, Lu, Liu, Le, Zhou, and
  Chen]{yang2024large}
Chengrun Yang, Xuezhi Wang, Yifeng Lu, Hanxiao Liu, Quoc~V Le, Denny Zhou, and
  Xinyun Chen.
\newblock Large language models as optimizers.
\newblock In \emph{International Conference on Learning Representations},
  volume 2024, pages 12028--12068, 2024.

\bibitem[Ye et~al.(2026)Ye, Zhao, Wang, Hu, Sun, Liang, Jia, Huang, Tian,
  Zhang, Jin, Zhang, Patel, and Fu]{ye2026videoresearcher}
Dingqiang Ye, Dongdi Zhao, Kaishen Wang, Qingqiao Hu, Jingchen Sun, Yijun
  Liang, Yuqi Jia, Yiqiao Huang, Yunjie Tian, Jiaxing Zhang, Chuanyang Jin,
  Ke Zhang, Vishal~M. Patel, and Di~Fu.
\newblock Videoresearcher: Self-improving tool design for long-video
  understanding.
\newblock \emph{arXiv preprint arXiv:2609.19664}, 2026.

\bibitem[Yeo et~al.(2026)Yeo, Kim, Yoon, and Hwang]{yeo2026worldmm}
Woongyeong Yeo, Kangsan Kim, Jaehong Yoon, and Sung~Ju Hwang.
\newblock Worldmm: Dynamic multimodal memory agent for long video reasoning.
\newblock In \emph{Proceedings of the IEEE/CVF Conference on Computer Vision
  and Pattern Recognition}, pages 25599--25609, 2026.

\bibitem[Yuan et~al.(2026)Yuan, Ni, Liu, Wang, Zhou, Liang, Zhao, Cao, Wen, and
  Dou]{yuan2026momentseeker}
Huaying Yuan, Jian Ni, Zheng Liu, Yueze Wang, Junjie Zhou, Zhengyang Liang,
  Bo~Zhao, Zhao Cao, Ji-Rong Wen, and Zhicheng Dou.
\newblock Momentseeker: A task-oriented benchmark for long-video moment
  retrieval.
\newblock \emph{Advances in Neural Information Processing Systems}, 38, 2026.

\bibitem[Yuksekgonul et~al.(2024)Yuksekgonul, Bianchi, Boen, Liu, Huang,
  Guestrin, and Zou]{yuksekgonul2024textgrad}
Mert Yuksekgonul, Federico Bianchi, Joseph Boen, Sheng Liu, Zhi Huang, Carlos
  Guestrin, and James Zou.
\newblock Textgrad: Automatic" differentiation" via text.
\newblock \emph{arXiv preprint arXiv:2406.07496}, 2024.

\bibitem[Zhang et~al.(2025{\natexlab{a}})Zhang, Chen, Wan, Chang, Cheng, Wang,
  Hu, and Bai]{zhang2025evoflow}
Guibin Zhang, Kaijie Chen, Guancheng Wan, Heng Chang, Hong Cheng, Kun Wang,
  Shuyue Hu, and Lei Bai.
\newblock Evoflow: Evolving diverse agentic workflows on the fly.
\newblock \emph{arXiv preprint arXiv:2502.07373}, 2025{\natexlab{a}}.

\bibitem[Zhang et~al.(2026{\natexlab{a}})Zhang, Hu, Lu, Lange, and
  Clune]{zhang2026darwin}
Jenny Zhang, Shengran Hu, Cong Lu, Robert Lange, and Jeff Clune.
\newblock Darwin g{\"o}del machine: open-ended evolution of self-improving
  agents.
\newblock In \emph{International Conference on Learning Representations},
  volume 2026, pages 104223--104294, 2026{\natexlab{a}}.

\bibitem[Zhang et~al.(2025{\natexlab{b}})Zhang, Xiang, Yu, Teng, Chen, Chen,
  Zhuge, Cheng, Hong, Wang, et~al.]{zhang2025aflow}
Jiayi Zhang, Jinyu Xiang, Zhaoyang Yu, Fengwei Teng, Xionghui Chen, Jiaqi Chen,
  Mingchen Zhuge, Xin Cheng, Sirui Hong, Jinlin Wang, et~al.
\newblock Aflow: Automating agentic workflow generation.
\newblock In \emph{International Conference on Learning Representations},
  volume 2025, pages 34040--34077, 2025{\natexlab{b}}.

\bibitem[Zhang et~al.(2026{\natexlab{b}})Zhang, Jia, Guo, Li, Li, Li, and
  Lu]{zhang2026deep}
Xiaoyi Zhang, Zhaoyang Jia, Zongyu Guo, Jiahao Li, Bin Li, Houqiang Li, and Yan
  Lu.
\newblock Deep video discovery: Agentic search with tool use for long-form
  video understanding.
\newblock \emph{Advances in Neural Information Processing Systems},
  38:\penalty0 89863--89895, 2026{\natexlab{b}}.

\end{thebibliography}

\clearpage
\newpage
\beginappendix
\newtheorem{proposition}{Proposition}[section]

\section*{Overview of the Appendix}

The appendix follows the order of the paper, and Table~\ref{tab:study_index} indexes the experiments with their scale and the appendix that documents it.
\begin{itemize}[leftmargin=*,itemsep=1pt,topsep=2pt]
    \item Appendix~\ref{sec:experimental_details} describes the models, benchmarks, splits, inference budgets, seeded harnesses, and metrics.
    \item Appendix~\ref{app:basemethod} specifies \basemethod{}: its evolution loop and study configurations (Section~\ref{sec:initial_investigation}).
    \item Appendix~\ref{app:verdict_model} formalizes the verdict model of Eq.~(\ref{eq:verdict}), and Appendices~\ref{app:overfitting_analysis}--\ref{app:early_experience_trap} analyze the three failure modes of Section~\ref{sec:meta_harness_failure_modes}, each with its protocol, full results, propositions and proofs, and robustness checks.
    \item Appendix~\ref{sec:simple_remedies} reports the parent-selection and initialization studies behind the simple remedies discussed in Section~\ref{sec:meta_harness_failure_modes}.
    \item Appendix~\ref{app:method_details} gives the implementation, statistical procedures, and configuration of \method{} (Section~\ref{sec:method}).
    \item Appendix~\ref{app:warmablation} gives the protocol and additional analyses of the warm-up ablation (Section~\ref{sec:exp:warmablation}).
    \item Appendix~\ref{app:unseen_videos} re-scores held-out results on videos absent from validation, and Appendix~\ref{app:method_diagnostics} reports failure-mode diagnostics of \method{} (Section~\ref{sec:experiments}).
    \item Appendix~\ref{app:prompts} reproduces the main proposer prompts of \method{}.
\end{itemize}

\begin{table*}[t]
\centering
\small
\caption{\textbf{Index of experiments.} All experiments use the frozen Qwen3-VL-8B-Instruct executor and, except for the proposer ablation of Section~\ref{sec:exp:proposerablation}, Claude Opus~5 as the proposer. \emph{Budget}: iterations $\times$ candidates per iteration of one run. \emph{Candidates}: scheduled total over the study's runs; for the early-experience trap, the candidates actually proposed (the ``over 4{,}000'' of the abstract). Studies share runs, so the column does not sum. BRSI: \basemethod{}; SRSI: \method{}. VM: Video-MME; LVB: LongVideoBench; OVO: OVO-Bench; MS: MomentSeeker.}
\label{tab:study_index}
\setlength{\tabcolsep}{4pt}
\begin{tabularx}{\textwidth}{@{}l l >{\raggedright\arraybackslash}X r c r l@{}}
\toprule
Study & Benchmarks & Factor varied & Runs & Budget & Candidates & Where \\
\midrule
\multicolumn{7}{@{}l}{\textit{Diagnosis of \basemethod{} (Section~\ref{sec:initial_investigation})}} \\
\quad Effectiveness & VM & -- (retrieval only) & 1 & $50\times3$ & 150 & Fig.~\ref{fig:initial_metaharness} \\
\quad Validation overfitting & LVB & proposer feedback; validation fraction & 27 & $30\times3$ & 2{,}430 & App.~\ref{app:overfitting_analysis} \\
\quad Search-budget dilution & VM & number of mechanism families (1, 2, 3, 6) & 11 & varies$^{a}$ & 1{,}820 & App.~\ref{app:search_space_analysis} \\
\quad Early-experience trap & VM, LVB & -- (re-analysis of existing runs) & 44 & varies$^{b}$ & 4{,}806 & App.~\ref{app:early_experience_trap} \\
\midrule
\multicolumn{7}{@{}l}{\textit{Simple remedies (Section~\ref{sec:meta_harness_failure_modes})}} \\
\quad Parent selection & VM, LVB & parent-suggestion rule & 18 & $30\times3$ & 1{,}620 & App.~\ref{sec:search_algorithm} \\
\quad Initialization & VM & starting harness (simple seeds vs.\ AKS) & 7 & $50\times3$ & 1{,}050 & App.~\ref{sec:better_initialization} \\
\midrule
\multicolumn{7}{@{}l}{\textit{Main results and ablations (Section~\ref{sec:experiments})}} \\
\quad Main results: BRSI & all four & -- & 12 & $50\times3$ & 1{,}800 & Tab.~\ref{tab:main_results} \\
\quad Main results: SRSI & all four & -- & 12 & staged$^{c}$ & $\leq$1{,}872 & Tab.~\ref{tab:main_results} \\
\quad Warm-up ablation & VM & warm-up slot allocation & 6 & staged$^{c}$ & $\leq$936 & App.~\ref{app:warmablation} \\
\quad Axis selection & VM & who defines the branch axes & 9 & staged$^{c}$ & $\leq$1{,}404 & Fig.~\ref{fig:axesablation} \\
\bottomrule
\end{tabularx}
\\[2pt]
\raggedright\footnotesize
$^{a}$Per space in Table~\ref{tab:dilution_ladder}. \quad
$^{b}$Per setting in Table~\ref{tab:eet_settings}. \quad
$^{c}$At most 156 candidates per run (Eq.~(\ref{eq:method_budget})); totals are upper bounds.
\end{table*}

\section{Experimental Setup}
\label{sec:experimental_details}

This section describes the models, benchmarks, inference budgets, seeded harnesses, and metrics shared by all experiments; Table~\ref{tab:study_index} lists the scale of each experiment.

\subsection{Models}
\label{sec:experimental_models}

The \textbf{executor} is a frozen Qwen3-VL-8B-Instruct~\citep{bai2025qwen3vltechnicalreport} with greedy decoding. It is the only model that sees video or answers questions; ``roles'' and ``agents'' inside a harness are prompts to the same model. Executor and tool outputs are cached by content, so re-evaluating an unchanged harness reproduces its outputs (batching varies seeded-harness scores by at most $\pm0.5$ points). The \textbf{proposer} is Claude Opus~5~\citep{anthropic2026claudeopus5}, run as a Claude Code agent at maximum reasoning effort in a new session per iteration; it writes harness code but never answers questions or runs the benchmark. Query-conditioned retrieval ranks a per-video pool of frames (4\,fps, at most 4{,}096) with CLIP ViT-B/32 and ViT-L/14-336~\citep{radford2021learning}; open-vocabulary detection uses Grounding DINO~\citep{liu2024grounding}.

\subsection{Benchmarks and Data Splits}
\label{sec:experimental_datasets}

Each benchmark has one fixed validation/held-out partition (Table~\ref{tab:dataset_splits}), drawn once and reused by every method, seed, and study except the validation ladder of Appendix~\ref{app:overfitting_analysis}. Every partition in this paper, including that ladder, is a seeded random split of questions, except on MomentSeeker, where whole source videos are assigned to one side; on the other benchmarks, a held-out question may come from a video that also has validation questions (Appendix~\ref{app:unseen_videos}). Evolution and selection use only the validation split, and held-out results never reach the proposer. All benchmarks are used video-only: subtitles and metadata such as duration categories or task labels are withheld from harness code.
\begin{itemize}[leftmargin=*,itemsep=1pt,topsep=2pt]
    \item \textbf{Video-MME}: all 2{,}700 questions on 900 videos.
    \item \textbf{LongVideoBench}: the released validation set (the test set has no public answers), after removing items with inconsistent answer indices.
    \item \textbf{OVO-Bench}: the backward-tracing and real-time-perception multiple-choice modes, each evaluated on the clip truncated at query time so that no future frames are visible.
    \item \textbf{MomentSeeker}: text-query moment retrieval, where each source video lies entirely on one side of the split.
\end{itemize}

\begin{table}[t]
\centering
\caption{\textbf{Benchmark splits.} Seeded random splits; the split unit is the question, or the source video for MomentSeeker.}
\label{tab:dataset_splits}
\small
\begin{tabular}{@{}lccccl@{}}
\toprule
Benchmark & Pool & Validation & Held-out & Split unit & Metric \\
\midrule
Video-MME      & 2{,}700 & 450 & 2{,}250 & question & accuracy \\
LongVideoBench & 1{,}337 & 446 & 891  & question & accuracy \\
OVO-Bench      & 1{,}468 & 489 & 979  & question & accuracy \\
MomentSeeker   & 1{,}000 & 261 & 739  & video & mIoU \\
\bottomrule
\end{tabular}
\end{table}

\subsection{Inference Budgets}
\label{sec:experimental_budgets}

Table~\ref{tab:inference_budgets} lists the default settings and per-question limits. A harness may override the frame count, pixel budget, and output length; MomentSeeker defaults to more, lower-resolution frames, since temporal grounding depends mainly on coverage. A primitive that fails or exhausts its budget returns an empty result, so every harness must return a valid answer without any tool. Search spaces without perception tools, including the LongVideoBench runs of Appendix~\ref{app:overfitting_analysis}, remove them from the interface.

\begin{table}[t]
\centering
\caption{\textbf{Default inference settings and per-question limits.} The frame and pixel values are defaults that a harness may override per call; the limits are enforced by the interface.}
\label{tab:inference_budgets}
\small
\setlength{\tabcolsep}{4pt}
\begin{tabular}{@{}lcc@{}}
\toprule
& Multiple choice & Temporal grounding \\
\midrule
Default frames per pass & 16 & 32 \\
Default pixels per frame & 230{,}400 & 151{,}200 \\
Default output tokens & 16 & 64 \\
Maximum frames per pass & 768 & 256 \\
\midrule
Executor calls per question & \multicolumn{2}{c}{12} \\
Tool calls per question & \multicolumn{2}{c}{60} \\
Pixel re-rendering calls per question & \multicolumn{2}{c}{12} \\
Working memory (characters) & \multicolumn{2}{c}{900} \\
\bottomrule
\end{tabular}
\end{table}

\subsection{Seeded Harnesses}
\label{sec:experimental_baselines}

Six deliberately simple, hand-written harnesses (Table~\ref{tab:seed_harnesses}) seed every full-harness run and serve as fixed baselines. \textsc{Self-reflection} and \textsc{Role Debate} read the frames of \textsc{Retrieval}, and \textsc{Tool-gated Selection} reads as many, so their differences from \textsc{Retrieval} reflect the added mechanism rather than the frame budget. On MomentSeeker they predict a time span instead of an option letter. Where perception tools are disabled, \textsc{Tool-gated Selection} is replaced by \textsc{Dense Uniform}, one call on 64 uniform frames. Spaces with one or two mechanism families start instead from five simple seeds: uniform and block-center sampling, CLIP top-$k$ retrieval, and two space-specific retrieval or selection seeds.

\begin{table*}[t]
\centering
\small
\caption{\textbf{Seeded harnesses.} Frames: mean frames shown to the executor per Video-MME question, summed over calls.}
\label{tab:seed_harnesses}
\begin{tabularx}{\textwidth}{@{}>{\raggedright\arraybackslash}p{0.22\textwidth} >{\raggedright\arraybackslash}X c@{}}
\toprule
Harness (code name) & Procedure & Frames \\
\midrule
\textsc{Uniform} (\texttt{uniform}) & Samples frames uniformly over the video and answers in one grounded call. & 16 \\
\textsc{Retrieval} (\texttt{clip\_topk}) & Ranks pool frames by CLIP similarity to the question and options, backfills the selection with uniform frames to the budget, and answers in one call. & 16 \\
\textsc{Self-reflection} (\texttt{reflect\_loop}) & Answers on the retrieved frames, confronts the executor with its own answer on the same frames, and takes the revision (at most two rounds). & 32 \\
\textsc{Tool-gated Selection} (\texttt{tool\_survey}) & Asks the executor which entities would resolve the question, detects them in a retrieved candidate pool, and answers on the frames that contain them. & 16 \\
\textsc{Role Debate} (\texttt{debate\_pair}) & A proponent argues for the leading option, a skeptic for the strongest rival, and an arbiter decides after looking at the frames again with both arguments. & 80 \\
\textsc{Memory Survey} (\texttt{memo\_multiturn}) & Splits the video into shots, writes one observation per shot keyframe into working memory, answers with the memory digest, and re-retrieves once on a cue named by the executor if the answer remains uncertain. & 56 \\
\bottomrule
\end{tabularx}
\end{table*}

\subsection{Evaluation Metrics}
\label{sec:experimental_metrics}

Multiple-choice benchmarks report accuracy, parsing the first valid option letter from the harness output (none scores as incorrect). MomentSeeker reports mIoU, the mean temporal IoU of the predicted span, clamped to the video and maximized over annotated intervals (unparsable predictions score zero); it is also the evolution objective. A harness's cost is its mean number of frames shown to the executor per question, summed over calls. Evolved results average three seeds and report the standard deviation across seeds. Unless noted otherwise, paired comparisons of two harnesses use McNemar's test~\citep{mcnemar1947note}, contrasts between groups of runs use Welch's $t$-test or exact permutation tests, and bootstrap intervals resample runs rather than questions, since candidates of one run are dependent.

\section{\basemethod{} Details}
\label{app:basemethod}

This section specifies \basemethod{} as used in Section~\ref{sec:initial_investigation}: the evolution loop (Appendix~\ref{app:basemethod_loop}) and the configurations of the individual studies (Appendix~\ref{app:basemethod_variants}).

\subsection{Evolution Loop}
\label{app:basemethod_loop}

A harness is a Python class whose \texttt{answer} method receives a context object for one question---the video, the question, and its options---and returns an answer. Every access to frames, the executor, tools, or memory goes through the context, which enforces the per-question limits of Table~\ref{tab:inference_budgets}. Algorithm~\ref{alg:basemethod} summarizes the loop.

\begin{algorithm}[ht]
\caption{\basemethod{}: the propose--evaluate--accumulate loop.}
\label{alg:basemethod}
\small
\begin{algorithmic}[1]
\Require frozen VLM $M$; proposer $P$; seed harnesses $\mathcal{H}_0$; splits $\mathcal{S}_{\mathrm{val}}$, $\mathcal{S}_{\mathrm{test}}$; iterations $T$; candidates per iteration $k$
\State $\mathcal{D}_0\gets\{(H,\textsc{Eval}(H,\mathcal{S}_{\mathrm{val}})):H\in\mathcal{H}_0\}$ \Comment{seed archive}
\For{$t=1,\ldots,T$}
    \State start a new session of $P$ in a workspace exposing $\mathcal{D}_{t-1}$, the traces, and $\mathcal{I}_{\mathrm{val}}$
    \State $P$ writes $k$ harness files and a manifest: name, declared parent, hypothesis, mechanism tags
    \For{each candidate $H$ in the manifest}
        \If{$H$ fails to import or violates a leak rule} record the failure and skip $H$ \EndIf
        \State $\mathcal{E}\gets\textsc{Eval}(H,\mathcal{S}_{\mathrm{val}})$ \Comment{per-question answers, correctness, cost, traces}
        \State $\mathcal{D}_t\gets\mathcal{D}_{t-1}\cup\{(H,\mathcal{E})\}$; update the Pareto frontier over (accuracy, mean frames)
    \EndFor
\EndFor
\State evaluate the frontier, the validation-best harness, and $\mathcal{H}_0$ on $\mathcal{S}_{\mathrm{test}}$ \Comment{never shown to $P$}
\State \Return the validation-best harness \Comment{deployed}
\end{algorithmic}
\end{algorithm}

\paragraph{Proposer sessions.}
Each iteration starts a new session of the proposer (Appendix~\ref{sec:experimental_models}) in the run's workspace. A task description specifies the objective, the harness interface, the mechanism families that the search space charters, and the candidate rules: every iteration must add $k$ new candidates, each changing a mechanism rather than only constants. It prescribes a workflow: analyze the archive and the execution traces, prototype on a stubbed or real executor, implement $k$ new harnesses, check that each imports and passes the leak guard, and register them in a manifest. The manifest declares, for each candidate, the parent it modifies, a one-sentence hypothesis, and the mechanisms it uses. Unless stated otherwise, $k=3$.

\paragraph{Persistent memory.}
Besides the archive, the only state that survives an iteration is a persistent memory directory of the proposer's agent environment, which later sessions load automatically (up to 160\,KB per run in our logs). The proposer uses it to record its interpretation of verdicts, including lists of directions it considers exhausted; Appendix~\ref{app:eet_traces} shows how such entries propagate early verdicts.

\paragraph{Evaluation.}
Each candidate is evaluated once on the full validation split, and its score is never recomputed. Executor and tool outputs are cached by content, so calls that repeat an earlier input, within or across candidates, return the cached output. Before evaluation, a static leak guard rejects any candidate that accesses the file system, the network, or evaluation logs; references dataset-identity or metadata handles such as the video path, question and video IDs, or the task type; hashes text; or embeds lookup tables (collection literals with more than 12 strings or strings longer than 4{,}000 characters). Harness code sees only the question, the options, and the video's frame count, frame rate, and duration. A candidate that fails to import, violates a leak rule, or crashes during evaluation is recorded as failed and cannot become the champion.

\paragraph{Frontier and deployment.}
The loop maintains the Pareto frontier of validation performance against mean frames per question. After the last iteration, the frontier members, the validation-best harness, and the seeded harnesses are evaluated once on the held-out split. The deployed harness is the validation-best harness. Held-out results are stored outside the proposer's workspace.

\subsection{Study Configurations}
\label{app:basemethod_variants}

\paragraph{Budgets.}
Unless stated otherwise, retrieval-family runs evolve for 50 iterations and full-harness runs for 30 (50 for \basemethod{} in the main results of Table~\ref{tab:main_results}, which use Multimodal feedback and no parent suggestions on every benchmark), with three candidates per iteration; one perception-tools run uses 100 iterations of five candidates (Table~\ref{tab:dilution_ladder}).

\paragraph{Parent suggestions.}
The default loop leaves the choice of parent entirely to the proposer. The parent-selection study of Appendix~\ref{sec:search_algorithm} adds a list of suggested parents to the task description and varies only the rule that produces it. All other runs, including the full-harness runs of the search-space ladder and the main results, receive no parent suggestions; deployment is the validation-best harness in all cases.

\section{Evolving Capability of \basemethod{}}
\label{app:evolving-capability}

This section gives the experimental setting behind Section~\ref{sec:metaharness_effectiveness} and explains its three design choices: the evolved mechanism family, the validation/held-out split, and the iteration budget.

\subsection{Experimental Setting}
\label{app:evolving-capability-setting}

\paragraph{Evolving retrieval for a fair comparison.}
The question of Section~\ref{sec:metaharness_effectiveness} is whether recursive evolution can match expert manual design, so the comparison needs a strong hand-designed reference. We use Adaptive Keyframe Sampling (AKS)~\citep{tang2025adaptive}, which is a retrieval method: it scores frames by their relevance to the question and allocates the frame budget accordingly. We therefore let \basemethod{} evolve only the retrieval strategy in the main text. Evolved and hand-designed harnesses then differ in the same mechanism and run on the same frozen executor, so a gap between them reflects how the mechanism was designed rather than which mechanism was used. Evolving other or more mechanism families would give \basemethod{} components that AKS lacks, and a win could no longer be credited to evolution alone. Appendix~\ref{app:evolving-capability-families} reports evolution in other mechanism families and on other datasets.

\paragraph{A 1:5 validation/held-out split.}
We split Video-MME at random into 450 validation and 2{,}250 held-out questions, a ratio of 1:5 (Table~\ref{tab:dataset_splits}). This split keeps most of the benchmark away from evolution, for two reasons.
\begin{itemize}[leftmargin=*,itemsep=1pt,topsep=2pt]
    \item \textbf{Evaluation cost.} Every candidate is run in full on the validation split, which takes a frozen-VLM call on the sampled frames of every question. Evaluation is therefore the slow step of each iteration, and a small validation split shortens every one of the 150 evaluations in a run.
    \item \textbf{A stricter test of generalization.} A small validation split gives the proposer less of the benchmark to fit, while the large held-out split measures the deployed harness precisely: with 2{,}250 questions, the standard error of a held-out accuracy near $68\%$ is about one point, against about two points on the 450 validation questions. A larger validation share would let evolution fit a larger part of the benchmark itself and leave a smaller, noisier held-out split to detect it. Under the 1:5 split, a harness evolved on one sixth of the data must match the hand-designed methods on the remaining five sixths, which it never sees. That the evolved harness still matches the strongest of them on this split (Figure~\ref{fig:initial_metaharness}(b)) therefore strengthens, rather than weakens, the conclusion that \basemethod{} is competitive with human design.
\end{itemize}

\paragraph{Iteration budget.}
Each run evolves for 50 iterations of three candidates, 150 candidate evaluations in total.

\subsection{Evolution Across Mechanism Families and Datasets}
\label{app:evolving-capability-families}

The retrieval-only comparison isolates one mechanism for a fair test against AKS, but the evolving capability of \basemethod{} is not specific to retrieval or to Video-MME.

\paragraph{Other mechanism families.}
The search-space ladder of Appendix~\ref{app:search_space_analysis} evolves, on the same Video-MME split, retrieval together with frame selection, with perception tools, and the full harness of six families, which adds reflection, role-based collaboration, and working memory. In every space, evolution raises validation accuracy over the best seeded harness by $5.1$--$7.6$ points within 87 candidates, and the deployed champions reach $65.7$--$66.8\%$ held-out accuracy (Table~\ref{tab:dilution_summary}), above the best fixed baseline on Video-MME (\texttt{clip\_topk}, $63.4\%$; Table~\ref{tab:main_results}). Evolution thus improves harnesses whichever mechanism families it may edit; what the families change is how efficiently it does so (Section~\ref{sec:search_space_efficiency}).

\paragraph{Other datasets.}
The main results evolve the full harness on all four benchmarks (Table~\ref{tab:main_results}). On held-out data, \basemethod{} exceeds the strongest fixed baseline on every benchmark: $66.5$ versus $63.4\%$ on Video-MME, $66.6$ versus $62.1\%$ on LongVideoBench, $67.2$ versus $55.4\%$ on OVO-Bench, and $19.0$ versus $18.0$ mIoU on MomentSeeker, $5.12$ points on average. Evolution therefore transfers across general, long-form, and streaming question answering and temporal grounding, although, as Section~\ref{sec:meta_harness_failure_modes} shows, its validation progress overstates these held-out gains.

\section{A Model of Validation Verdicts}
\label{app:verdict_model}

This section derives the verdict model of Eq.~(\ref{eq:verdict}); Appendices~\ref{app:overfitting_analysis}--\ref{app:early_experience_trap} instantiate it for the three failure modes.

\paragraph{Scores and instance-specific fit.}
Let $y_H(q)\in\{0,1\}$ indicate that harness $H$ answers question $q$ correctly, with questions drawn from a population $\mathcal{Q}$. On a validation split $\mathcal{S}_{\mathrm{val}}$ of $n$ questions,
\begin{equation}
    V(H)=\frac{100}{n}\sum_{q\in\mathcal{S}_{\mathrm{val}}}y_H(q),
    \qquad
    \mu(H)=100\,\mathbb{E}_{q\sim\mathcal{Q}}\bigl[y_H(q)\bigr],
\end{equation}
and $T(H)$ is held-out accuracy, an unbiased estimate of $\mu(H)$ because the held-out split never influences search. $V(H)$ is not, since $H$ is built from validation feedback. We decompose $V(H)=\mu(H)+S(H)+e(H)$ with
\begin{equation}
    S(H)=\mathbb{E}\bigl[V(H)\mid H\bigr]-\mu(H),
    \qquad
    \mathbb{E}\bigl[e(H)\mid H\bigr]=0 .
    \label{eq:app_S}
\end{equation}
The \emph{instance-specific fit} $S(H)$ is the systematic, validation-specific part of the score; $S\equiv0$ for any harness chosen without access to $\mathcal{S}_{\mathrm{val}}$, such as a hand-written baseline. In psychometric terms~\citep{lord1968statistical}, validation measures $\mu+S$ reliably but only $\mu$ validly, and the two separate exactly when $S$ varies across candidates.

\paragraph{Paired verdicts and the behavioral footprint.}
A candidate is compared with the incumbent $H^{\mathrm{inc}}$ on the same questions. With $D_q=y_H(q)-y_{H^{\mathrm{inc}}}(q)$ and $n_{10}$, $n_{01}$ the numbers of questions with $D_q=1$ and $D_q=-1$,
\begin{equation}
    \hat{\delta}=\frac{100\,(n_{10}-n_{01})}{n},
    \qquad
    d=n_{10}+n_{01},
    \label{eq:app_verdict_counts}
\end{equation}
and, treating questions as exchangeable given the pair,
\begin{equation}
    \operatorname{Var}(\hat{\delta})
    =\frac{10^{4}}{n}\Bigl(\frac{\mathbb{E}\,d}{n}-\Bigl(\frac{\mathbb{E}\,\hat{\delta}}{100}\Bigr)^{2}\Bigr)
    \approx\frac{10^{4}\,d}{n^{2}},
    \label{eq:app_paired_var}
\end{equation}
the scale of McNemar's statistic~\citep{mcnemar1947note}. We call $d$ the \emph{behavioral footprint} and use the paired standard error $\sigma_H=100\sqrt{d_H}/n$ and $z=\hat{\delta}/\sigma_H$. The key consequence is that an edit changing more answers yields a noisier verdict, whether or not the changes help. Exact clones ($d=0$) get $z=0$ and are excluded from averages of $\sigma$. Because the calculation ignores adaptive reuse of the validation set and within-video dependence, $|z|<1.96$ is only a nominal marker of an unresolved comparison.

\paragraph{Incumbent optimism.}
The incumbent is the validation maximum so far, so the noise in its cached score is biased upward, $\omega=\mathbb{E}\bigl[e(H^{\mathrm{inc}})\bigr]>0$, even though $\mathbb{E}[e(H)\mid H]=0$ for any fixed $H$ (the optimizer's curse~\citep{smith2006optimizer}); for $m$ equally accurate harnesses with independent Gaussian noise of standard deviation $s$, $\omega=s\,c_m$ with $c_m=\mathbb{E}\max_{i\leq m}Z_i\approx\sqrt{2\ln m}$. Every later verdict compares a candidate against this selected score and therefore inherits $-\omega$. Subtracting the two decompositions,
\begin{equation*}
    \hat{\delta}=V(H)-V(H^{\mathrm{inc}})
    =\underbrace{\mu(H)-\mu(H^{\mathrm{inc}})}_{\Delta}
    +\underbrace{S(H)-S(H^{\mathrm{inc}})}_{S}
    -\omega
    +\underbrace{e(H)-e(H^{\mathrm{inc}})+\omega}_{\varepsilon},
\end{equation*}
which is Eq.~(\ref{eq:verdict}) with $S$ the candidate's fit relative to the incumbent's and $\mathbb{E}\,\varepsilon=0$. Each validation-specific part of the incumbent's score thus enters once: its systematic fit $S(H^{\mathrm{inc}})$ through $S$, and the selection bias of its noise through $\omega$.

\paragraph{Three ways a verdict fails.}
Evolution improves $\mu$ only insofar as $\hat{\delta}$ tracks $\Delta$. \textbf{Validation overfitting} (App.~\ref{app:overfitting_analysis}): content-revealing feedback enlarges $S$, which selection rewards. \textbf{Search-budget dilution} (App.~\ref{app:search_space_analysis}): larger spaces lower the evidence per hypothesis through fewer evaluations per mechanism combination and larger footprints $d$. \textbf{Early-experience trap} (App.~\ref{app:early_experience_trap}): single low-SNR verdicts, offset by $-\omega$, are rarely revisited. The model is a lens, not a claim of exact Gaussianity; each appendix states its assumptions.

\section{Validation Overfitting}
\label{app:overfitting_analysis}

This section supports Section~\ref{sec:overfitting}. Its conclusions are:
\begin{itemize}[leftmargin=*,itemsep=1pt,topsep=2pt]
    \item \textbf{Richer feedback, larger gaps.} The mean validation--held-out gap of evolved harnesses grows from $2.0$ (Blindfolded) to $2.9$ (Trace) and $3.2$ points (Multimodal), while hand-written baselines show none (Appendix~\ref{app:overfit_results}).
    \item \textbf{Theory.} Selecting on validation accuracy transfers only a fraction $\beta$ of the validation advantage, and the non-transferable fit a harness can acquire is bounded by the information its feedback carries about the validation sample, which grows along nested channels (Appendix~\ref{app:overfit_theory}).
    \item \textbf{Measurement.} Validation scores are equally reliable in every arm, yet $\beta$ falls from $0.54$ to $0.30$, and the lost share reappears as instance-specific fit; the model predicts the winners' held-out accuracy to within $0.34$ points (Appendix~\ref{app:overfit_beta}).
    \item \textbf{Mechanism.} Proposers that see question text quote validation questions in their code $1.55\times$ (Trace) and $1.27\times$ (Multimodal) as often as chance, whereas Blindfolded proposers do not (Appendix~\ref{app:overfit_channel}).
\end{itemize}

\subsection{Protocol}
\label{app:proposer_information}

All 27 runs use LongVideoBench (video only), the frozen Qwen3-VL-8B-Instruct executor, and the full-harness space with perception tools disabled, evolving for $30$ iterations of three candidates. The 1{,}337 scorable questions are split at random, as in Appendix~\ref{sec:experimental_datasets}, into nested validation sets of 33\% (446 / 891 validation / held-out questions), 50\% (668 / 669), and 75\% (1{,}003 / 334). Each arm runs three seeds per ladder. Only validation-frontier candidates are evaluated on the held-out split, which the proposer never sees. The three arms differ only in what the proposer observes about the validation set (Table~\ref{tab:overfit_arms}); each arm's feedback is a deterministic function of the next one's. After dropping 13 incomplete validation/held-out pairs, 281 proposed candidates remain (89 Blindfolded, 103 Trace, 89 Multimodal), against $6\times3=18$ distinct hand-written baseline values.

\begin{table}[h]
\centering
\small
\caption{\textbf{Information arms} (three seeds $\times$ three ladders per arm).}
\label{tab:overfit_arms}
\begin{tabular}{@{}lp{0.72\textwidth}@{}}
\toprule
Arm & What the proposer observes about the validation set \\
\midrule
Blindfolded & Per-question predicted letter and correctness, and about ten behavioral counters (model and tool calls, frames, rendering, memory). No question text, options, or frames. \\
Trace & Blindfolded feedback plus each question, its options, the full interaction trace, and the first raw answer. \\
Multimodal & Trace feedback plus on-demand rendered validation frames, and a written anti-overfitting protocol. \\
\bottomrule
\end{tabular}
\end{table}

\subsection{Results}
\label{app:overfit_results}

Table~\ref{tab:overfit_results} reports the gaps behind Figure~\ref{fig:diagnosis}(a). The gap grows monotonically with feedback when ladders are pooled; per-ladder orderings are noisy with three runs per cell. Winners' validation accuracy rises with feedback ($70.2\to71.1\%$) while their held-out accuracy declines ($66.5\to65.7\%$), a trend within the confidence intervals.

\begin{table}[h]
\centering
\small
\caption{\textbf{Validation--held-out gaps (pp).} Candidate CIs are 95\% bootstrap intervals over candidates, with run-clustered intervals in brackets. ``Winner'' is each run's validation argmax, averaged over the nine runs of an arm.}
\label{tab:overfit_results}
\footnotesize
\setlength{\tabcolsep}{3.5pt}
\begin{tabular}{@{}lrcccc@{}}
\toprule
 & $n$ & Mean gap, candidates & Val $>$ held-out & Winner val / held-out (\%) & Winner gap \\
\midrule
Hand-written & 18 & $-0.16$ \,$[-1.07, 0.82]$ & 39\% & -- & -- \\
Blindfolded & 89 & $2.02$ \,$[1.62, 2.39]$ \,$[1.11, 2.95]$ & 84\% & $70.2$ / $66.5$ \,$[65.6, 67.4]$ & $3.69$ \\
Trace & 103 & $2.91$ \,$[2.43, 3.39]$ \,$[2.13, 4.00]$ & 89\% & $71.0$ / $66.3$ \,$[64.4, 67.8]$ & $4.76$ \\
Multimodal & 89 & $3.21$ \,$[2.73, 3.71]$ \,$[2.16, 4.22]$ & 90\% & $71.1$ / $65.7$ \,$[64.8, 66.5]$ & $5.46$ \\
\bottomrule
\end{tabular}
\end{table}

\subsection{Theory: Selection on a Partially Transferable Signal}
\label{app:overfit_theory}

\paragraph{Model.}
For the $m$ candidates $h_1,\ldots,h_m$ of one run, Eq.~(\ref{eq:verdict}) written per candidate gives
\begin{equation}
    V_i=\mu_i+S_i+e_i,\qquad T_i=\mu_i+e'_i,
    \label{eq:overfit_model}
\end{equation}
with population accuracy $\mu_i$, instance-specific fit $S_i$, and split noise $e_i,e'_i$. With $\sigma^2_V=\sigma^2_\mu+\sigma^2_S+\sigma^2_e$, the \emph{transfer coefficient} is
\begin{equation}
    \beta=\frac{\operatorname{Cov}(V,T)}{\operatorname{Var}V}=\frac{\sigma^2_\mu}{\sigma^2_\mu+\sigma^2_S+\sigma^2_e}.
    \label{eq:overfit_beta}
\end{equation}
\paragraph{Relation to Eq.~(\ref{eq:verdict}).}
A candidate's verdict is $\hat{\delta}_i=V_i-V(H^{\mathrm{inc}})$. For candidates compared against the same incumbent, the incumbent's score, including its optimism $\omega$, is a common offset, so $\operatorname{Var}_i\hat{\delta}_i=\operatorname{Var}V$ and the three terms of Eq.~(\ref{eq:verdict}) that vary across candidates are $\Delta_i=\mu_i-\mu(H^{\mathrm{inc}})$, $S_i$, and $\varepsilon_i=e_i$ up to constants. Hence
\[
    \beta=\frac{\operatorname{Var}\Delta}{\operatorname{Var}\Delta+\operatorname{Var}S+\operatorname{Var}\varepsilon},
\]
the share of verdict variance carried by the only transferable term; $1-\beta$ splits into an instance-specific share ($S$) and a noise share ($\varepsilon$), which Appendix~\ref{app:overfit_beta} measures separately. $\omega$ shifts every verdict equally and therefore affects which verdicts clear zero (Appendix~\ref{app:eet_verdict_model}), not $\beta$. The reliability $\kappa$ of Proposition~\ref{prop:evidence_value} is the same ratio estimated on paired verdicts; here we write it on scores because held-out accuracy $T_i$ is observed per candidate.

\begin{proposition}[Selection on a partially transferable signal]
\label{prop:goodhart}
Let $(\mu_i,S_i,e_i,e'_i)$ be independent across $i$, with independent Gaussian components of means $(\bar\mu,\bar S,0,0)$ and variances $(\sigma^2_\mu,\sigma^2_S,\sigma^2_e,\sigma^2_{e'})$. Let $i^\star=\arg\max_i V_i$, $H^\star=h_{i^\star}$, $\bar V=\bar\mu+\bar S$, $\bar T=\bar\mu$, and $c_m=\mathbb{E}[\max_{i\le m}Z_i]$ for i.i.d.\ standard normal $Z_i$. Then
\begin{enumerate}[label=(\roman*),leftmargin=*,itemsep=1pt,topsep=2pt]
    \item $\mathbb{E}[T_i\mid V_1,\ldots,V_m]=\bar T+\beta\,(V_i-\bar V)$ for every $i$;
    \item $\mathbb{E}[T(H^\star)]-\bar T=\beta\,\sigma_V c_m$, whereas $\mathbb{E}[V(H^\star)]-\bar V=\sigma_V c_m$;
    \item $\mathbb{E}[V(H^\star)-T(H^\star)]=\bar S+(1-\beta)\,\sigma_V c_m$;
    \item with $\sigma^2_\mu$, $\sigma^2_e$, and $m\ge2$ fixed, a larger $\sigma_S$ raises $\mathbb{E}[V(H^\star)]$ and the gap but lowers $\mathbb{E}[T(H^\star)]$.
\end{enumerate}
\end{proposition}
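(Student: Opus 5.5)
The plan is to reduce everything to bivariate Gaussian conditioning plus independence across candidates.

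For part (i), fix $i$. Because candidates are independent, $T_i$ is independent of $(V_j)_{j\ne i}$, so $\mathbb{E}[T_i\mid V_1,\ldots,V_m]=\mathbb{E}[T_i\mid V_i]$. The pair $(V_i,T_i)$ is jointly Gaussian with means $(\bar V,\bar T)$, $\operatorname{Var}V_i=\sigma_V^2$, and $\operatorname{Cov}(V_i,T_i)=\operatorname{Var}\mu_i=\sigma_\mu^2$, since $S_i,e_i,e'_i$ are mutually independent and independent of $\mu_i$. The standard Gaussian regression formula then gives slope $\sigma_\mu^2/\sigma_V^2=\beta$, which is Eq.~(\ref{eq:overfit_beta}). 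The one point to state carefully is that independence of $T_i$ from the other $V_j$ holds jointly with $V_i$.

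For part (ii), note that $i^\star$ is a measurable function of $(V_1,\ldots,V_m)$. Apply the tower property to $T(H^\star)=\sum_i \mathbf{1}[i^\star=i]\,T_i$ and use (i) termwise:
\[
\mathbb{E}[T(H^\star)\mid V]=\bar T+\beta\,(\max_i V_i-\bar V).
\]
The $V_i$ are i.i.d.\ $\mathcal{N}(\bar V,\sigma_V^2)$, so $\max_i V_i-\bar V\overset{d}{=}\sigma_V\max_i Z_i$. Taking expectations yields both displayed identities. Ties have probability zero, so $i^\star$ is a.s.\ well defined.

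Part (iii) follows by subtracting the two identities in (ii): $(\bar V-\bar T)+(1-\beta)\sigma_V c_m=\bar S+(1-\beta)\sigma_V c_m$.

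Part (iv) is where some care is needed. With $\sigma_\mu^2$ and $\sigma_e^2$ fixed, write $\sigma_V=\sqrt{\sigma_\mu^2+\sigma_S^2+\sigma_e^2}$, which is increasing in $\sigma_S$. Since $m\ge2$, we have $c_m>0$, so $\mathbb{E}[V(H^\star)]=\bar V+\sigma_Vc_m$ increases.

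For the held-out score, $\beta\sigma_V=\sigma_\mu^2/\sigma_V$ is strictly decreasing in $\sigma_V$, so $\mathbb{E}[T(H^\star)]=\bar T+c_m\sigma_\mu^2/\sigma_V$ decreases. This assumes $\sigma_\mu>0$; otherwise it is constant, and I would note that caveat.

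The gap equals $\bar S+c_m(\sigma_V-\sigma_\mu^2/\sigma_V)$. This is the sum of an increasing function and minus a decreasing one, so it increases in $\sigma_S$, with $\bar S$ held fixed.

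I expect the main subtlety to be the interpretation of "larger $\sigma_S$". The statement should be read as varying $\sigma_S$ with $\bar S$ and the other parameters fixed. Beyond that, the argument is just the monotonicity of $x\mapsto x$ and $x\mapsto 1/x$ composed with $\sigma_V(\sigma_S)$.
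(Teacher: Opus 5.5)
Your proposal is correct and follows the paper's proof essentially step for step. It reduces (i) to bivariate Gaussian conditioning on $V_i$ via cross-candidate independence, applies the tower property over the $V$-measurable index $i^\star$ for (ii), subtracts for (iii), and uses monotonicity in $\sigma_V$ for (iv). The only difference is cosmetic: you get the gap's monotonicity by writing $(1-\beta)\sigma_V=\sigma_V-\sigma_\mu^2/\sigma_V$ rather than differentiating in $\sigma_S$, and your caveats ($c_m>0$ for $m\ge2$, $\sigma_\mu>0$, $\bar S$ held fixed) are assumptions the paper uses implicitly.
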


\noindent\textit{Proof.}
(i) By independence across candidates, $\mathbb{E}[T_i\mid V_1,\ldots,V_m]=\mathbb{E}[T_i\mid V_i]$. Since $(T_i,V_i)$ is bivariate Gaussian with $\operatorname{Cov}(T_i,V_i)=\sigma^2_\mu$, the conditional mean is linear with slope $\sigma^2_\mu/\sigma^2_V=\beta$.

\noindent (ii) The index $i^\star$ is a function of $(V_1,\ldots,V_m)$, so by the tower property and (i),
\begin{align*}
    \mathbb{E}[T_{i^\star}]
    &=\mathbb{E}\Bigl[\textstyle\sum_i\mathbf{1}\{i^\star=i\}\,\mathbb{E}[T_i\mid V_1,\ldots,V_m]\Bigr]
     =\bar T+\beta\,\bigl(\mathbb{E}[V_{i^\star}]-\bar V\bigr),\\
    \mathbb{E}[V_{i^\star}]-\bar V&=\sigma_V c_m \qquad\text{since } V_i\overset{\text{i.i.d.}}{\sim}\mathcal{N}(\bar V,\sigma^2_V).
\end{align*}
(iii) Subtracting (ii) from $\mathbb{E}[V_{i^\star}]$ and using $\bar V-\bar T=\bar S$ gives the gap.

\noindent (iv) Writing each term as a function of $\sigma_V$, which increases with $\sigma_S$,
\begin{align*}
    \mathbb{E}[V_{i^\star}]&=\bar V+\sigma_V c_m &&\text{(increasing in }\sigma_V),\\
    \mathbb{E}[T_{i^\star}]&=\bar T+\frac{\sigma^2_\mu}{\sigma_V}\,c_m &&\text{(decreasing in }\sigma_V),\\
    \frac{\partial}{\partial\sigma_S}\,\frac{c_m(\sigma^2_S+\sigma^2_e)}{\sigma_V}&=\frac{c_m\,\sigma_S(\sigma^2_V+\sigma^2_\mu)}{\sigma^3_V}>0 &&\text{(gap increasing in }\sigma_S). \tag*{$\square$}
\end{align*}

\paragraph{Remarks.}
With $\sigma_S=0$ and $\bar S=0$ the proposition reduces to the optimizer's curse~\citep{smith2006optimizer}: selection alone overfits by $(\sigma^2_e/\sigma_V)\,c_m$, and adaptive fit adds a second, feedback-dependent term. Statement (i) conditions on $V$, so $\beta$ can be estimated from the frontier candidates that receive held-out evaluations; it also survives correlated lineages as long as each candidate's quality is mean-independent of the others' scores given its own.

\begin{proposition}[Information bound and nested feedback channels]
\label{prop:info}
Let $\mathcal{S}_{\mathrm{val}}=(Z_1,\ldots,Z_n)\overset{\text{i.i.d.}}{\sim}\mathcal{Q}$, let $\ell(h,Z)\in\{0,1\}$ indicate a correct answer, and let $V(h)=\frac{100}{n}\sum_j\ell(h,Z_j)$ and $\mu(h)=100\,\mathbb{E}_{Z\sim\mathcal{Q}}\ell(h,Z)$. Let the proposer output $H$ from its feedback and randomness independent of $\mathcal{S}_{\mathrm{val}}$. Then
\begin{enumerate}[label=(\roman*),leftmargin=*,itemsep=1pt,topsep=2pt]
    \item $\bigl|\mathbb{E}[V(H)-\mu(H)]\bigr|\le 100\sqrt{I(\mathcal{S}_{\mathrm{val}};H)/(2n)}$;
    \item a harness chosen without access to $\mathcal{S}_{\mathrm{val}}$ has $I(\mathcal{S}_{\mathrm{val}};H)=0$ and no expected gap;
    \item if the observation maps satisfy $\psi_{\mathrm{blind}}=g\circ\psi_{\mathrm{trace}}$ and $\psi_{\mathrm{trace}}=g'\circ\psi_{\mathrm{mm}}$ for deterministic $g,g'$, then
    \[
        I(\mathcal{S}_{\mathrm{val}};\psi_{\mathrm{blind}})\le I(\mathcal{S}_{\mathrm{val}};\psi_{\mathrm{trace}})\le I(\mathcal{S}_{\mathrm{val}};\psi_{\mathrm{mm}}),
    \]
    and any leakage $I(\mathcal{S}_{\mathrm{val}};H)$ achievable with Blindfolded feedback is achievable with Trace feedback, and likewise from Trace to Multimodal.
\end{enumerate}
\end{proposition}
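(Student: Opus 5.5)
Part (i) is the information-theoretic generalization bound of \citet{xu2017information} (see also \citet{russo2016controlling}), applied to the rescaled loss $\ell(h,Z)\in\{0,1\}$. The plan is to condition on the output and compare the joint law of $(\mathcal{S}_{\mathrm{val}},H)$ with the product of its marginals.

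\textbf{Step 1: sub-Gaussian reduction.} For each fixed $h$, the average $\frac1n\sum_j\ell(h,Z_j)$ under $\mathcal{S}_{\mathrm{val}}\sim\mathcal{Q}^{\otimes n}$ is $\frac{1}{2\sqrt n}$-sub-Gaussian around $\mu(h)/100$, by Hoeffding's lemma for variables in $[0,1]$.

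\textbf{Step 2: Donsker--Varadhan.} Apply the Donsker--Varadhan variational formula to $P_{\mathcal{S},H}$ against $P_{\mathcal{S}}\otimes P_H$, with test function $\lambda\bigl(V(h)-\mu(h)\bigr)/100$. Under the product measure this function has mean zero. Optimizing over $\lambda$ gives
\[
\bigl|\mathbb{E}[V(H)-\mu(H)]\bigr|\le 100\sqrt{2\cdot\tfrac{1}{4n}\,I(\mathcal{S}_{\mathrm{val}};H)},
\]
which is the stated bound.

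The only care needed is that $H$ may range over an arbitrary (e.g.\ countable code) space. The proposer's private randomness is independent of $\mathcal{S}_{\mathrm{val}}$, so the Markov structure $\mathcal{S}_{\mathrm{val}}\to\text{feedback}\to H$ holds. That structure is used in (iii), not in (i).

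\textbf{Part (ii).} This is immediate. If $H$ is chosen without access to $\mathcal{S}_{\mathrm{val}}$, then $H\perp\mathcal{S}_{\mathrm{val}}$, so $I=0$. Alternatively, conditioning on $H=h$ gives $\mathbb{E}[V(h)]=\mu(h)$ directly by i.i.d.\ sampling.

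\textbf{Part (iii).} The plan is to invoke the data-processing inequality twice.

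\emph{Observation channels.} Since $\psi_{\mathrm{blind}}=g\circ\psi_{\mathrm{trace}}$, the chain $\mathcal{S}_{\mathrm{val}}\to\psi_{\mathrm{trace}}\to\psi_{\mathrm{blind}}$ is Markov, which gives the first inequality. The same argument with $g'$ gives the second.

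\emph{Achievable leakage.} Suppose $H=A(\psi_{\mathrm{blind}},R)$ for a proposer $A$ with independent randomness $R$. A Trace-fed proposer can simulate it as $H=A(g(\psi_{\mathrm{trace}}),R)$, which has the identical joint law with $\mathcal{S}_{\mathrm{val}}$ and hence the same $I(\mathcal{S}_{\mathrm{val}};H)$. The same simulation with $g'$ handles Trace to Multimodal.

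Combining this with the data-processing inequality $I(\mathcal{S}_{\mathrm{val}};H)\le I(\mathcal{S}_{\mathrm{val}};\psi)$ shows that the bound in (i) is monotone along the nested channels.

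\textbf{Main obstacle.} The main subtlety is the adaptive, multi-round nature of the loop. Feedback in round $t$ depends on harnesses proposed earlier, so "the observation" must be interpreted as the full transcript.

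I would handle this by defining $\psi$ as the entire interaction history and applying $g$ round by round. The Blindfolded transcript is then a deterministic function of the Trace transcript, because each round's Blindfolded feedback is $g$ applied to that round's Trace feedback, and the simulating proposer reproduces the same harness sequence.

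Chaining mutual information over rounds is not needed for the stated inequalities. It would only be needed to sharpen the bound beyond $I(\mathcal{S}_{\mathrm{val}};\psi)$.
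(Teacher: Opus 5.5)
Your proof is correct and follows the paper's route. For (i), you combine the $\frac{1}{2\sqrt n}$-sub-Gaussian reduction with the Russo--Zou/Xu--Raginsky mutual-information bound, which you derive via Donsker--Varadhan where the paper only cites it; for (ii), $I=0$; and for (iii), the data-processing inequality together with the same simulation argument (a Trace proposer applies $g$ to each observation and runs the Blindfolded strategy), with your round-by-round treatment of the full transcript spelling out the adaptivity that the paper leaves implicit.
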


\noindent\textit{Proof.}
(i) Since $\ell\in[0,1]$, the empirical mean over $n$ i.i.d.\ questions is $\tfrac{1}{2\sqrt n}$-sub-Gaussian for every fixed $h$. The mutual-information generalization bound~\citep{russo2016controlling,xu2017information} then gives
\[
    \Bigl|\mathbb{E}\Bigl[\tfrac{V(H)-\mu(H)}{100}\Bigr]\Bigr|\le\sqrt{2\cdot\tfrac{1}{4n}\,I(\mathcal{S}_{\mathrm{val}};H)}.
\]
(ii) With $I=0$ the bound vanishes.

\noindent (iii) The chain follows from the data-processing inequality along $\mathcal{S}_{\mathrm{val}}\to\psi_{\mathrm{mm}}\to\psi_{\mathrm{trace}}\to\psi_{\mathrm{blind}}$. A Trace proposer can apply $g$ to each observation and run any Blindfolded strategy exactly, so every joint law of $(\mathcal{S}_{\mathrm{val}},H)$ achievable under Blindfolded feedback is achievable under Trace feedback, and likewise from Trace to Multimodal.\hfill$\square$

\paragraph{What the bound implies.}
The bound limits how much adaptive fit a channel permits, not how a proposer uses it. Inverting (i), $I(\mathcal{S}_{\mathrm{val}};H)\ge 2n\,(\text{gap}/100)^2$ nats: plugging in the winners' mean excess gaps, they must carry at least $1.9$ (Blindfolded), $3.3$ (Trace), and $4.1$ nats (Multimodal) about their validation sample, a lower bound that grows with feedback.

\subsection{Measuring Transfer and Instance-Specific Fit}
\label{app:overfit_beta}

\paragraph{The three shares.}
For the evolved harnesses $H_i$, $i=1,\dots,N$, of a run, let $V(H_i)$ and $T(H_i)$ be the validation and held-out accuracy, $V^A(H_i)$ and $V^B(H_i)$ the accuracy on two random halves $A,B$ of the validation questions, and $\operatorname{Var}_i$, $\operatorname{Cov}_i$ the variance and covariance over the $N$ harnesses. The shares of Figure~\ref{fig:diagnosis}(d) are
\begin{equation}
    \rho_\Delta=\frac{\operatorname{Cov}_i\!\big(V(H_i),T(H_i)\big)}{\operatorname{Var}_i V(H_i)},\qquad
    \rho_\varepsilon=\frac{\operatorname{Var}_i\!\big(V^A(H_i)-V^B(H_i)\big)}{4\operatorname{Var}_i V(H_i)},\qquad
    \rho_S=1-\rho_\Delta-\rho_\varepsilon.
    \label{eq:shares}
\end{equation}
Across the harnesses of a run, validation scores differ by $\Delta+S+\varepsilon$ and held-out scores by $\Delta+\varepsilon'$ with independent noise $\varepsilon'$. Since $\Delta$ is the only term the two share, $\rho_\Delta=\operatorname{Var}\Delta/\operatorname{Var}V$, which is also the slope of held-out on validation accuracy: a harness one point higher on validation is on average $\rho_\Delta$ points higher on held-out data. A harness's two half-scores share its $\Delta$ and $S$ and differ only by noise; each half carries twice the noise variance of the full set, so $\operatorname{Var}(V^A-V^B)$ is four times the noise variance of a full validation score, and $\rho_\varepsilon$ needs no held-out data. The validation-fit share $\rho_S$ is the remainder. The paragraphs below give the estimators pooled over runs; $\rho_\Delta$ is the transfer coefficient $\beta$.

\paragraph{Transfer coefficient.}
With run means $\bar V_r$, $\bar T_r$ over tested candidates, we estimate $\beta$ within runs, which removes run-level offsets from the seed, starting harness, and split:
\begin{equation*}
    \hat\beta=\frac{\sum_r\sum_{h\in r}(V_h-\bar V_r)(T_h-\bar T_r)}{\sum_r\sum_{h\in r}(V_h-\bar V_r)^2}.
\end{equation*}
Resampling runs (10{,}000 replicates) gives
\[
    \hat\beta_{\mathrm{blind}}=0.54\ [0.33,0.74],\qquad
    \hat\beta_{\mathrm{trace}}=0.35\ [0.24,0.44],\qquad
    \hat\beta_{\mathrm{mm}}=0.30\ [0.15,0.41],
\]
with $\Pr(\beta_{\mathrm{blind}}>\beta_{\mathrm{mm}})=0.98$ and a one-sided permutation $p=0.010$ for $\beta_{\mathrm{blind}}-\beta_{\mathrm{mm}}$. \textbf{Half or less of a validation difference survives on held-out data, and less so with richer feedback.}

\paragraph{Variance decomposition.}
Split-half resampling isolates noise without held-out data: for 200 random halves $A,B$ of the validation questions, $\tfrac14\operatorname{Var}(V^A-V^B)$ estimates $\sigma^2_e$, unaffected by selection on $V$. With $\operatorname{Var}V$ computed within runs over all complete proposals (excluding, of 2{,}319, the 29 catastrophic ones more than 10\,pp below the best baseline and one with a partial evaluation), the shares of Figure~\ref{fig:diagnosis}(d) are
\begin{equation*}
    \hat\rho_\Delta=\hat\beta,\qquad
    \hat\rho_S=1-\hat\beta-\hat\sigma^2_e/\widehat{\operatorname{Var}}V,\qquad
    \hat\rho_\varepsilon=\hat\sigma^2_e/\widehat{\operatorname{Var}}V.
\end{equation*}
Table~\ref{tab:overfit_decomp} shows that the noise share is the same in every arm ($\hat\sigma^2_e\approx1.0$\,pp$^2$), so \textbf{the validation score is equally reliable, but its instance-specific share grows from $0.31$ to $0.56$}; it exceeds Blindfolded's in 98\% (Multimodal) and 96\% (Trace) of bootstrap replicates.

\begin{table}[h]
\centering
\small
\caption{\textbf{Reliability versus validity of the validation score.} Shares of across-candidate validation variance, with 95\% run-cluster bootstrap CIs; reliability is one minus the noise share.}
\label{tab:overfit_decomp}
\begin{tabular}{@{}lccccc@{}}
\toprule
Arm & Candidates & Transfers ($\beta$) & Instance-specific & Noise & Reliability \\
\midrule
Blindfolded & 752 & $0.54$ $[0.33, 0.74]$ & $0.31$ $[0.13, 0.50]$ & $0.16$ $[0.14, 0.17]$ & $0.84$ \\
Trace & 774 & $0.35$ $[0.24, 0.44]$ & $0.51$ $[0.42, 0.62]$ & $0.14$ $[0.11, 0.18]$ & $0.86$ \\
Multimodal & 763 & $0.30$ $[0.15, 0.41]$ & $0.56$ $[0.44, 0.70]$ & $0.14$ $[0.11, 0.19]$ & $0.86$ \\
\bottomrule
\end{tabular}
\end{table}

\paragraph{Predicting the winners.}
Proposition~\ref{prop:goodhart}(ii) predicts each winner as $\hat T(H^\star)=\bar T_r+\hat\beta_{-r}\,(V(H^\star)-\bar V_r)$, with $\hat\beta_{-r}$ fit on the arm's other runs. The predictions ($67.0$, $66.0$, $66.0\%$) match the observed $66.5$, $66.3$, and $65.7\%$ with mean absolute error $0.34$\,pp, whereas full transfer ($\beta=1$) overestimates by $1.5$--$2.2$\,pp (Figure~\ref{fig:overfit_transfer}a).

\begin{figure*}[t]
    \centering
    \includegraphics[width=0.99\textwidth]{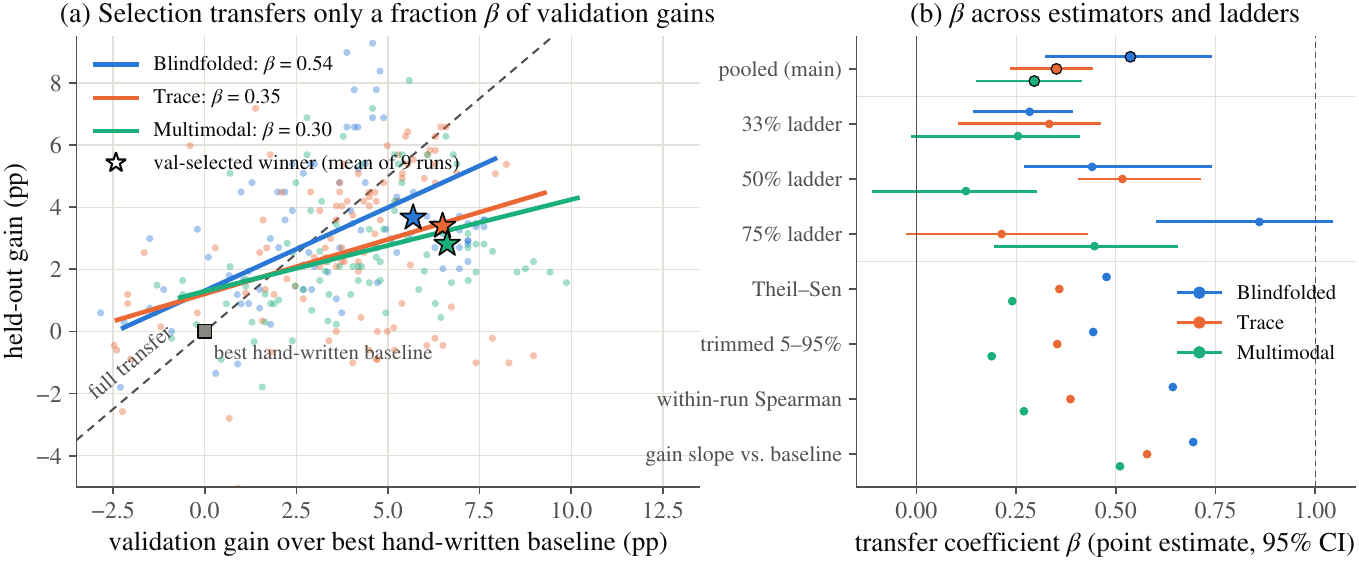}
    \caption{\textbf{Transfer of validation gains.} (a) Held-out versus validation gain over the run's best hand-written baseline for every tested candidate. Lines have slope $\hat\beta$ through the arm's mean; stars mark the mean winners. Selection moves right along a line, so a flatter line converts the same validation advantage into less held-out gain. (b) $\beta$ under alternative estimators and per ladder, with 95\% CIs.}
    \label{fig:overfit_transfer}
\end{figure*}

\subsection{Direct Evidence of the Content Channel}
\label{app:overfit_channel}

Validation and held-out questions are a random split of one pool, so text chosen without regard to the split matches validation questions at the base rate $\pi=|\text{validation}|/|\text{pool}|$. Using the 18 runs of the 33\% and 50\% ladders, for which question text of both splits is available within the 75\% validation set ($\pi=0.445$ and $0.666$), we collect every word 5-gram in evolved code that occurs in at most two question stems and test its matches against $\pi$ by re-drawing the split 20{,}000 times. The leak guard (Appendix~\ref{app:basemethod_loop}) rules out lookup tables by construction.

Table~\ref{tab:overfit_quotes} shows that \textbf{proposers that see question text quote validation questions beyond chance, and Blindfolded proposers do not.} The quotations sit almost entirely in docstrings and comments that record the validation cases motivating a design (e.g., ``a white bowl containing potatoes''): of the 168 matched Trace questions, 144 are quoted in docstrings, 25 in comments, and only 4 in functional literals (a question can be quoted in several places), where vocabulary is at most weakly enriched (ratios $0.94$--$1.08$). Content thus enters through design decisions rather than copied keywords, producing the behavioral fit measured above.

\begin{table}[h]
\centering
\small
\caption{\textbf{Verbatim validation-question 5-grams in evolved code} (33\% and 50\% ladders, six runs per arm). $p$: one-sided permutation test over the split.}
\label{tab:overfit_quotes}
\begin{tabular}{@{}lccccc@{}}
\toprule
Arm & Matched questions & Validation & Expected & Ratio & $p$ \\
\midrule
Blindfolded & 50 & 26 & 27.1 & 0.96 & 0.68 \\
Trace & 233 & 168 & 108.3 & 1.55 & $<10^{-4}$ \\
Multimodal & 69 & 49 & 38.7 & 1.27 & 0.007 \\
\bottomrule
\end{tabular}
\end{table}

\subsection{Robustness}
\label{app:overfit_limits}

The pooled ordering Blindfolded $>$ Trace $>$ Multimodal of $\beta$ holds under Theil--Sen slopes, trimming, median per-run slopes, within-run Spearman correlations, and gain slopes relative to the best baseline (Figure~\ref{fig:overfit_transfer}b). Blindfolded exceeds Multimodal in every leave-one-ladder-out fit, whereas the adjacent orderings can reverse, so the contrast between the two ends of the information range is the robust one. Moving the catastrophic-proposal threshold to 5 or 15\,pp changes the variance shares by at most $0.03$, and even retaining all catastrophic proposals preserves the ordering of instance-specific shares ($0.42$, $0.57$, $0.65$).

\section{Search-Budget Dilution}
\label{app:search_space_analysis}

This section supports Section~\ref{sec:search_space_efficiency}. Its conclusions are:
\begin{itemize}[leftmargin=*,itemsep=1pt,topsep=2pt]
    \item \textbf{Larger spaces do not help.} Deployed champions reach $66.8$, $65.8$, $66.0$, and $65.7\%$ held-out accuracy with one, two, three, and six mechanism families, and 94--100\% of promotions in every space are statistically unresolved (Appendix~\ref{app:dilution_results}).
    \item \textbf{Theory.} The power to detect an advantage $\Delta$ is bounded by the evidence $\lambda=r\Delta^2/\bar\sigma^2$, which a larger space erodes through breadth (fewer evaluations $r$ per mechanism combination) and coupling (larger footprints, hence larger $\sigma^2$) (Appendix~\ref{app:dilution_theory}).
    \item \textbf{Measurement.} From one to six families, $r$ falls $2.05\times$ and the verdict variance rises $2.24\times$, so the power for a 1-point gain drops from $0.45$ to $0.14$. Full-harness edits flip a median of 50 answers against 10, and since only $37\%$ of flips are fixes, larger edits are also more harmful (Appendix~\ref{app:dilution_footprint}).
\end{itemize}

\subsection{Protocol}
\label{app:dilution_protocol}

All runs use Video-MME (450 validation / 2{,}250 held-out questions) with the same executor, proposer, and loop; only the search space and the iteration budget differ (Table~\ref{tab:dilution_ladder}), and evidence statistics and validation gains are compared at a matched budget of 87 candidates; deployed accuracies are those at the end of each run. The nested spaces charter retrieval only (one family, 8 primitives); retrieval with frame selection (two families, 8 primitives), where a candidate may change either or both; perception tools (three families, 12 primitives); and the full harness (six families, 33 primitives), which adds reflection, role-based collaboration, and working memory. The main text contrasts the one- and six-family spaces; the two- and three-family spaces are reported only here. One retrieval-only run and one retrieval with frame selection run were excluded before analysis because their proposer had accessed files outside its workspace, an information leak that gave it information the protocol withholds.

\begin{table}[t]
\centering
\small
\caption{\textbf{The search-space ladder on Video-MME.} $K$: chartered mechanism families; $P$: callable primitives. The deployed champion is the validation-selected harness at the end of each run.}
\label{tab:dilution_ladder}
\begin{tabular}{@{}lccccc@{}}
\toprule
Space & $K$ & $P$ & Runs & Iterations $\times$ candidates & Champion found at iteration \\
\midrule
Retrieval only & 1 & 8 & 3 & $50\times3$ & 21, 46, 50 \\
+ frame selection & 2 & 8 & 3 & $50\times3$ & 28, 45, 39 \\
+ perception tools & 3 & 12 & 2 & $50\times3$;\ $100\times5$ & 49, 99 \\
Full harness & 6 & 33 & 3 & $30\times3$ & 28, 29, 21 \\
\bottomrule
\end{tabular}
\end{table}

A candidate's \emph{mechanism combination} is the set of mechanism tags its implementation exercises (retrieval, re-ranking, fusion, query rewriting, frame selection, reflection, prompting, pixel manipulation, perception tools, memory, routing, and frame budget; these are finer than the families of the space), assigned by fixed, outcome-blind keyword rules shared with Appendix~\ref{app:early_experience_trap}. Each candidate is compared with the champion in force when it was proposed, using $\hat\delta$, $d$, and $\sigma=100\sqrt{d}/n$ from Eqs.~(\ref{eq:app_verdict_counts})--(\ref{eq:app_paired_var}). Evidence statistics use the first 87 candidates of every run, the largest budget all eleven runs reach, because evaluations per mechanism combination are comparable only at a common budget.

\subsection{Results}
\label{app:dilution_results}

\begin{figure*}[t]
    \centering
    \includegraphics[width=0.96\textwidth]{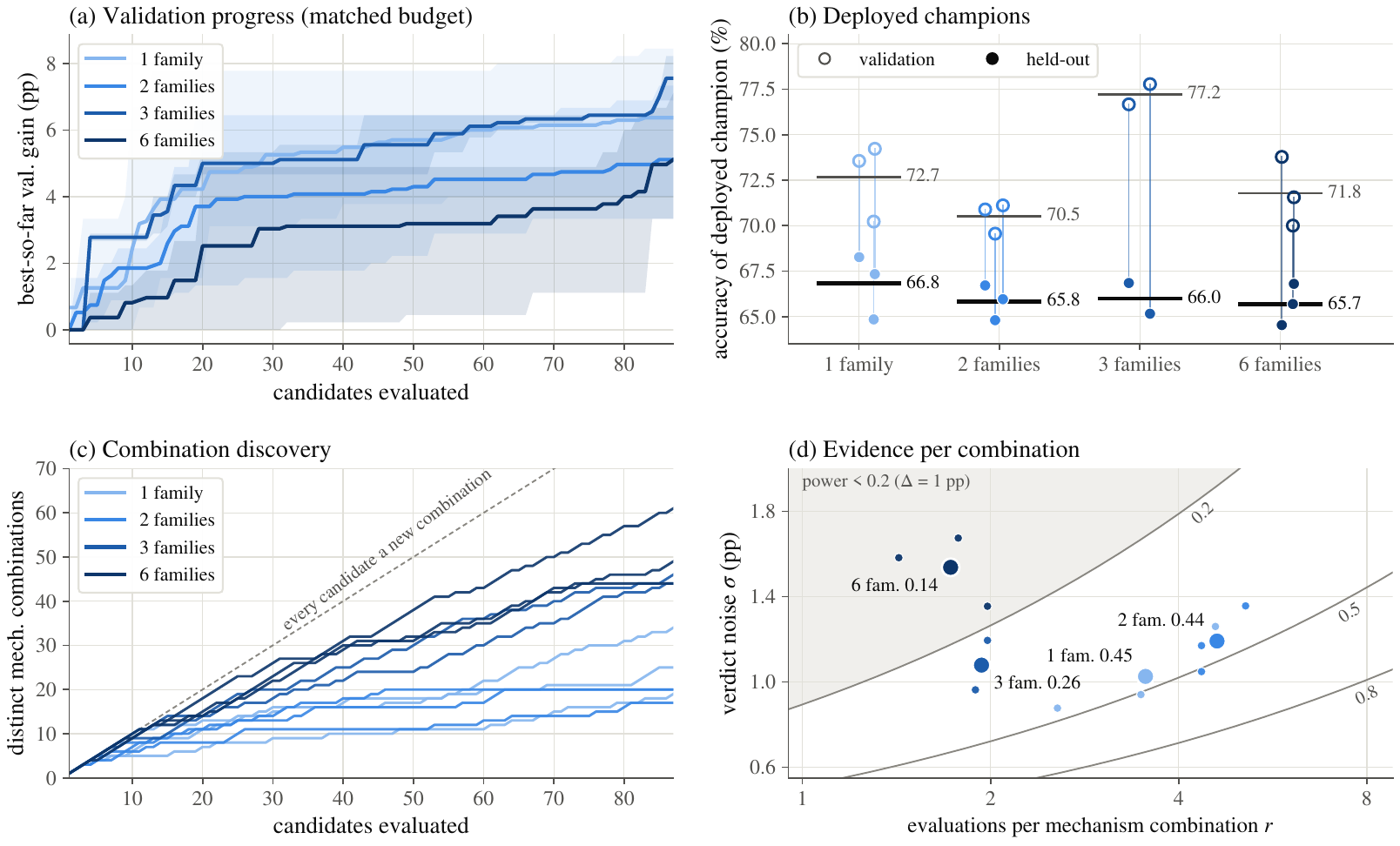}
    \caption{\textbf{All four spaces.} (a)~Best-so-far validation gain against candidates evaluated (mean and min--max over runs), on a budget-matched axis. (b)~Validation (hollow) and held-out (filled) accuracy of each run's deployed champion; bars are means. (c)~Distinct mechanism combinations visited against candidates evaluated; the dashed line is the limit in which every candidate is a new mechanism combination. (d)~Evidence landscape of Figure~\ref{fig:diagnosis}(e) with the three-family space, which has the breadth of the full harness but the footprint of retrieval-only evolution.}
    \label{fig:app_dilution_spaces}
\end{figure*}

\begin{table*}[t]
\centering
\small
\caption{\textbf{Outcome and evidence by space.} Validation gain is the best-so-far gain over the run's best seed harness after 87 candidates; accuracies are those of the deployed champions. Evidence columns use the first 87 candidates of each run: distinct mechanism combinations, evaluations per mechanism combination $r$, mean footprint $d$, mean verdict noise $\sigma$ (pp), mean verdict $\hat\delta$ (pp), and the noncentrality $\lambda$ and power of Proposition~\ref{prop:evidence} for an advantage $\Delta$. All entries are means over runs.}
\label{tab:dilution_summary}
\setlength{\tabcolsep}{4pt}
\begin{tabular}{@{}lccccccccccc@{}}
\toprule
& \multicolumn{3}{c}{Outcome (\%)} & \multicolumn{5}{c}{Search behavior} & \multicolumn{3}{c}{Power for $\Delta$ (pp)} \\
\cmidrule(lr){2-4}\cmidrule(lr){5-9}\cmidrule(l){10-12}
Space & Val.\ gain & Val. & Held-out & Combos & $r$ & $d$ & $\sigma$ & $\hat\delta$ & 0.5 & 1 & 2 \\
\midrule
Retrieval only (1) & 6.4 & 72.7 & 66.8 & 26.0 & 3.54 & 25.3 & 1.03 & $-1.10$ & 0.15 & 0.45 & 0.95 \\
+ frame selection (2) & 5.1 & 70.5 & 65.8 & 19.0 & 4.61 & 31.8 & 1.19 & $-1.42$ & 0.15 & 0.44 & 0.95 \\
+ perception tools (3) & 7.6 & 77.2 & 66.0 & 45.0 & 1.93 & 23.6 & 1.08 & $-0.76$ & 0.10 & 0.26 & 0.73 \\
Full harness (6) & 5.1 & 71.8 & 65.7 & 51.3 & 1.73 & 51.3 & 1.54 & $-3.14$ & 0.07 & 0.14 & 0.41 \\
\bottomrule
\end{tabular}
\end{table*}

\paragraph{No space improves held-out accuracy.}
Table~\ref{tab:dilution_summary} and Figure~\ref{fig:app_dilution_spaces}(a,b) extend the main-text comparison with a three-family space. That space reaches the largest validation gain ($7.6$ points after 87 candidates, $13.9$ at the end), yet its champions beat the seed harness by only $3.5$ points on held-out data. \textbf{Low evidence can thus appear as validation inflation rather than slow progress}: a third of its early proposals are near-clones differing from the champion on at most 1\% of questions, and the largest of many small, noisy verdicts is promoted. Larger spaces are also costlier: held-out gain per evaluation hour falls from $0.79$ (one family) to $0.17$ points (six), so counting candidates rather than time favors them.

\paragraph{Promotions are unresolved everywhere.}
Within the first 87 candidates, 94--100\% of promotions lie within $\pm1.96$ paired standard errors, with mean size 0.5--0.9 points. What differs across spaces is not whether promotions are resolved, but how much evidence the search could have gathered per mechanism combination. Figure~\ref{fig:app_dilution_spaces}(c) shows the breadth channel directly: over candidates 44--87, one- and two-family runs discover $0.26$ and $0.09$ new mechanism combinations per candidate, whereas three- and six-family runs keep discovering $0.44$ and $0.46$, so $r$ stays below two.

\subsection{Theory: Evidence per Mechanism Combination}
\label{app:dilution_theory}

\begin{proposition}[Evidence per mechanism combination]
\label{prop:evidence}
Let mechanism combination $c$ be evaluated $r_c$ times, with paired verdicts $\hat\delta_{c,1},\ldots,\hat\delta_{c,r_c}$ against the incumbent. Assume that (i) all implementations of $c$ share an advantage $\Delta_c$, and (ii) the verdicts are independent Gaussian with mean $\Delta_c$ and known variances $\sigma^2_{c,j}=10^4 d_{c,j}/n^2$. Define
\begin{equation}
    \lambda_c=\Delta^2\sum_{j=1}^{r_c}\sigma_{c,j}^{-2}=\frac{r_c\Delta^2}{\bar\sigma_c^2},
    \qquad
    \bar\sigma_c^{-2}=\frac{1}{r_c}\sum_{j=1}^{r_c}\sigma_{c,j}^{-2}.
    \label{eq:dilution_lambda}
\end{equation}
Then every level-$\eta$ test of the null $\Delta_c=0$ against the alternative $\Delta_c=\Delta>0$ has power at most
\[
    \Phi\bigl(\sqrt{\lambda_c}-z_{1-\eta}\bigr),
\]
which is $\Phi(\sqrt{\lambda_c}-1.96)$ for $\eta=0.025$. If a budget of $B$ evaluations visits $M(B)$ distinct mechanism combinations, the mean evidence per mechanism combination is
\begin{equation}
    \bar\lambda=\underbrace{\frac{B}{M(B)}}_{\text{breadth}}\;\Delta^2\;\underbrace{\overline{\sigma^{-2}}}_{\text{coupling}},
    \label{eq:dilution_budget}
\end{equation}
where $\overline{\sigma^{-2}}$ is the mean inverse verdict variance over the $B$ evaluations.
\end{proposition}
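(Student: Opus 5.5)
The power bound follows from reducing the $r_c$ heteroscedastic Gaussian verdicts to a single sufficient statistic and applying Neyman--Pearson. The identity for $\bar\lambda$ is then pure bookkeeping.

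\textbf{Step 1: the likelihood ratio.} Under assumptions (i)--(ii), the joint density of $(\hat\delta_{c,1},\ldots,\hat\delta_{c,r_c})$ with common mean $\theta$ and known variances $\sigma^2_{c,j}$ gives a log-likelihood ratio of $\Delta$ versus $0$ that is an increasing affine function of the precision-weighted sum
\[
W=\sum_{j}\hat\delta_{c,j}/\sigma^2_{c,j}.
\]
By the Neyman--Pearson lemma, the most powerful level-$\eta$ test of the simple null $\theta=0$ against the simple alternative $\theta=\Delta$ rejects for large $W$. Every level-$\eta$ test of $\Delta_c=0$ against $\Delta_c=\Delta$ is a fortiori a test of these simple hypotheses, so its power is at most the power of this test.

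\textbf{Step 2: computing that power.} The statistic $W$ is Gaussian with variance $\sum_j\sigma^{-2}_{c,j}$ and mean $0$ under the null and $\Delta\sum_j\sigma^{-2}_{c,j}$ under the alternative. Standardizing gives $Z=W/\sqrt{\sum_j\sigma^{-2}_{c,j}}$, which is $\mathcal{N}(0,1)$ under the null and $\mathcal{N}(\sqrt{\lambda_c},1)$ under the alternative, with $\lambda_c=\Delta^2\sum_j\sigma^{-2}_{c,j}$. The test rejecting when $Z>z_{1-\eta}$ therefore has power
\[
1-\Phi(z_{1-\eta}-\sqrt{\lambda_c})=\Phi(\sqrt{\lambda_c}-z_{1-\eta}).
\]
Setting $\eta=0.025$ gives $z_{1-\eta}=1.96$. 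The equality $\lambda_c=r_c\Delta^2/\bar\sigma_c^2$ is just the definition of $\bar\sigma_c^{-2}$ as the mean inverse variance.

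\textbf{Step 3: the budget identity.} Averaging $\lambda_c$ over the $M(B)$ visited combinations gives
\[
\bar\lambda=\frac{\Delta^2}{M(B)}\sum_c\sum_j\sigma^{-2}_{c,j}.
\]
The double sum runs over all $B$ evaluations, so it equals $B\,\overline{\sigma^{-2}}$, which yields Eq.~(\ref{eq:dilution_budget}). This step uses only the fact that the combinations partition the evaluations.

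\textbf{Main obstacle.} The only delicate point is the phrase ``every level-$\eta$ test'' in Step 1. Composite alternatives play no role, because power is evaluated at the single point $\Delta$. A composite null over nuisance parameters could only shrink the set of admissible tests, and here the variances are known, so the simple-versus-simple reduction suffices. I would state explicitly that randomized tests are covered by Neyman--Pearson. I would also note that the bound is attained, so it is tight under the assumptions.
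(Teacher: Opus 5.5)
Your proposal is correct and follows the paper's proof essentially step for step. Both reduce the problem to the precision-weighted statistic $\sum_j \hat\delta_{c,j}/\sigma_{c,j}^2$, apply Neyman--Pearson to the simple-versus-simple problem, and standardize to read off the power $\Phi(\sqrt{\lambda_c}-z_{1-\eta})$. Both then obtain the budget identity by summing $\lambda_c$ over the combinations that partition the $B$ evaluations. Your extra remarks on randomized tests and on the bound being attained are harmless refinements.
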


\noindent\textit{Proof.}
Under (i)--(ii), the log-likelihood ratio of the alternative to the null is increasing in the precision-weighted statistic
\[
    Z=\frac{\sum_j\hat\delta_{c,j}/\sigma_{c,j}^2}{\bigl(\sum_j\sigma_{c,j}^{-2}\bigr)^{1/2}},
    \qquad
    Z\sim\mathcal{N}(0,1)\ \text{under the null},
    \qquad
    Z\sim\mathcal{N}\bigl(\sqrt{\lambda_c},1\bigr)\ \text{under the alternative}.
\]
By the Neyman--Pearson lemma, the most powerful level-$\eta$ test rejects when $Z>z_{1-\eta}$, with power $\Pr(\mathcal{N}(\sqrt{\lambda_c},1)>z_{1-\eta})=\Phi(\sqrt{\lambda_c}-z_{1-\eta})$.

\noindent For Eq.~(\ref{eq:dilution_budget}), sum over mechanism combinations and divide by $M(B)$:
\[
    \sum_c\lambda_c=\Delta^2\sum_{j=1}^{B}\sigma_j^{-2}
    \quad\Longrightarrow\quad
    \bar\lambda=\frac{1}{M(B)}\sum_c\lambda_c=\frac{B}{M(B)}\,\Delta^2\,\overline{\sigma^{-2}}.
    \tag*{$\square$}
\]

\paragraph{Remarks.}
The bound is optimistic, so the reported powers are upper bounds: heterogeneous implementations add variance $\tau^2$ to each verdict, verdicts sharing questions and incumbents are positively dependent, and the loop acts on single comparisons ($r_c=1$) rather than pooling. The proposition is the per-combination analogue of fixed-budget best-arm identification, where error grows with the number of arms sharing a budget~\citep{audibert2010best}. Breadth also compounds: if proposals draw mechanism combinations from $p$ over $\mathcal{C}$, the expected number of distinct mechanism combinations after $N$ draws is $\sum_{c}[1-(1-p_c)^N]$, which approaches $N$ (so $r\to1$) when the proposer spreads over many more mechanism combinations than the budget can revisit, as happens once mechanism combinations span many families.

\paragraph{Estimates.}
For each run we set $r=87/M(87)$ and take $\bar\sigma$ as the mean paired standard error over its first 87 candidates, with $\Delta=1$ point (above the mean promotion size and the $0.65$-point held-out value of doubling the frame budget; Appendix~\ref{app:early_experience_trap}). From one to six families,
\[
    r:\ 3.54\to1.73\ (2.05\times\text{ fewer}),\qquad
    \bar\sigma^2:\ 2.24\times\text{ larger},\qquad
    \lambda:\ 3.38\to0.76,\qquad
    \text{power}:\ 0.45\to0.14,
\]
with $0.44$ and $0.26$ for two and three families. \textbf{Power decreases with the number of families} (Spearman $\rho=-0.74$, $p=0.02$ over the nine nested runs; $\rho=-0.86$, $p<0.001$ over all eleven), and the three full-harness runs have the three lowest values (exact $p=1/84$). The ordering is unchanged when $\sigma$ is summarized by its median, by the noise at the mean footprint, or by precision-weighting only verdicts with $d\geq4.5$. The argument does not explain the slower validation progress of the two-family space, which is within seed variation.

\subsection{Behavioral Footprint of Edits}
\label{app:dilution_footprint}

\begin{figure*}[t]
    \centering
    \includegraphics[width=0.96\textwidth]{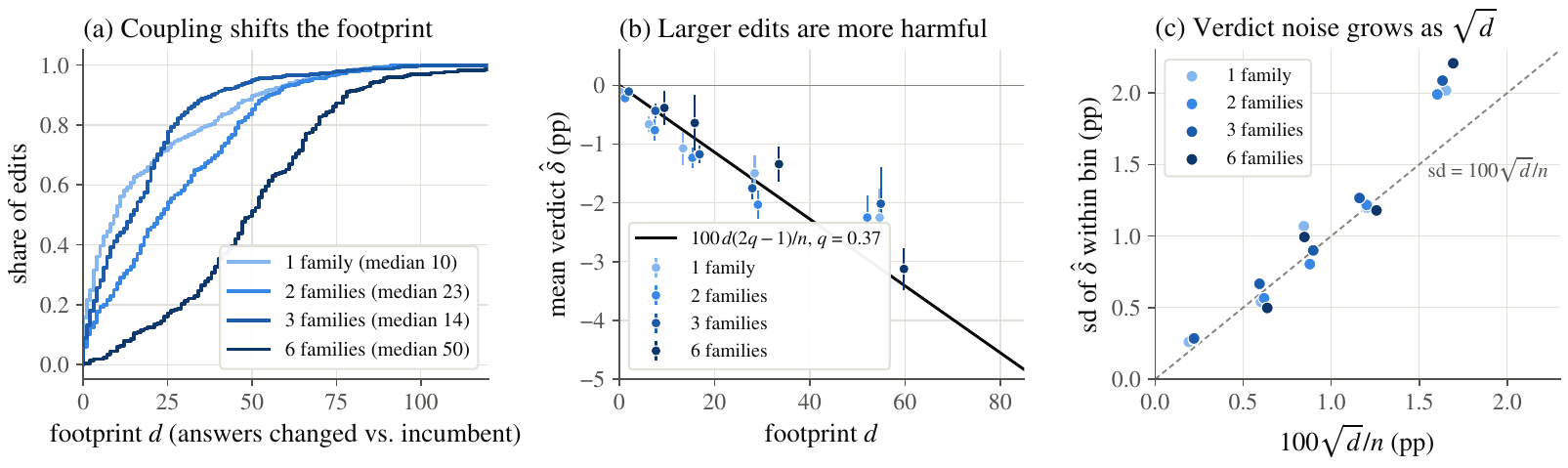}
    \caption{\textbf{The behavioral footprint of edits.} (a)~Distribution of the footprint $d$, the number of validation answers whose correctness an edit flips relative to its incumbent. (b)~Mean verdict by footprint bin (95\% intervals); the line is Proposition~\ref{prop:footprint} with the pooled fix probability $q=0.37$. Bins above 80 are dominated by crashes and omitted. (c)~Standard deviation of verdicts within each bin against the paired noise scale $100\sqrt{d}/n$.}
    \label{fig:app_dilution_footprint}
\end{figure*}

\begin{proposition}[Footprint law]
\label{prop:footprint}
Suppose an edit flips $d$ validation answers, each independently a fix with probability $q$ and a break otherwise. With $F\sim\mathrm{Binomial}(d,q)$ fixes, the verdict $\hat\delta=100\,(2F-d)/n$ satisfies
\begin{equation}
    \mathbb{E}[\hat\delta]=\frac{100\,d\,(2q-1)}{n},
    \qquad
    \operatorname{Var}[\hat\delta]=\frac{4\cdot10^4\,d\,q(1-q)}{n^2}\leq\frac{10^4\,d}{n^2}.
    \label{eq:footprint_law}
\end{equation}
Hence, when $q<1/2$, the expected effect decreases linearly in $d$ while the noise grows as $\sqrt{d}$.
\end{proposition}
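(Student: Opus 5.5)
The statement is a direct computation from the binomial model, so the plan is to reduce everything to the first two moments of $F$. First I would check the counting identity behind the verdict: with $d$ flipped answers, $F$ of them fixes and $d-F$ breaks, the paired counts of Eq.~(\ref{eq:app_verdict_counts}) are $n_{10}=F$ and $n_{01}=d-F$. Hence $\hat\delta=100\,(n_{10}-n_{01})/n=100\,(2F-d)/n$, an affine function of $F$ with slope $200/n$ and intercept $-100d/n$. Unflipped answers contribute $D_q=0$ and drop out, so $d$ is treated as fixed and only $F$ is random.

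Next I would substitute the binomial moments $\mathbb{E}F=dq$ and $\operatorname{Var}F=dq(1-q)$. Linearity of expectation gives $\mathbb{E}\hat\delta=100\,(2dq-d)/n=100\,d\,(2q-1)/n$. The affine scaling rule for variance, under which the intercept drops out, gives $\operatorname{Var}\hat\delta=(200/n)^2\,dq(1-q)=4\cdot10^4\,d\,q(1-q)/n^2$.

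For the inequality, I would use $q(1-q)\le 1/4$ on $[0,1]$ (by AM--GM, with equality at $q=1/2$). Substituting gives $\operatorname{Var}\hat\delta\le10^4 d/n^2$, which is exactly the paired noise scale of Eq.~(\ref{eq:app_paired_var}). This bound is what justifies using $\sigma_H=100\sqrt{d}/n$ as a conservative standard error.

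For the final clause, take $q<1/2$. Then $2q-1<0$, so $\mathbb{E}\hat\delta$ is a negative constant times $d$, which decreases linearly in $d$. The standard deviation is $200\sqrt{q(1-q)}\sqrt d/n$, which grows as $\sqrt d$.

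There is no real obstacle here. The only point needing care is interpretive: the statement conditions on $d$ and assumes independent flips with a common $q$, so the result describes the verdict given its footprint. I would state this explicitly rather than try to model the randomness of $d$ itself.
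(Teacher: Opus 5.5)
Your proposal is correct and follows the paper's proof: the paper also substitutes the binomial moments $\mathbb{E}[F]=dq$ and $\operatorname{Var}[F]=dq(1-q)$ into $\hat\delta=100(2F-d)/n$ and obtains the bound from $q(1-q)\le 1/4$. Your extra checks, namely the identification $n_{10}=F$, $n_{01}=d-F$ and the explicit $\sqrt{d}$ scaling of the standard deviation, are accurate. They spell out what the paper leaves implicit.
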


\noindent\textit{Proof.}
Substitute $\mathbb{E}[F]=dq$ and $\operatorname{Var}[F]=dq(1-q)$ into $\hat\delta=100(2F-d)/n$; the inequality uses $q(1-q)\leq1/4$.\hfill$\square$

\paragraph{Coupling enlarges the footprint.}
The median footprint is 50 answers in the full harness against 10, 23, and 14 for one, two, and three families (Figure~\ref{fig:app_dilution_footprint}a). This is not because full-harness proposals declare more changes: edits touching at most one family still flip $47.6$ answers on average, against $19.8$ with retrieval alone. \textbf{In a coupled harness a local change alters what every downstream stage receives}---retrieved frames, tool outputs, reflection prompts, and memory---so its effect is not local.

\paragraph{Larger footprints are more harmful and noisier.}
The fix probability is nearly constant across spaces ($q=0.36$--$0.39$; pooled $0.37$ $[0.36,0.39]$). Because $q<1/2$ near a strong incumbent, the mean verdict falls roughly linearly in $d$, as Proposition~\ref{prop:footprint} predicts (Figure~\ref{fig:app_dilution_footprint}b; $R^2=0.39$), and full-harness edits are the most harmful ($-3.13$ points over all candidates, against $-0.90$, $-1.53$, and $-1.02$; Table~\ref{tab:dilution_summary} uses the first 87 candidates of each run). Within-bin dispersion follows $100\sqrt{d}/n$ up to $d=40$ and exceeds it beyond (panel c). This mirrors pleiotropy in Fisher's geometric model, where mutations affecting more traits are less likely to help~\citep{fisher1930genetical,orr2000adaptation}; here, coupling sets the pleiotropic reach.

\section{The Early-Experience Trap}
\label{app:early_experience_trap}

This section supports Section~\ref{sec:early_experience_trap}. Its conclusions are:
\begin{itemize}[leftmargin=*,itemsep=1pt,topsep=2pt]
    \item \textbf{Verdicts are weak and biased.} On Video-MME only $18\%$ of verdict variance reflects true differences ($8\%$ in iterations 1--10), and the champion's cached score is inflated by $\widehat\omega\approx1.0$ point (Appendix~\ref{app:eet_verdict_model}).
    \item \textbf{The policy ignores this.} The held-out win rate changes gradually with the verdict, yet the loop continues winners with probability $0.77$ and non-winners with $0.076$, and treats false losers like true losers (Appendix~\ref{app:eet_verdicts}).
    \item \textbf{Theory.} Because rejected directions are rarely resampled, champion inflation ($\omega>0$) makes a weak beneficial direction more likely to be lost; for the frame budget the loss probability is $0.54$, consistent with the 6 of 12 runs that never expand it (Appendix~\ref{app:eet_model}).
    \item \textbf{Consequences.} Seeds of one setting diverge $2.8\times$ more than independent sampling predicts, and runs that stay at 16 frames lose $2.7$ held-out points (Appendices~\ref{app:eet_allocation} and~\ref{app:eet_traces}).
\end{itemize}

\subsection{Corpus and Definitions}
\label{app:eet_protocol}

We analyze 44 \basemethod{} runs (Table~\ref{tab:eet_settings}): 14 on Video-MME with 2{,}229 candidates and 30 on LongVideoBench with 2{,}577. Held-out evaluations, never shown to the proposer, exist for every candidate of three Video-MME runs and for frontier or best candidates elsewhere.

\begin{table}[t]
\centering
\small
\caption{\textbf{The 44 \basemethod{} runs analyzed in Section~\ref{sec:early_experience_trap}.} Video-MME runs search one or a few mechanism families for 50 iterations of three candidates (one run: 100 iterations of five); the 12 retrieval-family runs (all but self-reflection) form the frame-budget contrast of Figure~\ref{fig:diagnosis}(c). LongVideoBench runs search the full harness for 30 iterations under the feedback arms of Section~\ref{sec:overfitting}, three seeds at each of three validation fractions.}
\label{tab:eet_settings}
\setlength{\tabcolsep}{4pt}
\begin{tabular}{@{}llc@{}}
\toprule
Benchmark & Setting & Runs \\
\midrule
Video-MME & retrieval only (16-frame seeds; one from 32-frame AKS) & 6 \\
          & retrieval + frame selection & 4 \\
          & retrieval + perception tools (one with $100\times5$) & 2 \\
          & self-reflection & 2 \\
\midrule
LongVideoBench & Blindfolded / Trace / Multimodal feedback & $9+9+9$ \\
               & clade-UCB parent suggestions (33\% ladder) & 3 \\
\bottomrule
\end{tabular}
\end{table}

Each candidate $c$ is compared with the incumbent, the validation champion when $c$ is proposed, by its verdict $\hat\delta_c$ and $z_c=\hat\delta_c/\widehat{\mathrm{SE}}_c$ with $\widehat{\mathrm{SE}}_c=100\sqrt{d_c}/n$ (Appendix~\ref{app:verdict_model}); $\delta^{\mathrm{test}}_c$ is the held-out analogue. A candidate is \emph{continued} if a later candidate names it as parent, and is a \emph{false loser} if $\hat\delta_c\le0$ but $\delta^{\mathrm{test}}_c\ge0$. A retrieval-family run \emph{expands the frame budget} if its deployed champion uses more than $1.2\times$ the 16-frame baseline. A within-run regression of held-out accuracy on mechanism families, iteration, and $\log_2$ frames over 491 tested candidates values one doubling of the frame budget at $\widehat{\mu}_f=0.65\pm0.10$\,pp (observational).

\subsection{A Verdict Model with Incumbent Optimism}
\label{app:eet_verdict_model}

For a candidate--incumbent pair evaluated on both splits, Eq.~(\ref{eq:verdict}) specializes to
\begin{equation}
    \hat{\delta}=\Delta-\omega+u+\varepsilon,\qquad
    \delta^{\mathrm{test}}=\Delta+\varepsilon',\qquad
    \Delta\sim\mathcal{N}(m,\tau^2),
    \label{eq:eet_bivariate}
\end{equation}
with independent sampling errors $\varepsilon,\varepsilon'$ of variances $\sigma^2,\sigma_t^2$, non-transferable fit $u$ (the term $S$ of Eq.~(\ref{eq:verdict})) of variance $s^2$, and incumbent optimism $\omega\geq0$ from the optimizer's curse~\citep{smith2006optimizer}. We take $u$ zero-mean, so $\widehat{\omega}$ below estimates the incumbent's net optimism $\omega-\mathbb{E}\,S$, the total amount by which a verdict understates $\Delta$ on average; the two coincide when candidate and incumbent carry equal fit on average. The method-of-moments estimators are
\begin{equation}
\begin{aligned}
    \widehat{m}&=\overline{\delta^{\mathrm{test}}}, &
    \widehat{\omega}&=\overline{\delta^{\mathrm{test}}}-\overline{\hat{\delta}}, &
    \widehat{\tau}^2&=\widehat{\operatorname{Cov}}(\hat{\delta},\delta^{\mathrm{test}}),\\
    \widehat{\sigma}^2&=\overline{\widehat{\mathrm{SE}}^2}, &
    \widehat{s}^2&=\widehat{\operatorname{Var}}(\hat{\delta})-\widehat{\tau}^2-\widehat{\sigma}^2 .
\end{aligned}
\label{eq:eet_moments}
\end{equation}

\begin{proposition}[Evidential value of a verdict]
\label{prop:evidence_value}
Under Eq.~(\ref{eq:eet_bivariate}), let $\kappa=\tau^2/(\tau^2+\sigma^2+s^2)$ and $\mu(\hat{\delta})=m+\kappa\,(\hat{\delta}+\omega-m)$. Then
\begin{equation*}
    \Pr(\Delta>0\mid\hat{\delta})=\Phi\!\left(\frac{\mu(\hat{\delta})}{\tau\sqrt{1-\kappa}}\right),\qquad
    \Pr(\delta^{\mathrm{test}}>0\mid\hat{\delta})=\Phi\!\left(\frac{\mu(\hat{\delta})}{\sqrt{\tau^2(1-\kappa)+\sigma_t^2}}\right).
\end{equation*}
Both equal $1/2$ at $\hat{\delta}^{\ast}=m-\omega-m/\kappa$ ($=-\omega$ when $m=0$), and their slopes in $\hat{\delta}$ increase with the verdict's reliability $\kappa$ and vanish as $\kappa\to0$.
\end{proposition}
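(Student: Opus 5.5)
The plan is to treat Eq.~(\ref{eq:eet_bivariate}) as a jointly Gaussian model and read everything off standard bivariate-normal conditioning. I will make explicit the assumption the statement uses implicitly: $u\sim\mathcal{N}(0,s^2)$, $\varepsilon\sim\mathcal{N}(0,\sigma^2)$, $\varepsilon'\sim\mathcal{N}(0,\sigma_t^2)$ and $\Delta\sim\mathcal{N}(m,\tau^2)$ are mutually independent, and $\omega$ is a constant. Then $(\Delta,\hat\delta,\delta^{\mathrm{test}})$ is a Gaussian vector, since each coordinate is an affine function of the independent components. Its relevant moments are
\[
\mathbb{E}\hat\delta=m-\omega,\qquad \operatorname{Var}\hat\delta=\tau^2+\sigma^2+s^2,\qquad \operatorname{Cov}(\Delta,\hat\delta)=\operatorname{Cov}(\delta^{\mathrm{test}},\hat\delta)=\tau^2 .
\]

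The first step is the posterior of $\Delta$ given the verdict. Linear regression for Gaussians gives the conditional mean
\[
\mathbb{E}[\Delta\mid\hat\delta]=m+\kappa\,\bigl(\hat\delta-(m-\omega)\bigr)=\mu(\hat\delta),
\]
with slope $\tau^2/\operatorname{Var}\hat\delta=\kappa$. The conditional variance is $\tau^2-\tau^4/(\tau^2+\sigma^2+s^2)=\tau^2(1-\kappa)$. Hence
\[
\Pr(\Delta>0\mid\hat\delta)=\Phi\bigl(\mu(\hat\delta)/(\tau\sqrt{1-\kappa})\bigr).
\]

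The second step handles the held-out comparison. Write $\delta^{\mathrm{test}}=\Delta+\varepsilon'$, where $\varepsilon'$ is independent of $(\Delta,\hat\delta)$. Conditionally on $\hat\delta$, $\delta^{\mathrm{test}}$ is then the sum of an independent $\mathcal{N}(\mu(\hat\delta),\tau^2(1-\kappa))$ variable and an $\mathcal{N}(0,\sigma_t^2)$ variable. So it is Gaussian with the same mean and variance $\tau^2(1-\kappa)+\sigma_t^2$, which gives the second formula.

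The third step locates the crossing point. Both probabilities equal $1/2$ exactly when $\mu(\hat\delta)=0$. Solving $m+\kappa(\hat\delta+\omega-m)=0$ gives $\hat\delta^\ast=m-\omega-m/\kappa$. This reduces to $-\omega$ when $m=0$, which is the sense in which champion inflation shifts the break-even verdict below zero.

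The step needing care is the slope claim, since the derivative of $\Phi(a\,\mu(\hat\delta))$ carries a density factor $\phi(\cdot)$ evaluated at a point that itself depends on $\kappa$. I will state it for the standardized slope, namely the coefficient of $\hat\delta$ inside $\Phi$, which also equals the slope at the crossing point up to the constant $\phi(0)$. These coefficients are
\[
\frac{\kappa}{\tau\sqrt{1-\kappa}}
\qquad\text{and}\qquad
\frac{\kappa}{\sqrt{\tau^2(1-\kappa)+\sigma_t^2}} .
\]
Both are strictly increasing in $\kappa\in(0,1)$ for fixed $\tau$ and $\sigma_t$: the numerators increase and the denominators decrease. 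Both tend to $0$ as $\kappa\to0$, so the curves flatten to the constant $1/2$.

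I expect this interpretive point to be the only real obstacle. The rest is routine Gaussian algebra, which I will not grind through beyond the moment computations above.
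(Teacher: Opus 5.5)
Your proof is correct and follows the paper's argument: treat $(\Delta,\hat\delta,\delta^{\mathrm{test}})$ as jointly Gaussian, use Gaussian conditioning to get mean $\mu(\hat\delta)$ with variances $\tau^2(1-\kappa)$ and $\tau^2(1-\kappa)+\sigma_t^2$, then solve $\mu(\hat\delta)=0$ for the crossing point. Your reading of the slope claim as the standardized coefficient $\kappa/(\tau\sqrt{1-\kappa})$ (equivalently, the slope at the crossing point) covers a step the paper leaves implicit, and it is the right way to make that claim precise, since the pointwise slope away from the crossing is not monotone in $\kappa$.
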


\noindent\textit{Proof.}
$(\Delta,\hat{\delta},\delta^{\mathrm{test}})$ is jointly Gaussian with
\[
    \operatorname{Cov}(\Delta,\hat{\delta})=\operatorname{Cov}(\delta^{\mathrm{test}},\hat{\delta})=\tau^2,
    \qquad
    \operatorname{Var}(\hat{\delta})=\tau^2+\sigma^2+s^2.
\]
Gaussian conditioning gives
\[
    \mathbb{E}[\Delta\mid\hat{\delta}]=\mathbb{E}[\delta^{\mathrm{test}}\mid\hat{\delta}]=\mu(\hat{\delta}),
    \qquad
    \operatorname{Var}(\Delta\mid\hat{\delta})=\tau^2(1-\kappa),
    \qquad
    \operatorname{Var}(\delta^{\mathrm{test}}\mid\hat{\delta})=\tau^2(1-\kappa)+\sigma_t^2,
\]
and setting $\mu(\hat{\delta})=0$ yields $\hat{\delta}^\ast$.\hfill$\square$

\paragraph{Implications.}
\textbf{A verdict of zero is not neutral}: with $\omega>0$, a challenger that ties the incumbent is more likely than not to be better, and it must lose by about $\omega$ before it becomes a coin flip. When $\kappa$ is small, the rational probability of continuing a line changes only gradually with $\hat\delta$.

\paragraph{Estimates.}
Table~\ref{tab:eet_verdict_model} reports run-bootstrap estimates after dropping clones and execution failures. On Video-MME the reliability is only $0.18$, and lowest early ($0.08$ versus $0.24$ after iteration~10) because early edits change more answers. The optimism $\widehat\omega\approx1.0$\,pp is stable over time and in the fully tested runs. LongVideoBench verdicts are more reliable ($0.48$) and less biased ($\widehat\omega=0.44$), but the policy is the same.

\begin{table}[t]
\centering
\small
\caption{\textbf{Verdict-model estimates} (Eq.~(\ref{eq:eet_moments}); pp). Reliability is $\tau^2/(\tau^2+\sigma^2)$, i.e., $\kappa$ of Proposition~\ref{prop:evidence_value} with $s^2=0$. Brackets: 95\% run-bootstrap intervals.}
\label{tab:eet_verdict_model}
\setlength{\tabcolsep}{4pt}
\begin{tabular}{@{}lrcccc@{}}
\toprule
Subset & Pairs & $\widehat{\tau}$ & $\widehat{\sigma}$ & $\widehat{\omega}$ & Reliability \\
\midrule
Video-MME, all & 387 & 0.49 [0.31, 0.58] & 1.04 [0.90, 1.23] & 1.02 [0.64, 1.39] & 0.18 [0.09, 0.24] \\
\quad iterations 1--10 & 86 & 0.45 [0.18, 0.66] & 1.49 [1.40, 1.59] & 1.01 [0.69, 1.32] & 0.08 \\
\quad iterations $>10$ & 301 & 0.48 & 0.87 & 1.02 & 0.24 \\
\quad fully tested runs & 296 & 0.46 & 1.00 & 1.11 & 0.18 \\
LongVideoBench, all & 158 & 1.01 [0.72, 1.22] & 1.06 [0.94, 1.16] & 0.44 [0.21, 0.70] & 0.48 [0.32, 0.57] \\
\bottomrule
\end{tabular}
\end{table}

\subsection{Continuation Policy versus Evidential Value}
\label{app:eet_verdicts}

Binning verdicts by $z$, Proposition~\ref{prop:evidence_value} tracks the held-out win rate (mean absolute deviation $0.035$ on Video-MME, $0.085$ on LongVideoBench), which crosses $1/2$ at $z\approx-0.75$ and $-0.25$ (Figure~\ref{fig:eet_theory}a). \textbf{The policy, by contrast, is nearly a step function at $z=0$}: on Video-MME a one-question loss is continued with probability $0.070$ and a one-question win with $0.685$; winners overall with $0.77$ and non-winners with $\rho=0.076$ (odds ratio 41; LongVideoBench 34). False losers are abandoned as often as true losers ($93\%$ versus $96\%$ on Video-MME), although $37\%$ of tested non-winners beat their incumbent on held-out data ($33\%$ on LongVideoBench).

\begin{figure*}[t]
    \centering
    \includegraphics[width=0.99\textwidth]{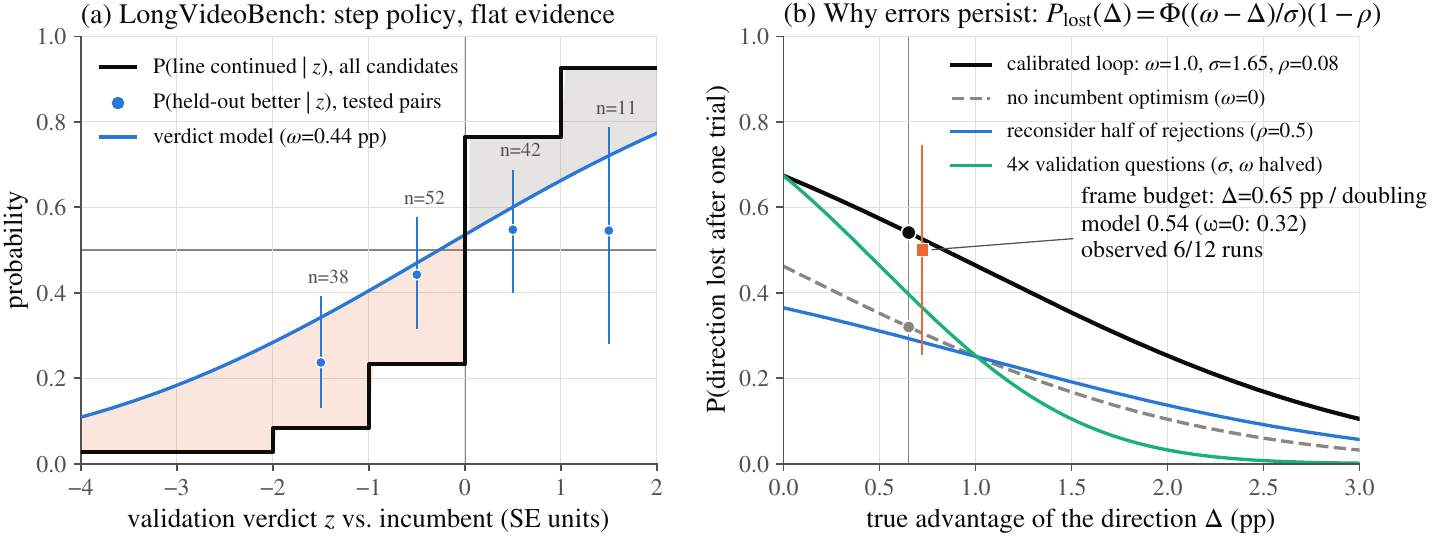}
    \caption{\textbf{Theory of the trap.} (a)~LongVideoBench counterpart of Figure~\ref{fig:diagnosis}(f): the continuation policy is a step at $z=0$, while the held-out win rate of the same verdicts changes gradually; the curve is Proposition~\ref{prop:evidence_value} with the fitted parameters. (b)~Probability that a beneficial direction is lost after one trial (Proposition~\ref{prop:hotstove}) at the Video-MME calibration, without incumbent optimism, with a policy that reconsiders half of all rejections, and with four times as many validation questions (which halves $\sigma$ and, being selection noise, $\omega$). Points mark the frame-budget direction; the square is the observed fraction of retrieval-family runs that never expand the budget (Wilson 95\% interval).}
    \label{fig:eet_theory}
\end{figure*}

\subsection{Why Early Errors Persist}
\label{app:eet_model}

\begin{proposition}[Persistence under endogenous sampling]
\label{prop:hotstove}
Consider a direction with true advantage $\Delta$ over the incumbent. Its first trial yields $\hat{\delta}_1=\Delta-\omega+\varepsilon_1$ with $\varepsilon_1\sim\mathcal{N}(0,\sigma^2)$. The loop continues the direction if $\hat{\delta}_1>0$; otherwise it reconsiders it with probability $\rho$, independently of $\Delta$, and otherwise never evaluates it again. Then:
\begin{enumerate}[label=(\alph*),leftmargin=*,itemsep=1pt,topsep=2pt]
    \item the direction is lost with probability
    \[
        P_{\mathrm{lost}}(\Delta)=\Phi\!\left(\frac{\omega-\Delta}{\sigma}\right)(1-\rho);
    \]
    \item $P_{\mathrm{lost}}$ increases with $\omega$, decreases with $\Delta$ and $\rho$, and increases with $\sigma$ if and only if $\Delta>\omega$;
    \item as $\Delta/\sigma\to0$ with $\omega/\sigma$ fixed, $P_{\mathrm{lost}}\to\Phi(\omega/\sigma)(1-\rho)>(1-\rho)/2$ whenever $\omega>0$, so champion inflation makes a weak but beneficial direction more likely to be lost, and it is more likely lost than kept once $\Phi(\omega/\sigma)(1-\rho)>1/2$;
    \item after a rejection the loop's only estimate is $\hat\delta_1$, which is biased downward,
    \[
        \mathbb{E}[\hat{\delta}_1\mid\hat{\delta}_1\leq0]=\Delta-\omega-\sigma\,\frac{\phi(a)}{\Phi(-a)}<\Delta,
        \qquad a=\frac{\Delta-\omega}{\sigma},
    \]
    and is never corrected, whereas continued directions receive new, independent evidence.
\end{enumerate}
\end{proposition}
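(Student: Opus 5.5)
The plan is a direct computation from the Gaussian first-trial verdict, followed by monotonicity arguments and a truncated-normal mean. For (a), I would standardize $\hat\delta_1$. Since $\hat\delta_1\sim\mathcal{N}(\Delta-\omega,\sigma^2)$, the event $\hat\delta_1\le0$ has probability $\Phi\bigl((\omega-\Delta)/\sigma\bigr)$. The direction is lost exactly when it is rejected and then not reconsidered. By assumption the reconsideration coin is independent of $\Delta$, and I would take it to be independent of $\varepsilon_1$ as well. The two probabilities therefore multiply, giving $P_{\mathrm{lost}}$.

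For (b), I would differentiate $P_{\mathrm{lost}}$ term by term.
\begin{itemize}[leftmargin=*,itemsep=1pt,topsep=2pt]
  \item The argument $x=(\omega-\Delta)/\sigma$ increases in $\omega$ and decreases in $\Delta$. Since $\Phi$ is increasing, the monotonicity in $\omega$ and $\Delta$ follows.
  \item The factor $(1-\rho)$ decreases in $\rho$.
  \item For $\sigma$, we have $\partial x/\partial\sigma=-(\omega-\Delta)/\sigma^2$. This is positive exactly when $\Delta>\omega$, which gives the stated ``if and only if''.
\end{itemize}
One edge case needs a word: when $\rho=1$ the loss probability is identically zero, so the strict statements should be read for $\rho<1$. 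For (c), continuity of $\Phi$ gives the limit $\Phi(\omega/\sigma)(1-\rho)$. We then use $\Phi(t)>1/2$ for $t>0$, and compare with $1-P_{\mathrm{lost}}$ to get the ``more likely lost than kept'' threshold at $1/2$.

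For (d), I would use the truncated-normal mean. Write $\hat\delta_1=(\Delta-\omega)+\sigma Z$ with $Z$ standard normal. The event $\hat\delta_1\le0$ is the event $Z\le -a$, where $a=(\Delta-\omega)/\sigma$. The standard identity $\mathbb{E}[Z\mid Z\le -a]=-\phi(a)/\Phi(-a)$ then gives the displayed formula. The inequality $<\Delta$ follows because $-\omega\le0$ and the inverse Mills term is strictly positive.

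The ``never corrected'' clause is a model statement rather than a computation. A rejected direction that is not reconsidered receives no further trials, so $\hat\delta_1$ remains the loop's only estimate. Continued directions, by contrast, receive fresh draws $\varepsilon_j$ that are independent of $\varepsilon_1$, so their running estimates are not conditioned on the selection event. I expect no real obstacle here. The only step needing care is (b) at $\Delta=\omega$, where the $\sigma$-derivative vanishes, and stating the $\sigma$-dependence as strict monotonicity off that point settles the ``iff''.
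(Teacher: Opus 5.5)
Your proposal is correct and follows essentially the same route as the paper: for (a) you standardize the Gaussian verdict and multiply by $1-\rho$; for (b) you use monotonicity of $\Phi$ together with the sign of the $\sigma$-derivative; for (c) you use $\Phi(x)>1/2$ for $x>0$; and for (d) you use the mean of a Gaussian truncated above at zero. Your two caveats, that the reconsideration coin is independent of $\varepsilon_1$ and that the strict claims need $\rho<1$, are left implicit in the paper and are harmless refinements.
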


\noindent\textit{Proof.}
(a) $\Pr(\hat{\delta}_1\leq0)=\Phi((\omega-\Delta)/\sigma)$, and a rejected direction is lost unless independently reconsidered, with probability $1-\rho$.

\noindent (b) Monotonicity in $\omega$, $\Delta$, and $\rho$ follows from that of $\Phi$; for $\sigma$,
\[
    \frac{\partial P_{\mathrm{lost}}}{\partial\sigma}=-\,\phi\!\left(\frac{\omega-\Delta}{\sigma}\right)\frac{\omega-\Delta}{\sigma^{2}}\,(1-\rho),
\]
which is positive if and only if $\Delta>\omega$.

\noindent (c) follows from (a), since $\Phi(x)>1/2$ for $x>0$. (d) is the mean of a Gaussian truncated above at zero.\hfill$\square$

\paragraph{Interpretation.}
Part (d) is the hot-stove asymmetry of learning from self-selected experience~\citep{denrell2001adaptation} and the incomplete learning of greedy allocation~\citep{rothschild1974two,brezzi2000incomplete}: negative errors persist because the alternative is never resampled. Persistent memory makes this worse, as labels such as ``MINED OUT'' set $\rho\approx0$ for an entire family.

\paragraph{Calibration for the frame budget.}
With $\Delta=\widehat{\mu}_f=0.65$\,pp, $\sigma=1.65$\,pp (median early paired scale), $\omega=1.01$\,pp (Table~\ref{tab:eet_verdict_model}), and $\rho=0.076$,
\[
    \Pr(\text{reject})=0.585,\qquad
    \mathbf{P_{\mathrm{lost}}=0.54}\ \ [0.39,\,0.63]\ \text{over plausible }(\omega,\sigma,\rho),\qquad
    P_{\mathrm{lost}}\big|_{\omega=0}=0.32 .
\]
Six of the 12 retrieval-family runs never expand the budget ($0.50$; Wilson $[0.25,0.75]$), consistent with the model, though too few to reject $\omega=0$. Figure~\ref{fig:eet_theory}(b) compares remedies: reconsidering half of all rejections lowers $P_{\mathrm{lost}}$ to $0.29$, quadrupling validation questions to $0.40$, and judging a probe against its own parent removes $\omega$ directly.

\subsection{Path Dependence across Seeds}
\label{app:eet_allocation}

If early noisy verdicts steer exploration, seeds of one setting should diverge more than independent samples would. Dividing the cross-seed standard deviation of each family's candidate share by its binomial value, \textbf{the median ratio is $2.8$ on Video-MME and $1.7$ on LongVideoBench}, the signature of reinforcement processes that amplify early outcomes~\citep{arthur1989competing}; re-ranking, for example, takes $93\%$ of one seed's candidates but $2$--$11\%$ in the other four. The held-out cost of staying at the baseline budget ($64.9$ versus $67.6\%$, $p=5\times10^{-5}$) holds under alternative definitions of ``stays'' and without the AKS-seeded run ($p\leq0.003$).

\subsection{Case Studies}
\label{app:eet_traces}

\paragraph{A weak loss becomes a prohibition, or not.}
In retrieval-only seed~1, a 32-frame probe loses by 10 questions at iteration~8 ($z=-1.41$); memory then records frame count as ``MINED OUT---do NOT re-propose'', and no later candidate exceeds 16 frames. In seed~4, a 33.9-frame probe loses by 12 questions ($z=-1.52$), but a sibling at 18.3 frames wins by 10 questions ($z=+1.44$); memory keeps the direction open, and the champion reaches 25--30 frames and $67.3\%$ held-out, against $64.8\%$ for seed~1. A Bayes-rational policy (Proposition~\ref{prop:evidence_value}) would still give seed~1's rejected direction probability $\approx0.29$.

\paragraph{Over-commitment to an early winner.}
In the same seed~1, after a re-ranking candidate wins by four questions at iteration~5 ($z=+0.76$), all 105 later candidates are re-ranking variants and $96\%$ fail to beat the champion (Figure~\ref{fig:eet_overcommit}a). Four promotions add $2.4$ validation points but only $0.04$ held-out points, and 23 of 115 tested candidates beat the deployed champion on held-out data (panel b).

\begin{figure*}[t]
    \centering
    \includegraphics[width=0.99\textwidth]{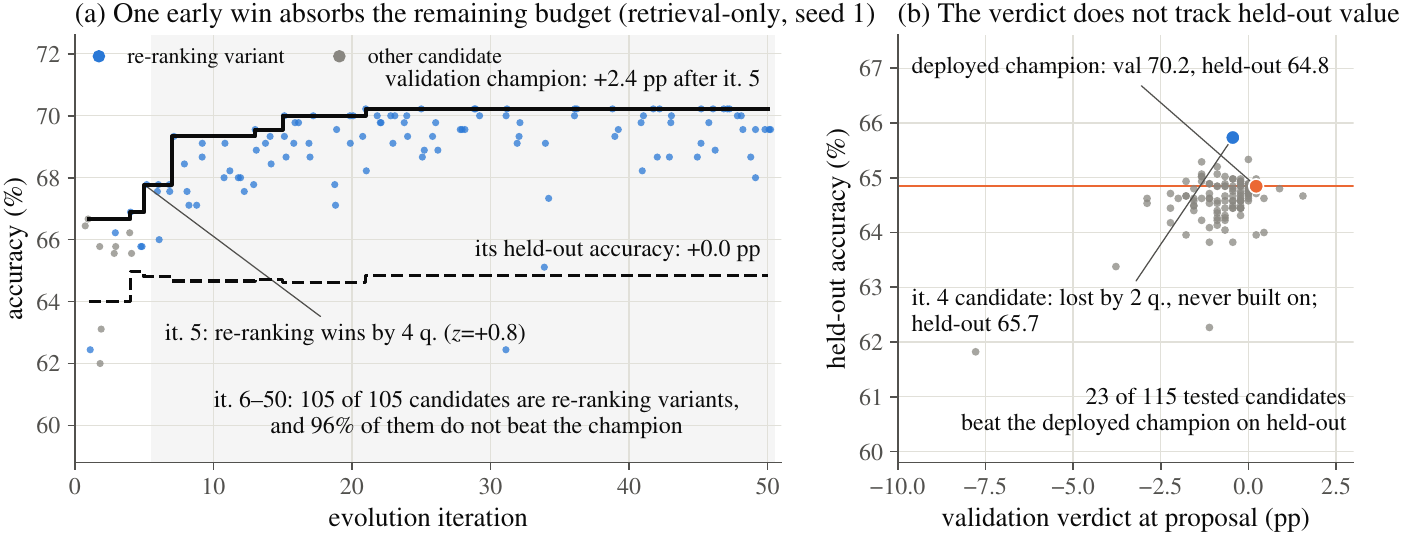}
    \caption{\textbf{Over-commitment to an early winner} (Video-MME, retrieval-only, seed~1; the run of Figure~\ref{fig:diagnosis}(c)). (a)~Validation accuracy of every candidate, the validation champion (solid), and the champion's held-out accuracy (dashed); iterations 6--50 are shaded. (b)~Held-out accuracy of the 115 tested candidates against their validation verdict at proposal. The horizontal line marks the deployed champion's held-out accuracy.}
    \label{fig:eet_overcommit}
\end{figure*}

\section{Why Simple Remedies Are Insufficient}
\label{sec:simple_remedies}

This section supports the discussion of simple remedies in Section~\ref{sec:meta_harness_failure_modes}. Its conclusions are:
\begin{itemize}[leftmargin=*,itemsep=1pt,topsep=2pt]
    \item \textbf{Parent selection.} Changing which harness the proposer builds on reshapes the search as intended, but does not improve held-out accuracy (Appendix~\ref{sec:search_algorithm}).
    \item \textbf{Stronger initialization.} Starting from AKS speeds up early validation progress, but the final harness is no better than those evolved from simple seeds (Appendix~\ref{sec:better_initialization}).
\end{itemize}
Neither remedy changes the verdicts on which the loop acts, so neither removes the failure modes.

\subsection{Parent Selection}
\label{sec:search_algorithm}

\paragraph{Setting.}
We run 18 full-harness \basemethod{} runs (1{,}556 candidates, 30 iterations of three) on Video-MME and LongVideoBench. Within a benchmark, arms differ only in the list of suggested parents shown to the proposer each iteration: none (\textbf{plain loop}); the current champion (\textbf{greedy}); six names ranked by an upper confidence bound over the lineage tree, the best score in a subtree plus an exploration bonus $\gamma/\sqrt{\text{subtree size}}$ with $\gamma=1.5$ (\textbf{clade-UCB}); or six names drawn uniformly from the archive (\textbf{random}). Video-MME runs all four arms and LongVideoBench compares the plain loop with clade-UCB, three seeds each. The Video-MME plain-loop runs are the full-harness runs of the search-space ladder.

\paragraph{Results.}
\textbf{No parent rule yields a consistent held-out advantage} (Figure~\ref{fig:search_algorithm}a). On Video-MME the deployed harnesses reach $65.7$, $66.9$, $67.2$, and $67.4\%$ for the plain loop, greedy, clade-UCB, and random parents; among the three rules every pairwise difference is at most $0.5$ points ($p\geq0.74$, Welch), and all seed ranges overlap. On LongVideoBench, clade-UCB trails the plain loop by $1.01\pm1.17$ points over seed-paired per-question comparisons ($p=0.48$). The rules do not separate on validation either (Figure~\ref{fig:search_algorithm}b): clade-UCB trails the plain loop on LongVideoBench at every matched budget.

\paragraph{The manipulation worked.}
The proposer builds on a listed parent in $94$--$100\%$ of clade-UCB, $74$--$86\%$ of greedy, and $80$--$86\%$ of random-parent proposals. On Video-MME the share of candidates bred from the champion falls from $47\%$ (greedy) to $30\%$ (clade-UCB) and $8\%$ (random), while distinct parents rise from 19 to 32 and 47; on LongVideoBench clade-UCB halves the spread of validation scores (3.6 versus 6.6 points). Yet \textbf{no search-shape statistic predicts held-out gain} across the 18 runs (top-parent share $\rho=-0.10$, $p=0.69$; distinct parents $\rho=-0.01$, $p=0.98$; Figure~\ref{fig:search_algorithm}c). What does is how quickly validation rose (candidates to $90\%$ of the final gain, $\rho=-0.53$, $p=0.025$): the quality of the harnesses the proposer happened to write, not the rule that chose their parents.

\paragraph{Interpretation.}
A parent rule decides which lineage the next verdict concerns, not how informative that verdict is. Every arm still compares single candidates against a selected champion on the same validation questions, so $S$, $\omega$, and $\varepsilon$ in Eq.~(\ref{eq:verdict}) are unchanged and the evidence per mechanism combination cannot rise. This motivates organizing evolution around mechanisms, where evidence can be concentrated, rather than lineages. Since no rule differs significantly from the others in this matched comparison, the loops inside \method{}, such as the warm-up, keep the default of leaving the parent choice to the proposer, and we do not repeat the parent-selection comparison for \method{}.

\begin{figure*}[t]
    \centering
    \includegraphics[width=\textwidth]{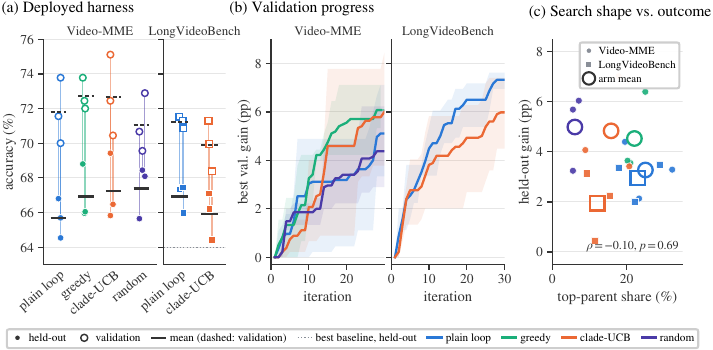}
    \caption{\textbf{Parent selection.} (a)~Validation (hollow) and held-out (filled) accuracy of each run's deployed harness; bars mark arm means. (b)~Best-so-far validation gain over the best seeded harness (mean and seed range). (c)~Share of candidates bred from the most-used parent against held-out gain over the best baseline; large outlined markers are arm means.}
    \label{fig:search_algorithm}
\end{figure*}

\subsection{Stronger Initialization}
\label{sec:better_initialization}

\paragraph{Setting.}
We run seven Video-MME runs (924 candidates, 50 iterations of three). Six start from simple 16-frame seeds (three evolve retrieval alone, three retrieval with frame selection); the seventh evolves the retrieval-only space from an Adaptive Keyframe Sampling (AKS)~\citep{tang2025adaptive} seed harness, which allocates 32 frames by CLIP relevance (distinct from the fixed AKS baselines of Figure~\ref{fig:initial_metaharness}). AKS starts at $68.7\%$ validation and $67.1\%$ held-out accuracy, $3.4$ and $4.0$ points above the simple seeds.

\paragraph{Results.}
\textbf{A stronger start accelerates early progress but not the endpoint} (Figure~\ref{fig:better_initialization}). The AKS run reaches $70\%$ validation accuracy at iteration~2 and $72\%$ at iteration~7, against 4--15 and 21--45 iterations for retrieval-only runs from simple seeds, which pass the AKS starting level within 4--18 iterations. Its deployed harness ($74.0\%$ validation, $68.2\%$ held-out) lies inside the range of the six other runs ($69.6$--$74.2\%$ and $64.8$--$68.3\%$) and is statistically tied with the three best (paired differences $-0.1$, $+0.8$, $+1.5$ points; $p\geq0.06$). Evolution adds only $1.1$ points to AKS ($p=0.13$), against $1.4$--$5.8$ points for simple seeds, four of six significantly, while using 36 frames against 14--25.

\paragraph{Interpretation.}
A stronger start mainly shortens the search to competitive performance; with one AKS run and a larger initial frame budget, it does not establish a reproducible final gain. It does suggest a way around the early-experience trap: supplying a useful mechanism before chance validation wins shape exploration, which motivates the instrumented warm-up of \method{}.

\begin{figure*}[t]
    \centering
    \includegraphics[width=\textwidth]{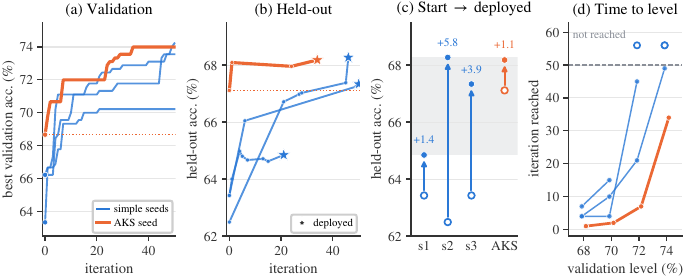}
    \caption{\textbf{Stronger initialization} (Video-MME, retrieval only; blue: three runs from simple seeds, orange: the AKS-seeded run). (a)~Best-so-far validation accuracy. (b)~Held-out accuracy of tested running champions; stars mark deployed harnesses. (c)~Held-out accuracy of the starting point (hollow) and deployed harness (filled); shading spans the simple-seed endpoints. (d)~Iteration at which each validation level is first reached; hollow markers are levels not reached in 50 iterations.}
    \label{fig:better_initialization}
\end{figure*}

\section{Implementation and Statistical Details of \method{}}
\label{app:method_details}

This section specifies what Section~\ref{sec:method} leaves out. Each seed runs \emph{WARM} (Stage~I), \emph{NOMINATE}, \emph{PROBE}, and \emph{RACE} (Stage~II), and \emph{COMPOSE} and \emph{MERGE} (Stage~III), after the six seeded harnesses are scored into a shared executor cache (Algorithm~\ref{alg:method}; settings in Table~\ref{tab:method_settings}). All comparisons are paired verdicts $\hat{\delta}(H,H')$ over questions scored for both harnesses, with variance given by Eq.~(\ref{eq:app_paired_var}); on MomentSeeker they use the indicator $\mathrm{IoU}\geq0.5$, while the loop and final selection use mIoU.

\begin{algorithm}[t]
\caption{\method{} as implemented}
\label{alg:method}
\small
\begin{algorithmic}[1]
\Require Frozen VLM $M$, proposer $P$, validation set $\mathcal{S}_{\mathrm{val}}$, seed archive $\mathcal{D}_0$, vocabulary $\mathcal{V}$, budgets $r_1<r_2$
\Statex \textbf{Stage I: Warm-up and seed selection} \hfill (setup, prewarm, WARM)
\State Grow $\mathcal{D}^{\mathrm{warm}}\supseteq\mathcal{D}_0$ by rounds of three free, one removal, and one intensity candidate until the plateau rule holds
\State $\mathcal{B}\gets$ six best evolved harnesses in $\mathcal{D}^{\mathrm{warm}}$; $H_0\gets\arg\max_{H\in\mathcal{B}}V(H)$; compute the warm-up report $\{\mathcal{R}^{\mathrm{warm}}_{\alpha}\}$
\Statex \textbf{Stage II: Isolated branch evolution} \hfill (NOMINATE, PROBE, RACE)
\State $\mathcal{A}\gets$ ladder-screened templates of $\mathcal{V}$ and a free-form axis; $P$ writes charters $\{c_{\alpha}\}$
\ForAll{$\alpha\in\mathcal{A}$}
    \State Evolve $r_1$ candidates in an isolated branch that holds $\mathcal{B}$, with parents $H_0$ or earlier branch candidates; $m_{\alpha}\gets\max_{H\in\mathcal{D}^{\mathrm{probe}}_{\alpha}}\hat{\delta}(H,H_0)$
\EndFor
\State $\mathcal{A}_{\mathrm{race}}\gets$ the $k_{\mathrm{r}}$ axes with the largest $m_{\alpha}$, plus at most one tied axis
\ForAll{$\alpha\in\mathcal{A}_{\mathrm{race}}$}
    \State Continue the branch to $r_2$ candidates in total; estimate $(\bar{\delta}_{\alpha},\hat{s}_{\alpha})$ by Eq.~(\ref{eq:method_bootstrap})
\EndFor
\State $\mathcal{A}_{\mathrm{adm}}\gets$ axes with $\bar{\delta}_{\alpha}>1.64\,\hat{s}_{\alpha}$ and $\bar{\delta}_{\alpha}>0.5$, in decreasing order of $\bar{\delta}_{\alpha}$
\State $\mathcal{A}_{\mathrm{comp}}\gets$ at most three axes of $\mathcal{A}_{\mathrm{race}}$, admitted axes first, then by $\bar{\delta}_{\alpha}$
\Statex \textbf{Stage III: Factorial composition and joint refinement} \hfill (COMPOSE, MERGE)
\State $P$ fixes one modification $d_{\alpha}$ per $\alpha\in\mathcal{A}_{\mathrm{comp}}$ and builds every cell $H_{\mathcal{U}}$ (Eq.~(\ref{eq:method_composition}))
\State Score the cells; record effects, interactions (Eq.~(\ref{eq:method_interaction})), and the gate-eligible outcome
\State Refine jointly for 10 rounds of three candidates, starting from a ledger that holds every $H_{\mathcal{U}}$
\State \Return $H^{\star}\gets$ validation arg max of the refinement ledger
\end{algorithmic}
\end{algorithm}

\subsection{Information Isolation}
\label{app:method_isolation}

WARM, PROBE, RACE, and MERGE run the loop of \basemethod{} (Appendix~\ref{app:basemethod_loop}) in separate workspaces; NOMINATE and COMPOSE are single proposer calls. Each per-question trace holds opaque identifiers for pairing, the prediction and correctness, and counters of executor and tool calls, frames, rendering, personas, memory operations, and exercised mechanism families; it never contains question or option text, frames, or gold answers. $\mathbf{u}(H)$ in Eq.~(\ref{eq:method_warm_evidence}) collects per-question means of these counters.

A deny list enforced on all file, search, and shell tools confines each proposer call to its own workspace and run, and keeps datasets, caches, held-out outputs, and evaluation code out of reach. Warm-up evidence reaches a branch only through its charter and the pool members copied into it; the merge workspace regains read access to earlier workspaces only after branch effects are measured, and no phase before MERGE evaluates the held-out split. (In the LongVideoBench and OVO-Bench runs, a monitor also evaluated validation-front members on the held-out split during search; these results were unreadable by the proposer and entered no decision.) Isolation withholds content, not outcomes: because records pair predictions with correctness, a proposer can reconstruct the answer key for questions some harness answers correctly (one Video-MME run did so for 402 of 450), but harness code cannot use it, since the leak guard bars access to question identity.

\subsection{Stage I: Instrumented Warm-up}
\label{app:method_warmup}

\paragraph{Mechanism families.}
\method{} groups the harness primitives into seven families, the units to which removals, intensity changes, traces, and axis templates refer (Prompts~1--4): \texttt{retrieve} (frame selection and query-conditioned retrieval), \texttt{multiturn} (multi-pass reading and self-reflection), \texttt{roles} (multi-role collaboration), \texttt{perceive} (detection and localization tools), \texttt{render} (pixel re-presentation such as zoom, crop, and montage), \texttt{analyze} (shot, motion, and similarity signals), and \texttt{memory} (working memory). The search-space ladder of Appendix~\ref{app:search_space_analysis} instead counts the families chartered in the task description of \basemethod{}, which uses a different granularity (e.g., it separates retrieval from frame selection); its family counts are not in these units.

\paragraph{Slots and audit.}
Each round evaluates five candidates in three slots: Slot~$\mathsf{A}$ makes three unrestricted \emph{free} proposals; Slot~$\mathsf{B}$ makes one \emph{removal} that deletes the champion's most-used family not yet removed; and Slot~$\mathsf{C}$ makes one \emph{intensity} proposal that changes one family's usage by about a factor of two. Each candidate's evidence $\mathbf{w}(H)$ follows Eq.~(\ref{eq:method_warm_evidence}). Requests are audited from traces rather than trusted: a candidate is a removal if its family set is a strict subset of its parent's, an intensity change if the sets agree and exactly one nonzero coordinate of $\mathbf{u}$ moves by at least 25\%, and free otherwise. Candidates with empty predictions on more than 5\% of questions count as crashed.

\paragraph{Stopping and backbone.}
WARM stops once $t\geq10$ rounds, or $t\geq8$ with no improvement in the last four. The seed pool $\mathcal{B}$ holds the six best evolved warm-up candidates, preferably at most two per parent, and the backbone $H_0$ is the best of them.

\paragraph{Warm-up report.}
NOMINATE receives, per family: (i)~\emph{importance}, the mean verdict of audited removals against their parents; (ii)~\emph{dose response}, the Spearman correlation between usage and accuracy (observational, not used for ranking); (iii)~usage share and oracle-union gain; (iv)~a \emph{baseline ladder} of paired differences between seeded harnesses with nested family sets, which measures value from a zero floor; and the frame range and median paired standard error. Blocks (i)--(iii) form $\mathcal{R}^{\mathrm{warm}}_{\alpha}$. Evidence that cannot be computed is shown as missing rather than zero, which marks under-explored mechanisms without removing any axis. Averaged per removed family, removal contrasts measure each family's contribution; the champion's inflation $\omega$ shifts them all alike and so does not distort how families compare. Intensity evidence is observational, so the report uses it to flag directions, not to rank axes.

\subsection{Stage II: Nomination, Probe, and Race}
\label{app:method_branch}

\paragraph{Axes and branches.}
Six templates---\emph{retrieval}, \emph{frame budget}, \emph{read schedule}, \emph{multi-role}, \emph{pixels and detectors}, and \emph{derived state}---fix each axis's permitted families and an on-axis test every branch edit must pass (e.g., a frame-budget edit must change mean frames by at least 20\%, and other templates may not change them); one free-form axis has a proposer-written test. A template is pruned only if it touches no family of $H_0$ and all its ladder effects are negative, keeping at least four. The proposer charters each axis with a mechanism, its rationale, a first edit, a falsification criterion, and a predicted effect. Every branch holds the pool $\mathcal{B}$ and the seeded harnesses, but candidates must descend from $H_0$ or earlier branch candidates, so all branch verdicts share the backbone that the lattice requires; other pool members may only be mined for mechanisms.

\paragraph{Probe.}
Each axis produces $r_1=3$ candidates, and axes are ranked by their best probe verdict,
\begin{equation}
    m_{\alpha}=\max_{H\in\mathcal{D}^{\mathrm{probe}}_{\alpha}}
    \hat{\delta}\bigl(H,H_0\bigr),
    \label{eq:method_probe}
\end{equation}
where $\mathcal{D}^{\mathrm{probe}}_{\alpha}$ is the archive of axis $\alpha$ after the probe. The $k_{\mathrm{r}}=\min\bigl(k,\max(2,\lceil k/2\rceil)\bigr)$ of the $k$ axes with the largest $m_{\alpha}$ continue (plus at most one tie). Since every axis is probed equally often against the same backbone, the backbone's inflation $\omega$ and the maximum's upward bias shift all axes alike and leave the ranking intact. We use the maximum rather than the mean because mechanisms that need compounding edits score near zero at first, and no standard error because probe candidates form one lineage cluster.

\paragraph{Race and admission.}
Survivors continue to $r_2=15$ candidates. With $\delta_h=\hat{\delta}(h,H_0)$ grouped by declared parent into lineage clusters $C_{\alpha,1},\ldots,C_{\alpha,L_{\alpha}}$, the effect $\bar{\delta}_{\alpha}$ is the mean of $\delta_h$, and its standard error comes from 2{,}000 cluster-bootstrap replicates,
\begin{equation}
    \bar{\delta}^{(b)}_{\alpha}
    =\frac{\sum_{\ell=1}^{L_{\alpha}}\sum_{h\in C_{\alpha,J^{(b)}_{\ell}}}\delta_h}
          {\sum_{\ell=1}^{L_{\alpha}}\bigl|C_{\alpha,J^{(b)}_{\ell}}\bigr|},
    \qquad
    \hat{s}_{\alpha}=\operatorname{sd}_{b}\bigl(\bar{\delta}^{(b)}_{\alpha}\bigr),
    \label{eq:method_bootstrap}
\end{equation}
with each $J^{(b)}_{\ell}$ uniform on $\{1,\ldots,L_{\alpha}\}$, so that edits of one program do not count as independent evidence. An axis is \textbf{admitted} if $\bar{\delta}_{\alpha}>1.64\,\hat{s}_{\alpha}$ and $\bar{\delta}_{\alpha}>0.5$~pp; with fewer than three clusters it is recorded as unjudged. Because $\omega$ lowers every $\bar{\delta}_{\alpha}$, this test is conservative. The 1.64 multiplier is a one-sided normal reference, not a simultaneous guarantee. At most three axes enter composition, admitted axes first.

\subsection{Stage III: Composition and Joint Refinement}
\label{app:method_compose}

\paragraph{Factorial lattice.}
COMPOSE distills each composed axis $\alpha\in\mathcal{A}_{\mathrm{comp}}$ into one modification $d_{\alpha}$, the smallest edit of $H_0$ that carries the branch's mechanism, and writes all $2^{k_{\mathrm{c}}}-1$ nonempty cells of Eq.~(\ref{eq:method_composition}), with each $d_{\alpha}$ byte-identical across cells. With three axes, each modification's main effect contrasts the four cells that contain it with the four that do not. With $x_{\mathcal{U}}(q)$ the correctness of $H_{\mathcal{U}}$ on question $q$ ($x_{\emptyset}$ for $H_0$), each cell's effect is $\hat{\delta}(H_{\mathcal{U}},H_0)$, and for $|\mathcal{U}|\geq2$ the interaction is
\begin{equation}
    I_{\mathcal{U}}=\frac{100}{|Q|}\sum_{q\in Q}\psi_{\mathcal{U}}(q),
    \qquad
    \psi_{\mathcal{U}}(q)=\sum_{\mathcal{W}\subseteq\mathcal{U}}(-1)^{|\mathcal{U}|-|\mathcal{W}|}\,x_{\mathcal{W}}(q),
    \label{eq:method_interaction}
\end{equation}
over the questions $Q$ shared by all cells, with standard error $100\operatorname{sd}_{q}\psi_{\mathcal{U}}(q)/\sqrt{|Q|}$; computing it per question retains the dependence through $H_0$. The pre-registered gate selects among $H_0$ and the cells of successive prefixes of the admission order ($H_0$ if no axis is admitted).

\paragraph{Joint refinement and final selection.}
MERGE runs 10 rounds of three candidates from a ledger holding all cells, so the proposer may build on any of them. $H^{\star}$ is the validation arg max of this ledger. The seeded harnesses and the MERGE validation front are then evaluated on the held-out split.

\subsection{Budget and Configuration}
\label{app:method_config}

With $T$ warm-up rounds, $k$, $k'$, and $k_{\mathrm{c}}$ probed, raced, and composed axes, and $R=10$ MERGE rounds, a seed evaluates
\begin{equation}
    B_{\mathrm{total}}
    =5T+k\,r_1+k'(r_2-r_1)+\bigl(2^{k_{\mathrm{c}}}-1\bigr)+3R
    \label{eq:method_budget}
\end{equation}
candidates, excluding re-scored seeded harnesses. With $T\leq10$, $k\leq7$ (hence $k_{\mathrm{r}}\leq4$), $k'=k_{\mathrm{r}}$, and $k_{\mathrm{c}}\leq3$, this is at most $50+21+48+7+30=156$ candidates per seed. We report this as the budget bound; the tie slack of the probe cut-off can add one raced axis ($r_2-r_1=12$ candidates) only when two axes tie exactly at the cut-off. The three Video-MME runs evaluated 138--141.

\begin{table}[t]
\centering
\small
\caption{\textbf{Settings of \method{}.} The free-only warm-up control (Section~\ref{sec:exp:warmablation}) differs only in the warm-up slots (5\,/\,0\,/\,0). Here $k_{\mathrm{r}}=\min\bigl(k,\max(2,\lceil k/2\rceil)\bigr)$ for $k$ probed axes. The warm-up runs in chunks of two rounds and stops once the champion has not changed for four rounds, after at least 8 and at most 10 rounds.}
\label{tab:method_settings}
\setlength{\tabcolsep}{4pt}
\begin{tabular}{@{}lc@{\hspace{14pt}}lc@{}}
\toprule
Setting & Value & Setting & Value \\
\midrule
Warm-up slots A\,/\,B\,/\,C per round & 3\,/\,1\,/\,1 & Probe survivors (tie slack) & $k_{\mathrm{r}}$ (+1) \\
Warm-up chunk; min.\,/\,max.\ rounds; patience & 2; 8\,/\,10; 4 & Gate $z_0$\,/\,floor (pp)\,/\,min.\ clusters & 1.64\,/\,0.5\,/\,3 \\
Class-1\,/\,free-form templates; min.\ kept & 6\,/\,1; 4 & Composed axes (cells) & $\leq3$ ($\leq7$) \\
Branch candidates per round; $r_1$; $r_2$ & 3; 3; 15 & MERGE rounds $\times$ candidates & $10\times3$ \\
\bottomrule
\end{tabular}
\end{table}

\section{Ablation: Instrumented Warm-up}
\label{app:warmablation}

This section details the warm-up ablation of Section~\ref{sec:exp:warmablation}. Its conclusions are:
\begin{itemize}[leftmargin=*,itemsep=1pt,topsep=2pt]
    \item \textbf{Same validation, better held-out.} Both arms reach the same validation accuracy, yet every full-arm run beats every free-only run on held-out data, for the warm-up backbone ($+1.85$~pp) and the deployed system ($+2.8$~pp).
    \item \textbf{More controlled evidence.} The full arm yields $2.7\times$ as many removal contrasts and far fewer undefined family estimates.
    \item \textbf{Attribution.} A within-run replay attributes $28\%$ of the backbone gap to selecting structured candidates; the rest is consistent with slot results informing free proposals.
\end{itemize}

\subsection{Protocol}
\label{app:warmablation:protocol}

Both arms run the complete \method{} pipeline on Video-MME, three runs each. Per warm-up round, the \emph{full} arm proposes three free candidates (Slot~$\mathsf{A}$), one family removal (Slot~$\mathsf{B}$), and one intensity change (Slot~$\mathsf{C}$); the \emph{free-only} arm spends the same five evaluations on free proposals. Everything else, including seeds, splits, caches, isolation, stopping rule, and later stages, is shared. Because slots are instructed rather than enforced, we classify every warm-up candidate from its trace with the audit of Appendix~\ref{app:method_warmup}; counts below are behavioral. Arms are compared by exact one-sided permutation tests over the $\binom{6}{3}=20$ assignments of runs, whose smallest attainable $p$-value, $0.05$, requires complete separation; with three runs per arm, all results indicate direction rather than precise effect sizes.

\begin{table}[t]
\centering
\caption{\textbf{Per-run results of the warm-up ablation} (Video-MME). Slots: behavioral counts of warm-up candidates. Families with an importance estimate (Imp.) or an undefined coverage increment (Und.). Backbone: warm-up champion; deployed: validation argmax of joint refinement. Val, Test in \%; Fr.: frames per validation question. $^\dagger$Stopped after round~8 (plateau rule).}
\label{tab:app_warm_runs}
\footnotesize
\setlength{\tabcolsep}{3pt}
\begin{tabular*}{\linewidth}{@{\extracolsep{\fill}}l*{11}{c}@{}}
\toprule
& \multicolumn{3}{c}{Slots} & \multicolumn{2}{c}{Families} & \multicolumn{3}{c}{Backbone} & \multicolumn{3}{c}{Deployed} \\
\cmidrule(lr){2-4}\cmidrule(lr){5-6}\cmidrule(lr){7-9}\cmidrule(lr){10-12}
Run & A & B & C & Imp. & Und. & Val & Test & Fr. & Val & Test & Fr. \\
\midrule
Full-1 & 27 & 11 & 12 & 6 & 2 & 73.56 & 69.87 & 50 & 77.11 & 71.07 & 110 \\
Full-2 & 26 &  9 & 15 & 3 & 1 & 71.11 & 68.40 & 55 & 73.33 & 70.00 & 141 \\
Full-3 & 23 & 15 & 12 & 5 & 2 & 70.44 & 68.58 & 38 & 74.67 & 69.16 & 66 \\
Mean & 25.3 & 11.7 & 13.0 & 4.7 & 1.7 & 71.70 & 68.95 & 48 & 75.04 & 70.07 & 106 \\
\midrule
Free-1 & 38 &  2 & 10 & 2 & 4 & 73.11 & 66.58 & 44 & 74.00 & 66.31 & 63 \\
Free-2$^\dagger$ & 27 & 8 & 5 & 3 & 4 & 71.33 & 66.62 & 65 & 72.44 & 67.16 & 81 \\
Free-3 & 34 &  3 & 13 & 3 & 3 & 71.11 & 68.09 & 59 & 74.89 & 68.40 & 86 \\
Mean & 33.0 & 4.3 & 9.3 & 2.7 & 3.7 & 71.85 & 67.10 & 56 & 73.78 & 67.29 & 77 \\
\bottomrule
\end{tabular*}
\end{table}

\subsection{Results}

\paragraph{Warm-up.}
Table~\ref{tab:app_warm_runs} lists per-run results. The free-only arm climbs faster early (71.18 versus 69.26 after six rounds; Figure~\ref{fig:app_warm}a), but the full arm catches up in rounds 7--9, and the arms end at 71.70 and 71.85. In the full arm, 8 of 11 champion replacements are structured (3 removals, 5 intensity changes), against 5 of 13 in the free-only arm, and all three full-arm backbones are intensity changes with structured parents. Deliberate contrasts are also cleaner: frames move by more than 10\% in $23\%$ of the full arm's removals, versus $77\%$ of the removals that arise unrequested in the free-only arm.

\paragraph{Evidence.}
The full arm yields 11.7 versus 4.3 removal contrasts per run ($p=0.05$) and 13.0 versus 9.3 intensity contrasts ($p=0.15$), giving 14 importance estimates covering six of the seven families versus 8 covering five, and leaving 5 versus 11 of the 21 family--run coverage increments (seven families $\times$ three runs) undefined ($p=0.05$). Retrieval shows the difference most clearly: each full run removes it four times (mean $-7.15$~pp), whereas the free-only arm has a single retrieval-free candidate. Removals are not always frame-matched: the largest drop ($-23.11$~pp) also cut frames from 80 to 16.

\begin{figure}[t]
\centering
\includegraphics[width=0.92\linewidth]{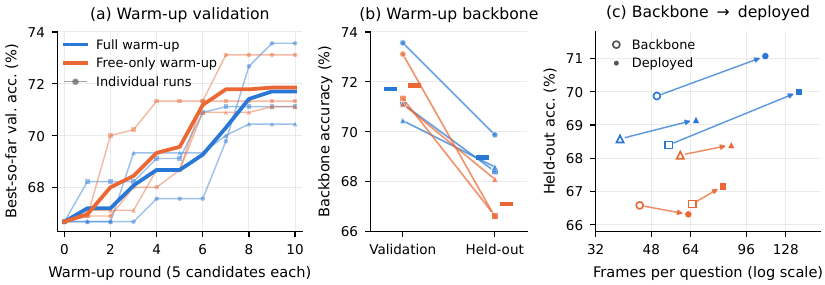}
\caption{\textbf{Warm-up ablation on Video-MME} (thin lines, markers: runs; thick lines, bars: arm means). (a)~Best-so-far validation accuracy (round~0: best seeded harness). (b)~Warm-up backbones. (c)~Held-out accuracy vs.\ frames per question, backbone (hollow) to deployed system (filled).}
\label{fig:app_warm}
\end{figure}

\paragraph{Held-out contrasts.}
\textbf{Every full-arm backbone and deployed system beats every free-only one on held-out data}: $68.95$ versus $67.10$ for backbones and $70.07$ versus $67.29$ for deployed systems (both $p=0.05$; Welch $t=2.73$ and $3.40$), while validation does not separate the arms ($p=0.65$ and $0.25$). Frames, which each harness chooses, are the main confound (Figure~\ref{fig:app_warm}c), but regressing held-out accuracy on arm and $\log_2$ frames leaves arm effects of $+2.0$ and $+2.2$~pp ($t=2.38$ and $2.55$, three residual degrees of freedom).

\paragraph{Attribution.}
Because all full-arm backbones are structured, part of the gap could come from selecting structured candidates. Replaying each full run's selection on its free candidates only yields substitutes within 0.67~pp of the actual backbones on validation but lower on held-out data in all three runs, so \textbf{selection accounts for $+0.52$~pp, or $28\%$ of the backbone gap}. The substitutes still average 1.33~pp above the free-only backbones. This remainder is consistent with slot results informing free proposals through the archive---8 of the full arm's 64 requested free proposals cite them, and the Full-3 backbone cites a removal result and a dose curve over retrieval calls---but three runs per arm cannot separate this channel from run-to-run variation (standard error about 0.7~pp).

\section{Held-out Questions on Unseen Videos}
\label{app:unseen_videos}

On MomentSeeker, the split assigns whole source videos to one side, so every held-out result concerns videos never seen during evolution. On Video-MME, LongVideoBench, and OVO-Bench, the split is over questions, so a held-out question may come from a video that also has validation questions: 702 of 2{,}250 held-out questions on Video-MME, 303 of 891 on LongVideoBench, and 622 of 979 on OVO-Bench. A harness could thus gain on such questions from properties of videos it was selected on. To check that the information-isolation and evolved-versus-AKS comparisons do not rest on such gains, we re-score them on the \emph{unseen-video} questions only, i.e., held-out questions whose video has no validation question (1{,}548 on Video-MME and 588 on LongVideoBench). Harnesses and predictions are those of the original evaluations; only the scored subset changes. Intervals are 95\% bootstrap intervals over videos (2{,}000 resamples).

\begin{table}[t]
\centering
\caption{\textbf{Held-out accuracy (\%) on all held-out questions and on unseen-video questions.} $\Delta$: difference on unseen-video questions, with its 95\% bootstrap interval over videos.}
\label{tab:unseen_videos}
\footnotesize
\setlength{\tabcolsep}{3pt}
\begin{tabular}{@{}llccc@{}}
\toprule
Comparison & Harness & All held-out & Unseen videos & $\Delta$ (unseen) \\
\midrule
\multirow{2}{*}{\shortstack[l]{Information isolation\\(LongVideoBench, Table~\ref{tab:isolation_ablation})}}
& \basemethod{} feedback (Multimodal) & 65.43 & 67.12 & \multirow{2}{*}{$+1.64$ \,$[+0.11, +3.28]$} \\
& \basemethod{} + isolation (Blindfolded) & 66.93 & 68.76 & \\
\midrule
\multirow{3}{*}{\shortstack[l]{Evolved vs.\ AKS\\(Video-MME, Figure~\ref{fig:initial_metaharness})}}
& Evolved \texttt{medium\_augspread} (38 frames) & 68.36 & 68.02 & -- \\
& AKS-BLIP (62 frames) & 68.40 & 68.73 & $-0.71$ \,$[-2.45, +1.16]$ \\
& AKS-CLIP (62 frames) & 68.09 & 68.22 & $-0.19$ \,$[-2.20, +1.74]$ \\
\bottomrule
\end{tabular}
\\[2pt]
{\footnotesize For AKS rows, $\Delta$ is the evolved harness minus that AKS variant.}
\end{table}

\paragraph{Information isolation.}
As shown in Table~\ref{tab:unseen_videos}, restricting the proposer to content-free feedback raises accuracy on unseen-video questions by $1.64$ points, with an interval that excludes zero. The effect of information isolation in Section~\ref{sec:exp:isolation} therefore holds on videos never seen during evolution.

\paragraph{Evolved versus hand-designed harnesses.}
On unseen-video questions, the evolved retrieval harness of Section~\ref{sec:metaharness_effectiveness} stays within one point of both AKS variants, and neither difference is significant, while it uses 38 rather than 62 frames per question. The conclusion that \basemethod{} evolves harnesses competitive with this hand-designed method therefore also holds on videos never seen during evolution.

\section{Failure-Mode Diagnostics of \method{}}
\label{app:method_diagnostics}

\begin{table}[t]
    \centering
    \caption{\textbf{Evidence per decision on Video-MME:} evaluations per configuration $r$, verdict noise $\sigma$, and power to detect a 1-point gain. The \basemethod{} row uses the first 57 candidates of each run.}
    \label{tab:method_diagnostics}
    \small
    \setlength{\tabcolsep}{6pt}
    \begin{tabular}{@{}llccc@{}}
        \toprule
        & Stage & $r\uparrow$ & $\sigma\downarrow$ & Power$\uparrow$ \\
        \midrule
        BRSI & Full harness & 1.55 & 1.55 & 0.13 \\
        \midrule
        \multirow{4}{*}{SRSI}
        & I: warm-up & 1.59 & 1.62 & 0.12 \\
        & II: branch & \textbf{9.80} & 1.61 & \textbf{0.50} \\
        & III: merge & 1.68 & \textbf{1.12} & 0.22 \\
        & Whole run & 2.31 & 1.48 & 0.18 \\
        \bottomrule
    \end{tabular}
\end{table}

\begin{table}[t]
    \centering
    \caption{\textbf{Early-experience trap.} Runs whose champion never exceeds $1.2\times$ the seeds' frame budget (16 frames; 32 on MS), three seeds each; \basemethod{} searches the full harness with Multimodal feedback. VM: Video-MME; OVO: OVO-Bench; MS: MomentSeeker.}
    \label{tab:app_stuck_runs}
    \small
    \begin{tabular}{@{}lccccc@{}}
        \toprule
        & VM & LVB & OVO & MS & Total \\
        \midrule
        \basemethod{} & 1 / 3 & 0 / 3 & 1 / 3 & 2 / 3 & 4 / 12 \\
        \method{}     & \textbf{0 / 3} & 0 / 3 & \textbf{0 / 3} & \textbf{0 / 3} & \textbf{0 / 12} \\
        \bottomrule
    \end{tabular}
\end{table}

We re-measure the three diagnostics of Section~\ref{sec:meta_harness_failure_modes} on the runs of Table~\ref{tab:main_results}.

\paragraph{Validation overfitting.}
\method{} narrows the validation--test gap on three of the four benchmarks (Table~\ref{tab:main_results}). On LongVideoBench, where the gap does not shrink, information isolation alone still raises the held-out accuracy of \basemethod{} from $65.4$ to $66.9\%$ at equal validation accuracy, close to the $67.3\%$ of \method{} (Section~\ref{sec:exp:isolation}), and this gain holds on held-out questions whose videos never appear in validation (Appendix~\ref{app:unseen_videos}).

\paragraph{Search-budget dilution.}
The probe--race concentrates the branch budget on a few axes, so each axis decision on Video-MME rests on $9.80$ evaluations instead of the $1.55$ of \basemethod{}, raising the power to detect a 1-point gain from $0.13$ to $0.50$ (Table~\ref{tab:method_diagnostics}). Stage~III, in turn, has the lowest verdict noise of any stage ($\sigma=1.12$).

\paragraph{Early-experience trap.}
Because the warm-up of \method{} collects removal and intensity contrasts across the mechanism families regardless of earlier verdicts, no \method{} run stays at the seeds' frame budget on any benchmark (Table~\ref{tab:app_stuck_runs}), whereas four of the twelve \basemethod{} runs do, on Video-MME, OVO-Bench, and MomentSeeker. An early, noisy loss can no longer close off a mechanism for the rest of the run.

\section{Proposer Prompts of \method{}}
\label{app:prompts}

This section reproduces the most important proposer prompts of \method{}, taken from a Video-MME run with the Qwen3-VL-8B executor and Claude Opus~5 as the proposer. Text in braces, such as \texttt{\{BACKBONE\}}, is filled in by the orchestrator at run time with that run's measurements; run constants that the prompts leave as placeholders are filled in with the values of Table~\ref{tab:method_settings}. We elide long passages with \texttt{[...]} and drop Markdown emphasis and non-ASCII symbols; everything else is verbatim. The prompts use internal names that map onto Section~\ref{sec:method} as follows: a \emph{family} is one of the seven mechanism families of Appendix~\ref{app:method_warmup}; the \emph{backbone} is $H_0$; \emph{WARM} is Stage~I, whose Slots~A/B/C are the free, removal, and intensity proposals; a \emph{class-1 slot} is one of the six fixed axis templates and a \emph{class-2} charter is the free-form axis $\alpha_{\mathrm{free}}$; \emph{PROBE} and \emph{RACE} are the $r_1=3$ and $r_2=15$ schedules; a branch's $d_i$ is its effect $\bar{\delta}_{\alpha}$; a \emph{cell} is a composition $H_{\mathcal{U}}$; and the \emph{gate} is the admission test of Appendix~\ref{app:method_branch}.

\paragraph{Iteration skill.}
Every evolution loop of \method{} (WARM, PROBE, RACE, and MERGE) runs the loop of Appendix~\ref{app:basemethod_loop}, whose proposer session loads the skill below. In PROBE and RACE, the branch's charter skill replaces the opening sections, and the interface, constraints, and workflow are appended verbatim, identical on every branch. The proposer sees only content-free traces (Section~\ref{sec:method_setup}); the leak guard at the end is enforced on every candidate before it is evaluated.

\begin{promptbox}[Prompt 1: Iteration skill shared by all evolution loops (excerpt)]
# Meta-Harness: Full Agentic Framework

Run ONE iteration of evolution on Video-MME. Do all work in the main session - do
NOT delegate to subagents (constraints get lost, and you end up shipping parameter
tweaks with no prototyping).

You do NOT run benchmarks. You analyze results + traces, prototype, and implement
new frameworks. The outer loop benchmarks them on the resident Qwen pool and runs
the selection layer.

## What is being evolved

A complete training-free agentic framework that takes a video + a multiple-choice
question and returns the FINAL answer letter. Everything is on the table at once:

| mechanism                                   | primitive                                        |
|---------------------------------------------|--------------------------------------------------|
| frame selection                             | uniform_indices, ctx.pool_indices(), ctx.read_frames |
| query-conditioned retrieval                 | ctx.retrieve(model, query, k)                    |
| perception tools                            | ctx.tools.detect / locate                        |
| cheap video signals                         | ctx.tools.shots / motion / similarity            |
| pixel re-presentation                       | ctx.tools.zoom / crop / montage + ctx.look(...)  |
| self-reflection & multi-turn correction     | repeated ctx.vlm_answer conditioned on earlier output |
| debate / multi-agent collaboration          | ctx.agent(role, prompt, indices=...)             |
| working-memory management                   | ctx.memory.note / digest / set_policy / forget   |

There is exactly ONE model doing inference: the frozen Qwen3-VL. "Multi-agent"
means several ROLES of that same model. You (the outer agent) write the framework
code; you never answer a question yourself, and neither does any other external
model. No finetuning, no weight updates, no new model downloads, no network.

Goal: maximize Video-MME accuracy. The cost axis is average frames summed over ALL
passes (tool calls and long generations cost too). A deeper or broader pipeline has
to EARN its cost: the frontier is accuracy vs frames, so machinery that buys nothing
is strictly worse than not having it.

## Interface - implement answer(ctx)
[... interface, primitives on ctx, and per-question budgets ...]

## CRITICAL CONSTRAINTS

- Tag each candidate's components with the mechanisms it uses. This is a ledger
  field for later analysis; nothing in the loop reads it back to you.
- You MUST implement 3 new frameworks every iteration, each with a base_strategy -
  the exact parent you built on, taken from evolution_summary.jsonl /
  frontier_val.json, or a baseline. Mix exploitation and exploration.
- Do NOT abort early, do NOT write "the frontier is optimal", do NOT skip
  prototyping.
- Anti-parameter-tuning: a new candidate must change a mechanism - what triggers an
  action, which evidence is gathered, how rounds/roles/memory/fusion work - not just
  num_frames, k, max_rounds or a threshold. If your candidate differs from its
  parent only by constants, REWRITE it.
- One frozen model only. All reasoning happens inside Qwen3-VL via ctx.vlm_answer /
  ctx.ask_vlm / ctx.agent / ctx.look. No external API, no second model, no
  answering by hand.
- Tool-cost discipline. [...]
- No dataset-specific hints. General patterns are fine ("keep frames where the
  queried object is detected"). Hardcoded knowledge about particular videos or
  questions is not.

### Leakage rules (enforced automatically - a violation REJECTS the candidate)

Every candidate is AST-scanned by leak_guard.py before it is benchmarked. Your
framework gets everything it needs through ctx, so:
- No filesystem, network or subprocess access - no open, pathlib, pandas, requests,
  subprocess, os.environ, no reading logs or the dataset.
- Imports must be on the allow-list [...]. No random (be deterministic).
- No dataset-identity handles: ctx.video_path, question_id, videoID, task_type,
  duration-label routing. (ctx.duration_sec - the number - is fine and useful.)
- No eval/exec/compile/__import__, no hashing question text, no large literal
  lookup tables.

Separately: you must never read the dataset parquet, the test split, other runs'
logs, or private .hstate/ state. The gold answer is not available to you or to
your framework and must not be sought.

## WORKFLOW

1. Analyze: read evolution_summary.jsonl, frontier_val.json, config.yaml, and recent
   val_log.jsonl traces. Look at where a framework flips right<->wrong, which
   families it actually used, and what it spent.
2. Prototype (MANDATORY): a throwaway script in .scratch/. With the stubbed VLM,
   verify the loop terminates, returns a valid letter, exercises the branch you
   intended, and survives the all-empty fallback path.
3. Implement: copy a baseline from agents/ to agents/<name>.py and build the new
   mechanism. Self-critique: is this a genuinely new mechanism, or a parameter
   variant? Then validate the import and run leak_guard.py on it.
4. Write pending_eval.json to the path in the task prompt.
\end{promptbox}

\paragraph{Stage I: warm-up slots.}
In the WARM workspace, the orchestrator appends the addendum below to the iteration skill, which turns each round into the three free, one removal, and one intensity candidates of Section~\ref{sec:method_warm}. The prompt only requests the slots; as stated in its last rule and in Appendix~\ref{app:method_warmup}, each candidate's slot is audited from its trace. The free-only control of Section~\ref{sec:exp:warmablation} receives an addendum that assigns all five candidates to Slot~A.

\begin{promptbox}[Prompt 2: Stage I warm-up slot addendum]
# WARM PHASE ADDENDUM - five candidates per iteration, in three slots

This warm phase is not only searching for a good framework. It is also
manufacturing the evidence that the next phase needs in order to charter branches.
Those are different jobs and they need different candidates.

| slot                            | count | job                          |
|---------------------------------|-------|------------------------------|
| A free exploration              | 3     | find a good backbone         |
| B one-family ablation           | 1     | manufacture a control group  |
| C one-family intensity sweep    | 1     | manufacture an intensity range |

Slot A keeps all three of the candidates a normal round would have had. B and C
are added on top, not carved out of A, so nothing about your search for a good
framework is reduced.

## slot A - free exploration (3 per iteration)

Whatever you think is most promising. No constraint beyond the SKILL above.

## slot B - one-family ablation of the current champion (1 per iteration)

Take the current best framework and produce a variant that removes exactly one
primitive family it uses, keeping everything else byte-for-byte. Pick the family
with the highest usage share that you have not already ablated.
Name it slotB_<family>_off (e.g. slotB_retrieve_off).

Why this slot exists: a family that every candidate uses has no control group, so
any with/without estimator on it returns "too few candidates to test" - and the
next phase is then structurally unable to say anything about it. That is a
property of the estimator, not of the family, and this slot is what fixes it.

You are not trying to make this candidate win. A large drop is the useful outcome:
it is the measurement.

## slot C - one-family intensity sweep (1 per iteration)

Take any framework in the archive and produce a variant that changes how hard one
family is worked - and nothing else. Halve or double the retrieval budget, run the
detector on 3 frames instead of 12, take 2 reflection passes instead of 1. Same
family set, same structure, one dial moved by at least ~2x.
Name it slotC_<family>_<up|down> (e.g. slotC_retrieve_up).

Why: a family that every candidate already uses cannot be measured by
presence/absence, but it can still be measured by dose. This slot is what creates
the intensity range that measurement needs.

## Rules for all five candidates

- All five are implemented and evaluated normally. B and C compete for the
  champion like anything else - they are simply unlikely to win, and that is fine.
- Do not spend the frame budget to make B or C look good. Frame count is a strong
  confound on this task: a B or C candidate that also moves the frame budget
  measures the budget, not the family.
- If the champion has only one family left to remove and you have already ablated
  it, use slot B for a second intensity sweep on a different family and say so in
  the hypothesis field.
- Put the slot letter in the strategy name. The orchestrator audits the slots
  afterwards from behaviour, not from the name (a candidate counts as an ablation
  if its family set is a strict subset of its parent's), so a mislabelled name is
  harmless - but a correct name makes the log readable.
\end{promptbox}

\paragraph{Stage II: nomination.}
NOMINATE is a single proposer call that receives the warm-up report (Appendix~\ref{app:method_warmup}) in place of the braced tables and writes one charter per axis: its mechanism, rationale, first edit, falsification criterion, and predicted effect (Section~\ref{sec:method_branches}). The axis templates it must fill are given in Prompt~4.

\begin{promptbox}[Prompt 3: Stage II NOMINATE (excerpt)]
# NOMINATE - fill the fixed axis slots, then add your own

The warm phase is over. Its job was not to find the best framework - it was to
produce a shared backbone and enough measured history to charter the branches that
come next.

Your task is not to write a framework, and it is not to decide which axes are
worth exploring. The axis set is largely decided for you. Your task is:
1. for every class-1 slot below, write the mechanism that slot will try on this
   run, and write that branch its own harness (SKILL.md);
2. add up to 1 class-2 charter of your own - a composite or mechanism this
   vocabulary has no slot for;
3. commit to a numeric prediction for each.

After you finish, every charter gets a short blind branch of 3 candidates. The
ones that survive elimination continue to 15, and only then is an effect size
with a standard error computed and gated.

## Why the slots are fixed, and why you cannot skip one

An earlier version of this phase let the proposer choose the axes freely. Its
recorded failure mode was reading a null union-ceiling increment as evidence of no
headroom and excluding that family - when a null there means only that every warm
candidate used the family, so there was no control group to measure it against.

increment: null means UNTESTABLE, never REFUTED. [...] a slot cannot be skipped,
so a family cannot be dropped for being unmeasurable.

You are not being told which axis is good. You are being told that the question of
which axis is good is settled by the PROBE and RACE phases that follow, on this
run's own measurements - not by an argument made here.

## What the warm run measured

Backbone: {BACKBONE} at {BACKBONE_VAL}% on {N_VAL} val questions.
Families it already uses: {BACKBONE_FAMILIES}.
Paired noise floor for a single candidate on this run: {PAIRED_SE}.

### Family share, and the union-ceiling increment
{FAMILY_TABLE}
### IMPORTANCE - what the ablation slots measured (removing the family from the champion)
{IMPORTANCE_TABLE}
### HEADROOM - dose-response among candidates that use the family
{HEADROOM_TABLE}
> It is observational - it is read off candidates that were free to change several
> families at once, so it is confounded by construction [...]. Do not rank axes on it.
### The free baseline ladder
{LADDER_TABLE}
> They measure importance from a zero floor, not headroom on top of your backbone.
> [...] treat the ladder rows for those families as uninformative.
### Frames
{FRAMES_NOTE}
> Frame budget is the strongest single confound on this task: an axis that raises
> accuracy by also raising frames has measured the budget, not the mechanism.
> Every class-1 template except frame_budget forbids moving it, and frame_budget
> exists so that the compute axis is priced openly instead of being smuggled into
> another one. Honour that in the mechanism you write.

## The class-1 slots you must fill

For each slot, families_allowed, candidate_test and forbidden are fixed and printed
in full below - copy them into your entry. What you actually write is the
mechanism: the specific thing this branch should try on this backbone, this run.
Read candidate_test before you write the mechanism. It is the definition of "on
axis" that every candidate on that branch is measured against, so a mechanism that
cannot satisfy it is a branch that measures nothing.

{VOCAB_TABLE}

## Output contract

Write logs/warm/nomination.json:
{"axes": [{
  "name": "retrieval_span_walk", "axis_class": 1, "vocabulary_id": "retrieval",
  "families_allowed": ["retrieve"],
  "candidate_test": "<copied VERBATIM from the vocabulary template>",
  "charter": "one paragraph: what this branch may change and what it may not",
  "mechanism": "the specific thing to try on THIS backbone, and why this one",
  "why_this_axis": "cite a number from the warm run above",
  "what_would_falsify_it": "what pattern over ~15 candidates would kill it",
  "first_move": "the exact first diff, small enough to implement in one iteration",
  "expected_effect_pp": 0.8,
  "frame_neutral": true }]}

Rules the contract check enforces - a violation costs you a retry:
- every class-1 slot listed above appears exactly once. You may not skip one.
- at most 1 entry with "axis_class": 2. [...] you write all fields - including your
  own candidate_test, since there is no template to take one from. Write it the way
  the class-1 tests above are written: what a diff must provably do for the branch
  to count it as on-axis.
- expected_effect_pp is a real prediction in [-5, +5], and they must not all be the
  same number. It is scored against the measured d_i afterwards.
- frame_neutral states whether the mechanism can raise accuracy without raising the
  frame budget.

And write one harness per axis at logs/warm/axis_skills/<axis_name>/SKILL.md [...].
Every SKILL is written by you in one session, but the branches are blind to each
other. A SKILL that names another branch voids that blindness and is rejected by
the contract check. Write each one as if it were the only branch.
\end{promptbox}

\paragraph{Stage II: axis templates.}
The six class-1 templates of Section~\ref{sec:method_branches} fill \texttt{\{VOCAB\_TABLE\}} in Prompt~3. Each fixes the families a branch may touch, the on-axis test its every edit must pass, and what it must not change; only the mechanism is left to the proposer. Only \texttt{frame\_budget} may change the number of frames, so that the frame budget, the strongest confound on these benchmarks, is measured on its own axis.

\begin{promptbox}[Prompt 4: The six fixed axis templates (class-1 slots)]
retrieval            families: retrieve
  WHICH frames the answering pass is shown.
  candidate_test: The diff MUST change how the indices handed to ctx.read_frames /
    ctx.vlm_answer(indices=...) are produced, on the path to the returned letter, so
    that reverting the diff provably changes which frames the model sees.
  forbidden: changing the total frame budget (num_frames) relative to the backbone.

frame_budget         families: retrieve
  HOW MANY frames, and how they are spread across a question.
  candidate_test: The diff MUST change the number of frames read per question (mean
    over val) by at least 20% relative to the parent, either globally or by a
    runtime rule, while keeping the parent's selection rule byte-for-byte.
  forbidden: changing which retriever, ranking or selection rule is used.

read_schedule        families: multiturn
  The pass graph AND the rule that turns pass outputs into a letter.
  candidate_test: The diff MUST change at least one of: (a) which ctx.vlm_answer /
    ctx.ask_vlm passes run or what they are conditioned on at runtime, or (b) the
    function from collected pass outputs to the returned letter (abstention,
    tie-breaks, thresholds, option elimination).
  forbidden: changing the total frame budget; changing which frames are selected.

multi_role           families: roles
  More than one answering persona, and how they are adjudicated.
  candidate_test: The diff MUST issue at least two passes with materially different
    role prompts and combine them by a rule that depends on their disagreement.
  forbidden: changing the total frame budget.

pixels_and_detectors families: perceive, render
  What image is built and shown, and the object geometry that decides it.
  candidate_test: The diff MUST either (a) hand ctx.look a pixel array built by a
    composition rule the parent does not have (different tiling/geometry, a crop or
    rescale, several moments joined, an extra plane), or (b) call ctx.tools.detect /
    locate and route a value from that call into the composition or the prompt. In
    both cases the frame list feeding it MUST be copied from the parent.
  forbidden: changing which frames are selected; changing the total frame budget.

derived_state        families: analyze, memory
  Quantities the framework computes about the video or about its own passes, and
  carries forward.
  candidate_test: The diff MUST either (a) compute a quantity from the VALUES
    returned by ctx.tools.shots / motion / similarity that the parent does not
    compute, or (b) contain a ctx.memory.note(...) whose text derives from an earlier
    pass's raw output together with a later prompt that embeds ctx.memory.digest(...)
    or ctx.memory.items(...). In both cases there MUST be a visible data path from
    that quantity into the frame indices or a prompt on the way to the returned
    letter.
  forbidden: changing the total frame budget.
\end{promptbox}

\paragraph{Stage III: composition.}
COMPOSE is a single proposer call after the race. It receives each raced axis's effect $\bar{\delta}_{\alpha}$ with its lineage-clustered standard error (Eq.~(\ref{eq:method_bootstrap})) and the gate's verdict, and it builds every cell of Eq.~(\ref{eq:method_composition}) from one fixed modification $d_{\alpha}$ per axis. The cells are then scored and seed the MERGE ledger (Appendix~\ref{app:method_compose}). The prompt's ``15 rounds'' refers to the $r_2=15$ candidates of each raced branch.

\begin{promptbox}[Prompt 5: Stage III COMPOSE (excerpt)]
# COMPOSE - build the whole lattice, ship only what the gate pre-registered

The {K} branches are finished. Each spent 15 rounds moving exactly one axis of the
same frozen backbone, blind to the others. This step turns them into one framework.

## What the measurement says

Backbone: {BACKBONE} (val {BACKBONE_VAL}), families {BACKBONE_FAMILIES}.
Per-axis mechanism estimate - the mean paired val delta of that branch's candidates
against the backbone, with a lineage-clustered bootstrap SE (same-parent
descendants count as one draw, because they are edits of each other and not
independent evidence):
{AXIS_TABLE}
Admission gate (pre-registered):
{GATE}

## The rule you are implementing

The gate does not decide what you BUILD. You build the whole lattice. The gate
decides what may later be SHIPPED, which is a separate step you are not performing.
Your job is to make every combination measurable, including the ones the gate
rejected - a rejected axis that turns out to combine well is exactly the thing this
design exists to catch, and a filter can never show it.

### Step 1 - fix one diff per axis, before you write anything

For each of the {K} axes, extract one diff: the smallest edit to the backbone that
carries that axis's mechanism, generalised across that branch's candidates if you
like, but attributable to that axis alone. Write these down first.

### Step 2 - build all {N_CELLS} cells
{LATTICE}
Every cell is backbone + the diffs of its axes, and the diff for an axis must be
byte-identical in every cell that contains it. This is not a style rule. The
lattice is read as a factorial design: I_ab = d_ab - d_a - d_b is only an
interaction if the a in cell_ab is the same a as in cell_a. If a diff has to change
to fit alongside another, that is not a merge - it is a rewrite, and the right move
is to shrink both diffs until they compose without editing each other, then say so.

Singleton cells are not optional. They are the main-effect cells and no interaction
is computable without them. Note that the effect of cell_a is not the same quantity
as axis a's d_i in the table above: d_i is the mean over ~15 branch candidates,
cell_a is the effect of the one diff you chose. The gap between them is real
information - if your chosen diff underperforms the branch mean, you picked the
wrong diff.

### Step 3 - if a cell cannot be built, say which and why

Two diffs may be genuinely incompatible (both replace the same seam). Do not paper
over it: write the cells you can, and in the hypothesis of the smallest affected
cell name the conflict precisely. [...]

## Constraints that are checked

- Complexity budget. Count if/elif branches and named numeric constants in the
  backbone and in each diff you graft. The merged candidate may not exceed
  backbone + sum(diffs). [...] declare the counts in your hypothesis so they are in
  the ledger.
- No rewriting the backbone. If a graft seems to require restructuring the
  backbone, that is evidence the two mechanisms are not composable at this seam:
  say so in the hypothesis, and graft the smaller version instead.
- Every candidate declares base_strategy = the backbone (for the first prefix) or
  the previous prefix.
- Read the on-axis / bespoke on-off axis / fired columns before the d column. [...]
  A branch with more bespoke code off its axis than on it spent its rounds
  somewhere other than its charter, so its d_i is not an estimate of the mechanism
  its name claims - treat that branch's diffs as unlabelled, and say so.

## What to write
[... manifest path and format; every cell lists the axes it grafts ...]
In each hypothesis, state: which axes this cell grafts, which branch candidate each
diff came from, the complexity counts, and what you expect the effect to be - so
that a smaller-than-predicted effect is visible as a failed prediction rather than
absorbed silently. For the multi-axis cells, also predict the sign of the
interaction: do you expect these mechanisms to add, to overlap (sub-additive, they
fix the same questions), or to fight (super-additive against you)?
\end{promptbox}

\end{document}